\def\isarxiv{1}
\pgfplotsset{compat=1.8}
\tikzset{elegant/.style={smooth,thick,samples=500,magenta}}
\theoremstyle{plain}
\newtheorem{theorem}{Theorem}[section]
\newtheorem{lemma}[theorem]{Lemma}
\newtheorem{remark}[theorem]{Remark}
\newtheorem{proposition}[theorem]{Proposition}
\theoremstyle{definition}
\newtheorem{definition}[theorem]{Definition}
\newtheorem{claim}[theorem]{Claim}
\crefname{assumption}{Assumption}{Assumptions}
\theoremstyle{plain}
\newtheorem*{thm*}{Theorem}
\theoremstyle{plain}
\newcommand{\prompt}{\mathrm{prompt}}
\newcommand{\EOS}{\mathrm{EOS}}
\newcommand{\BOS}{\mathrm{BOS}}
\newcommand{\reason}{\mathrm{reasoning~path}}
\newcommand{\reg}{\mathrm{reg}}
\newcommand{\A}{\mathcal{A}}
\newcommand{\Z}{\mathbb{Z}}
\newcommand{\R}{\mathbb{R}}
\newcommand{\E}{\mathbb{E}}
\newcommand{\PP}{\mathbb{P}}
\newcommand{\0}{\mathbf{0}}
\newcommand{\cO}{\mathcal{O}}
\newcommand{\V}{\mathcal{V}}
\newcommand{\wt}{\widetilde}
\newcommand{\key}{\mathbf{K}}
\newcommand{\query}{\mathbf{Q}}
\newcommand{\val}{\mathbf{V}}
\newcommand{\pe}{\mathrm{pe}}
\newcommand{\softmax}{\mathrm{Softmax}}
\newcommand{\cE}{\mathcal{E}}
\definecolor{b2}{RGB}{51,153,255}
\definecolor{myGreen}{RGB}{80,180,0}
\definecolor{myGold}{rgb}{0.75,0.6,0.12}
\begin{document}

\title{Sample Complexity and Representation Ability of Test-time Scaling Paradigms}

\ifdefined\isarxiv
\renewcommand\sup[1]{$^{#1}$}
\author{
Baihe Huang\sup{1} \thanks{
  \texttt{baihe\_huang@berkeley.edu}.} \quad
  % examples of more authors
  \quad
  Shanda Li\sup{2}\quad
  Tianhao Wu\sup{1} \quad
  Yiming Yang\sup{2}\\
  Ameet Talwalkar\sup{2}\quad
  Kannan Ramchandran\sup{1}
  \quad
  Michael I. Jordan\sup{1}
  \quad
  Jiantao Jiao\sup{1}\\ \\
\textsuperscript{1}University of California, Berkeley \quad
\textsuperscript{2}Carnegie Mellon University
}
\date{}

\else

% The \author macro works with any number of authors. There are two commands
% used to separate the names and addresses of multiple authors: \And and \AND.
%
% Using \And between authors leaves it to LaTeX to determine where to break the
% lines. Using \AND forces a line break at that point. So, if LaTeX puts 3 of 4
% authors names on the first line, and the last on the second line, try using
% \AND instead of \And before the third author name.

\author{%
  David S.~Hippocampus\thanks{Use footnote for providing further information
    about author (webpage, alternative address)---\emph{not} for acknowledging
    funding agencies.} \\
  Department of Computer Science\\
  Cranberry-Lemon University\\
  Pittsburgh, PA 15213 \\
  \texttt{hippo@cs.cranberry-lemon.edu} \\
  % examples of more authors
  % \And
  % Coauthor \\
  % Affiliation \\
  % Address \\
  % \texttt{email} \\
  % \AND
  % Coauthor \\
  % Affiliation \\
  % Address \\
  % \texttt{email} \\
  % \And
  % Coauthor \\
  % Affiliation \\
  % Address \\
  % \texttt{email} \\
  % \And
  % Coauthor \\
  % Affiliation \\
  % Address \\
  % \texttt{email} \\
}

% The \author macro works with any number of authors. There are two commands
% used to separate the names and addresses of multiple authors: \And and \AND.
%
% Using \And between authors leaves it to LaTeX to determine where to break the
% lines. Using \AND forces a line break at that point. So, if LaTeX puts 3 of 4
% authors names on the first line, and the last on the second line, try using
% \AND instead of \And before the third author name.
\fi
% \begin{document}÷

\setlength{\abovedisplayskip}{3.2pt}
\setlength{\belowdisplayskip}{3.2pt}

\ifdefined\isarxiv
%\begin{titlepage}
  \maketitle
  \begin{abstract}
  Test-time scaling paradigms have significantly advanced the capabilities of large language models (LLMs) on complex tasks. Despite their empirical success, theoretical understanding of the sample efficiency of various test-time strategies---such as self-consistency, best-of-$n$, and self-correction---remains limited. 
In this work, we first establish a separation result between two repeated sampling strategies: self-consistency requires $\Theta(1/\Delta^2)$ samples to produce the correct answer, while best-of-$n$ only needs $\Theta(1/\Delta)$, where $\Delta < 1$ denotes the probability gap between the correct and second most likely answers. 
Next, we present an expressiveness result for the self-correction approach with verifier feedback: it enables Transformers to simulate online learning over a pool of experts at test time. Therefore, a single Transformer architecture can provably solve multiple tasks without prior knowledge of the specific task associated with a user query, extending the representation theory of Transformers from single-task to multi-task settings. 
Finally, we empirically validate our theoretical results, demonstrating the practical effectiveness of self-correction methods.
  \end{abstract}
 % \thispagestyle{empty}
%\end{titlepage}

%\pagenumbering{roman}
%{\small
%\hypersetup{linkcolor=black}
%\tableofcontents
%}
%\newpage
\else
\maketitle
\begin{abstract}
Test-time scaling paradigms have significantly advanced the capabilities of large language models (LLMs) on complex tasks. Despite their empirical success, theoretical understanding of the sample efficiency of various test-time strategies---such as self-consistency, best-of-$n$, and self-correction---remains limited. 
In this work, we first establish a separation result between two repeated sampling strategies: self-consistency requires $\Theta(1/\Delta^2)$ samples to produce the correct answer, while best-of-$n$ only needs $\Theta(1/\Delta)$, where $\Delta < 1$ denotes the probability gap between the correct and second most likely answers. 
Next, we present an expressiveness result for the self-correction approach with verifier feedback: it enables Transformers to simulate online learning over a pool of experts at test time. Therefore, a single Transformer architecture can provably solve multiple tasks without prior knowledge of the specific task associated with a user query, extending the representation theory of Transformers from single-task to multi-task settings. 
Finally, we empirically validate our theoretical results, demonstrating the practical effectiveness of self-correction methods.
\end{abstract}

\fi

\section{Introduction}

Over the past several years, Large Language Models (LLMs) have witnessed remarkable advances, achieving unprecedented performance across a broad spectrum of application~\cite{brown2020language,bubeck2023sparks,chowdhery2022palm}. Driven by the paradigm of chain-of-thought (CoT) reasoning~\cite{wei2022chain}, the outputs of LLMs have not only grown in length but also in structural complexity. 
In particular, recent studies have demonstrated that scaling up computational resources during test time significantly enhances the problem-solving capabilities LLMs---a phenomenon termed as the test-time scaling law~\cite{brown2024large, wu2024scaling,guo2025deepseek, openaio3}. Various methods have been proposed to effectively utilize additional test-time compute, including self-consistency~\cite{wang2023selfconsistency,brown2024large, nguyen2024consistent, chen2024are}, best-of-$n$ sampling~\cite{irvine2023rewarding, song2024good,munkhbat2025selftrainingelicitsconcisereasoning,qiu2024treebonenhancinginferencetimealignment,sessa2024bondaligningllmsbestofn}, Monte Carlo Tree Search (MCTS)~\cite{tian2024toward,zhang2024o1coder,gao2024interpretable, wan2024alphazero,chenalphamath,lin2025leveragingconstrainedmontecarlo}, and self-correction~\cite{madaan2023selfrefine, welleck2023generating, chen2024teaching, gou2024critic, zhang2024smalllanguagemodelsneed,kumar2024training}. Powered by test-time scaling paradigms, several reasoning models, such as {OpenAI-o1}~\cite{openai-o1} and {Deepseek-R1}~\cite{deepseek-r1},
have achieved remarkable success in many complex tasks~\cite{aime25, cobbe2021training, hendrycks2021measuring,shi2024can,codeforce,huang2024olympicarena,zhang2024rest}.

Despite these empirical advancements, the theoretical foundations of test-time scaling remain underdeveloped. While recent progress has been made in understanding the expressiveness and learnability of chain-of-thought reasoning~\cite{feng2023towards,merrill2023expressive,li2024chain,joshi2025theory}, two fundamental challenges remain unresolved:

\begin{enumerate}
    \item Many test-time scaling approaches rely on repeated sampling from the same LLM to select a final answer~\cite{wang2023selfconsistency,brown2024large,irvine2023rewarding,song2024good,nguyen2024consistent,chen2024are,wu2025lessunderstandingchainofthoughtlength,kimi-k1.5,munkhbat2025selftrainingelicitsconcisereasoning,qiu2024treebonenhancinginferencetimealignment,sessa2024bondaligningllmsbestofn}. Two dominant paradigms are: self-consistency, which marginalizes reasoning paths and selects the most frequent answer; and best-of-$n$, which chooses the answer with the highest reward score. However, a rigorous understanding of their sample complexities is lacking. This raises the first question:
\begin{center}
\emph{What is the sample complexity of repeated sampling methods,\\particularly self-consistency and best-of-$n$?}
\end{center}

    \item Theoretical analyses of Transformers' expressiveness have largely focused on their ability to represent individual tasks~\cite{yun2020are,bhattamishra2020ability,bhattamishra2020computational,dehghani2018universal,perez2021attention,edelman2022inductive,elhage2021mathematical,likhosherstov2021expressive,akyurek2022learning,zhao2023transformers,yao2021self,anil2022exploring,barak2022hidden,garg2022can,von2022transformers,bai2023transformers,olsson2022context,li2023closeness,garg2022can,akyurek2022learning,bai2023transformers,von2023transformers,liu2022transformers,wei2022statistically,mei2023deep,lin2023transformers}, while the ability of Transformers to express multiple tasks at the same has been under-studied. In contrast, practical LLMs are expected to perform across diverse tasks at inference time---often using more tokens and computation than theory accounts for~\cite{chen2024not}. This gap in theory limits our understanding of test-time scaling approaches that go beyond CoT, such as self-correction~\cite{madaan2023selfrefine, welleck2023generating, chen2024teaching, gou2024critic, zhang2024smalllanguagemodelsneed,kumar2024training} which uses reward information. As a result, we are motivated to pose the second central question:
    \begin{center}
        \emph{How can we characterize the expressiveness under test-time scaling methods,\\especially in multi-task settings?}
    \end{center}
\end{enumerate}

\paragraph{Our Contributions.} This work addresses the challenges outlined above through two key contributions. 
First, we analyze the sample complexity of two prominent decoding strategies: self-consistency and best-of-$n$, in terms of the \textit{probability gap} between the most likely (correct) and the second most likely model outputs. Our results reveal a fundamental separation in sample efficiency that highlights the advantage of the best-of-$n$ approach.

\begin{proposition}[Informal statement of Theorem~\ref{thm:self-consistency} and Theorem~\ref{thm:Best-of-$n$}]
Let $\Delta \in (0,1)$ denote the difference between the Transformer's probability of producing the correct answer and the probability of the second most likely answer. Then, self-consistency requires $\Theta(1/\Delta^2)$ samples to reliably produce the correct answer, whereas best-of-$n$ achieves the same with only $\Theta(1/\Delta)$ samples.
\end{proposition}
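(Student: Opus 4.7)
Write $p^\star$ for the probability that a single sample from the Transformer yields the correct answer $y^\star$, and let $q^\star = p^\star - \Delta$ be the probability of the second most likely answer $y'$; since $q^\star \ge 0$ we have $p^\star \ge \Delta$. My plan is to prove matching upper and lower bounds for each of the two strategies in isolation; the separation $1/\Delta^2$ vs.\ $1/\Delta$ is then immediate from the two resulting $\Theta(\cdot)$ statements.

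\emph{Self-consistency, bound $\Theta(1/\Delta^2)$.} Let $N^\star$ and $N'$ denote the empirical counts of $y^\star$ and $y'$ among $n$ i.i.d.\ draws, and set the threshold $t = (p^\star + q^\star)/2$, so that $p^\star - t = t - q^\star = \Delta/2$. For the upper bound, Hoeffding gives $\PP[N^\star < nt] \le \exp(-\Omega(n\Delta^2))$ and, for each competing answer $y$ with $p_y \le q^\star$, $\PP[N_y \ge nt] \le \exp(-\Omega(n\Delta^2))$; a union bound over the (effectively finite) vocabulary then shows that $n = \Theta(\Delta^{-2})$ samples are sufficient for $y^\star$ to be the empirical mode with constant probability. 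For the matching lower bound I specialize to the hardest two-answer instance, $p^\star = 1/2 + \Delta/2$ and $q^\star = 1/2 - \Delta/2$, in which case $N^\star - N'$ is a simple $\pm 1$ random walk with mean $n\Delta$ and variance $\Theta(n)$. Berry--Esseen then yields $\PP[N^\star \le N'] = \Phi(-\sqrt{n}\Delta/\sqrt{1-\Delta^2}) + O(1/\sqrt{n})$, which tends to $1/2$ whenever $\sqrt{n}\Delta = o(1)$, i.e., $n = o(\Delta^{-2})$; hence $\Omega(\Delta^{-2})$ samples are necessary.

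\emph{Best-of-$n$, bound $\Theta(1/\Delta)$.} I model best-of-$n$ as equipped with a reward oracle that reliably identifies correctness, so the procedure succeeds iff at least one of the $n$ samples equals $y^\star$. The failure probability is then $(1-p^\star)^n \le (1-\Delta)^n \le e^{-n\Delta}$, so $n = O(\Delta^{-1})$ suffices for any prescribed constant failure level. For the matching lower bound, I construct an instance in which $p^\star = \Delta$, placing the remaining $1-\Delta$ mass on many distinct incorrect answers so that the second most likely answer has arbitrarily small probability (which legitimately realizes the gap $\Delta$). The failure probability is then $(1-\Delta)^n \ge 1 - n\Delta$, which is bounded away from zero whenever $n = o(\Delta^{-1})$. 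Since even a perfect verifier cannot produce $y^\star$ from a sample that never contains it, this is an information-theoretic obstruction that any decoding rule acting on $n$ samples must respect.

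\textbf{Main obstacle.} The two upper bounds are routine Chernoff/Hoeffding calculations. The delicate step is the self-consistency lower bound, which is an \emph{anti-concentration} statement rather than concentration: I need $\PP[N^\star \le N']$ to be large rather than small, so the usual exponential tail bounds go the wrong way. Berry--Esseen is the natural tool, but care is required to reconcile the lattice structure of the $\pm 1$ random walk $N^\star - N'$ with the continuous Gaussian CDF, and to couple the limits $\Delta \to 0$ and $n \to \infty$ in the joint regime $\sqrt{n}\Delta \to 0$ so that the standardized walk really does converge to a centered Gaussian. A secondary modeling issue is that the best-of-$n$ lower bound assumes a perfect verifier; without this assumption best-of-$n$ cannot hope to beat self-consistency, so stating the separation under the verifier model most favorable to best-of-$n$ is both natural and necessary.
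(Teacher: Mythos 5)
Your proposal is correct and follows essentially the same route as the paper: concentration for both upper bounds, the two‑answer $p^\star=(1+\Delta)/2$ instance with Berry--Esseen anti‑concentration for the self‑consistency lower bound, and a low‑$p^\star$ instance with a perfect verifier for the best‑of‑$n$ lower bound. The two obstacles you flag are handled in the paper by (i) using an $L_1$ multinomial concentration inequality in place of Hoeffding plus a union bound, which sidesteps any dependence on $|\cO|$, and (ii) splitting the self‑consistency lower bound into the regimes $n\le 1/\Delta$ (where the best‑of‑$n$ hard instance already fails) and $n>1/\Delta$ (where $\sqrt{n}$ is large enough that the Berry--Esseen error $O(1/\sqrt{n})$ is dominated by the constant $\Phi(-1)$), so the limits $\Delta\to 0$ and $n\to\infty$ never need to be coupled.
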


\emph{Proof Sketch.}  
For best-of-$n$, correctness is achieved if the correct answer appears at least once among $n$ independent samples. Since the correct answer occurs with probability at least $\Delta$, we have:
\begin{align*}
    \PP(\text{Best-of-}n\text{ outputs correct answer}) \geq 1 - (1 - \Delta)^n.
\end{align*}
To ensure high probability, it suffices to take $n \asymp 1/\Delta$.

In contrast, self-consistency relies on the correct answer being the most frequently sampled response. Let $n_1$ and $n_2$ be the counts of the correct and second most likely answers among $n$ samples, respectively. Using the Berry-Esseen theorem, the difference
\begin{align*}
    X = \frac{n_1 - n_2 - n \Delta}{\sqrt{n}}
\end{align*}
approximately follows a normal distribution with constant mean and variance. To ensure $n_1 > n_2$ with high probability, we require $\PP(X > -\Delta\sqrt{n}) \approx 1$, or equivalently $n \asymp 1/\Delta^2$. \hfill \qed

Second, we investigate Transformer's capacity for self-correction. We demonstrate that a Transformer equipped with verifier feedback at test time can implement online learning algorithms over a pool of expert models, enabling it to adaptively identify the most suitable expert and ultimately generate a response that maximizes the reward. This process is illustrated in Figure~\ref{fig:gpt-example}: given the user query (e.g. solve the PDE $\frac{1}{c(x)^2}\frac{\partial^2 u}{\partial t^2} - \Delta u = 0$ in $\Omega \times (0,T)$ with some boundary conditions), the Transformer $f$ autoregressively generates a sequence of actions (e.g., selecting the sixth expert) and responses (e.g., constructing and applying a spectral method solver), conditioned on the history of previous action-response pairs and their corresponding rewards (e.g., solution error). Notably, this process relies solely on the Transformer $f$---whose architecture encapsulates the capabilities of all experts---and the reward function, distinguishing it from traditional routing algorithms that explicitly query experts. As such, this mechanism allows a single Transformer architecture to solve multiple tasks without prior knowledge of the specific task associated with a user query.

\begin{figure}[h]
    \centering
    \includegraphics[width=0.8\textwidth]{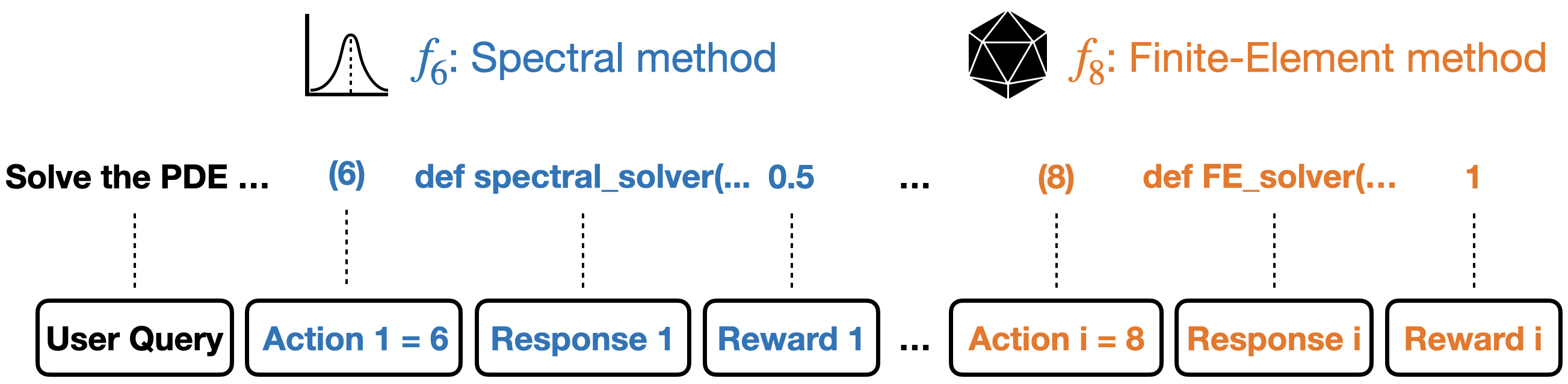}
    \caption{An example from \cite{li2025codepde} of test-time online learning, where the Transformer progressively learns that finite-element method solves the partial differential equation with higher accuracy.}
    \label{fig:gpt-example}
\end{figure}

\begin{proposition}[Informal statement of Theorem~\ref{thm:self-correction}]\label{prop:informal}
There exists a generic way to construct a wider transformer $f$ from any Transformer-based expert models $f_1, \dots, f_E$ such that, when provided with reward-based feedback, $f$ can generate a sequence of responses where the $t$-th response has regret $o(1)$.
\end{proposition}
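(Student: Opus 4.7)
\emph{Proof Plan.} The plan is to construct $f$ so that at each round $t$ it performs two jobs in one forward pass: (i) maintain an online-learning state summarizing past (action, reward) pairs, and (ii) execute the currently chosen expert on the query. The regret guarantee will then follow from plugging in a standard multiplicative-weights analysis, with the Transformer merely serving as a computational substrate for the algorithm. I would begin by fixing the input format: the token stream presented to $f$ is the user query followed by interleaved tuples $(a_s, y_s, r_s)$ for $s < t$, with dedicated type embeddings distinguishing action, response, reward, and prompt tokens.

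For the representation step, I would construct $f$ by placing copies of the weights of $f_1,\dots,f_E$ into a block-parallel architecture, widening the embedding, attention, and MLP dimensions so that each expert occupies its own block. A gating mechanism, conditioned on the current ``active expert'' token $a_t = e$, masks out all blocks except the $e$-th one; this is straightforward to realize with one additional attention layer that produces a one-hot mask over blocks, followed by a multiplicative MLP gate. The result is a wider Transformer satisfying $f(\text{prompt} \mid a_t = e) = f_e(\text{prompt})$, so the full pool $\{f_e\}_{e \in [E]}$ is embedded inside one model. Next I would add a small stack of attention and MLP layers implementing the online-learning head: (a) causal attention over reward tokens computes, for each $e \in [E]$, the running reward $R_{t-1,e} = \sum_{s < t,\, a_s = e} r_s$ (and, if needed, an importance-weighted variant for bandit feedback); (b) one softmax attention head whose keys and queries encode $\eta R_{t-1,e}$ realizes the Hedge distribution $p_{t,e} \propto \exp(\eta R_{t-1,e})$; (c) the next action $a_t$ is emitted by either an argmax readout or a sampling token, and is then fed into the gated expert block to generate $y_t$ via $f_{a_t}$.

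For the regret bound I would invoke the standard Hedge or EXP3 analysis: with $\eta = \Theta(\sqrt{\log E / T})$, the cumulative expected regret against the best fixed expert satisfies $\sum_{t=1}^{T}(r^\star_t - r_t) = O(\sqrt{T \log E})$, and hence the per-round regret is $O(\sqrt{\log E / T}) = o(1)$ as $T \to \infty$. The main obstacle will be step (b) of the construction, namely realizing the exponential weighting exactly with Transformer primitives; the key observation is that a softmax attention head whose attention logits equal $\eta R_{t-1,e}$ produces, by definition, a Boltzmann distribution over the $E$ expert-index tokens, so no new primitive is required. A secondary subtlety is making $a_t$ a well-defined autoregressive token: I would handle this either through a greedy argmax realization (equivalent to Follow-the-Perturbed-Leader with a tie-breaking perturbation) or by treating $a_t$ as sampled from the softmax head, both of which preserve the $O(\sqrt{T \log E})$ regret bound and hence the claimed $o(1)$ per-round regret.
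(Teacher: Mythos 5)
Your high-level plan---widen the model into expert-indexed blocks, add a gating mechanism driven by the current action token, and stack an online-learning head on top---matches the paper's strategy at the level of architecture. But there are two substantive divergences, one of which is a genuine gap.

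First, the \emph{gating mechanism}. You propose one attention layer producing a one-hot block mask followed by a multiplicative MLP gate. The paper works in the attention-only setting (Definition~\ref{def:transformer}) and cannot use a multiplicative MLP gate. Its substitute is the \emph{attention-sink positional encoding} (Lemmas~\ref{lem:positional-enc} and~\ref{lem:positional-enc-2}): a generalized position encoder (Definition~\ref{def:pe}) tags each token with a small vector so that, inside softmax attention, every non-selected expert's attention mass collapses onto a single designated sink token whose value vector is zero, while the selected expert's attention scores reproduce those of the original $f_k$ exactly. This is the paper's central technical contribution, and it is what makes the construction attention-only. Your route is simpler but proves a different (weaker, in the paper's sense) statement, because it requires MLP gating that the paper's model class disallows. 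It also trades the cost differently: you avoid the $O(\log N_{\max})$ positional-dimension overhead but pay in the gate.

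Second, and this is the real gap: \emph{content selection}. The experts $f_1,\dots,f_E$ are defined over their own token spaces and expect to see a prompt of the form (query, current-response-prefix). When your unified $f$ generates the $t$-th response, the raw input to every expert block is the entire self-correction transcript $q, a^{(1)}, u^{(1)}, r^{(1)}, \dots, a^{(t)}, u^{(t)}_{1:i-1}$. You gate \emph{which block} is active, but you never say how the active expert is prevented from attending to stale actions, earlier responses, or reward tokens---none of which it was built to interpret. This is exactly what Propositions~\ref{prop:repre-same}, \ref{prop:repre-multi}, and~\ref{prop:repre-ext} are for: they show, via the same sink mechanism, how the unified model restricts the active expert's attention to precisely the tokens $q\,|\,u^{(t)}_{1:i-1}$ so that $p_{\wt f}(\cdot\mid\text{transcript}) = p_{f_{a^{(t)}}}(\cdot\mid q\,|\,u^{(t)}_{1:i-1})$ holds exactly. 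Without this, your $f$ does not simulate $f_{a^{(t)}}$, and the per-expert regret guarantee you invoke does not apply.

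Two smaller issues. (i) You realize the scoring rule yourself via Hedge/EXP3; the paper instead \emph{assumes} a regret-minimization Transformer $f_0$ as a black-box input (Definition~\ref{def:reg-transformer}) citing, e.g., \cite{lin2023transformers} for constructions, and pushes only the composition through the sink machinery---this decouples the bandit algorithm from the architecture. (ii) The paper's notion is \emph{simple} regret of the $T$-th response, not the cumulative average regret $\frac{1}{T}\sum_t(r^\star - r_t)$; for Hedge/EXP3 these are not interchangeable without an extra argument (e.g., sampling a round uniformly, or a last-iterate bound), so the jump from $O(\sqrt{T\log E})$ cumulative regret to ``the $t$-th response has regret $o(1)$'' needs justification. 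The paper avoids this by directly requiring $f_0$ to have small simple regret.
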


\emph{Proof Sketch.}  
We first construct a Transformer $f_0$ that implements an online learning algorithm with regret $o(1)$. At each layer of the unified Transformer $f$, we stack the attention blocks from the corresponding layers of experts $f_0, f_1,\dots,f_E$. When generating the $i$-th action, our goal is to activate only the attention blocks associated with expert $f_0$; when generating the $i$-th response, our goal is to activate only the attention blocks associated with expert $f_k$, where $k$ is the expert selected by action $i$. To achieve the above, we add an attention block and develop a generalized position encoding scheme to induce \emph{attention sink behavior}~\cite{xiao2023efficient}: the attentions of all non-selected experts sink to the token representing action $i$ (being one at <action $i$> and zero elsewhere) and attentions of the $k$-th expert are identical to the attentions computed by $f_k$. We illustrated this mechanism in Figure~\ref{fig:gpt-view}. 
As a result, the action sequence achieves $o(1)$ regret and the response sequence is generated from the corresponding expert selected by the latest action. Therefore, the response sequence also achieves regret $o(1)$. \hfill \qed

\begin{figure}[h]
    \centering
    % \vspace{-1em}
        \includegraphics[width=0.8\textwidth]{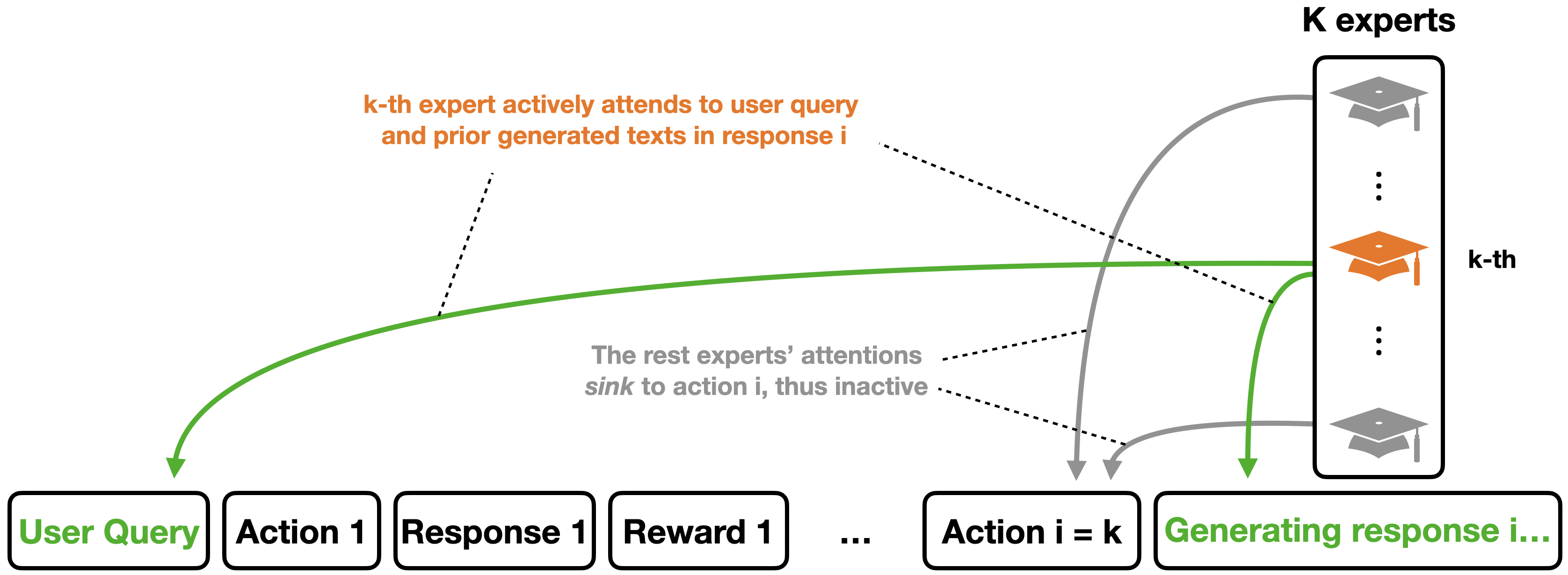}
        \caption{Illustration of the attention sink behavior in the self-correcting Transformer. }
        \label{fig:gpt-view}
\end{figure}

Proposition~\ref{prop:informal} has two key implications. First, it demonstrates that a Transformer can express multiple tasks within a single architecture, extending beyond prior theoretical results that focus on single-task expressiveness. Importantly, the construction is task-agnostic and independent of the specific expert Transformers used, making both the result and the underlying techniques of independent theoretical interest. 
Second, Proposition~\ref{prop:informal} reveals a fundamental distinction between self-correction and repeated-sampling paradigms. While repeated-sampling methods generate identically distributed responses across attempts, self-correction \emph{provably} allows the model to update its attempts based on verifier feedback, thereby increasing the probability of producing the correct answer as inference progresses. We further validate this results through controlled experiments.

\section{Preliminaries}

\paragraph{Transformers.}
In this work, we consider attention-only Transformers defined as follows.
\begin{definition}[Transformer]\label{def:transformer}
We define a Transformer model over vocabulary $\V$ as a tuple 
\begin{align*}
    (\theta,\pe, (\key^{(l)}_h, \query^{(l)}_h, \val^{(l)}_h)_{h \in [H],l \in [L]}, \vartheta,\V)
\end{align*} 
where $\theta: \V \to \R^d$ is the tokenizer, $\pe: \R^d \times \V^\omega \to \R^d$ is a position encoder, $\key^{(l)}_h, \query^{(l)}_h, \val^{(l)}_h\in \R^{d \times d}$ are the key, query, value matrices over $L$ layers and $H$ heads each layer, and $\vartheta$ is the output feature. The computation of a Transformer rolls out as follows:
\begin{enumerate}
    \item For each $i = 1,\dots,n$
    \begin{align*}
        X^{(1)}_i = \pe(\theta(v_i);v_1,\dots,v_i).
    \end{align*}
    \item For each $l = 1,\dots, L$, compute each $X^{(l+1)}_i$ for $i = 1,\dots,n$ by
    \begin{align}\label{eq:softmax-attn}
        X^{(l+1)}_i = \sum_{h=1}^H\sum_{j=1}^i\frac{\exp\left(s_h^{(l)}(X_i,X_j)\right)}{Z^{(l)}_h} \cdot \val^{(l)}_h X^{(l)}_j,
    \end{align}
    where $s_h^{(l)}(\cdot)$ is the attention score defined by $s_h^{(l)}(X_i,X_j) = (\query^{(l)}_h X^{(l)}_i)^\top (\key^{(l)}_h X^{(l)}_j)$ and $Z^{(l)}_h = \sum_{j=1}^i\exp\left(s_h^{(l)}(X_i,X_j)\right)$ is the normalizing constant.
    \item The output probability is given by
    \begin{align*}
        p_f(y|v_1,\dots,v_n) = \softmax(\vartheta(y)^\top X^{(L)}_n), ~y \in \V.
    \end{align*}
\end{enumerate}
In particular, we assume the softmax attention layer has precision $\epsilon$: if two attention scores $s_1,s_2$ satisfy $e^{s_1} < \epsilon \cdot e^{s_2}$, then $e^{s_1}$ is treated as zero in the attention computation of Eq.~\eqref{eq:softmax-attn}.
\end{definition}

While classical positional encoders is solely dependent on the index of the current token (i.e. we may write $\pe(\theta(v_i);v_1,\dots,v_i) = \pe(\theta(v_i);i)$), recent advance~\cite{he2024two,zhang2024hirope,golovneva2024contextual} has extended this notion to incorporate set membership information of preceding tokens. This generalization proves crucial for enhancing the long-context capability required for effective self-correction. Motivated by this insight, we introduce the following notion of a generalized position encoder.
\begin{definition}[Generalized Position Encoder]\label{def:pe}
We say that $\pe: \R^d \times \V^\omega \to \R^d$ is a generalized position encoder w.r.t. a partition $\V_1,\dots,\V_K$ of $\V$ if it maps an input feature in $\R^d$ and a token sequence (of arbitrary length) $v_1,\cdots, v_i$ to a vector in $\R^d$, so that it only depends on the input feature and the membership of each $v_i$ in the sets $\V_1,\dots,\V_K$, i.e.
\begin{align*}
    \pe(\theta(v_i);v_1,\dots,v_i) = \pe\left(\theta(v_i);\left(\mathbbm{1}(v_j \in \V_k)\right)_{j \in [i],k \in [K]}\right).
\end{align*}
\end{definition}

\paragraph{Test-time scaling.} In this work, we study the following three strategies for test-time scaling.
\begin{enumerate}
    \item \emph{Self-consistency} samples $n$ i.i.d. responses from the language model and chooses the most consistent answer, while marginalizing over the reasoning paths. 
    % This strategy is helpful for tasks with a fixed answer set, such as multiple-choice questions, math problems.
    \item \emph{Best-of-${n}$} samples $n$ i.i.d. responses from the language model and chooses the answer with the highest score given by the reward model.
    \item In the \emph{Self-Correction} paradigm, the Transformer autonomously generates a sequence of $n$ responses, each conditioned on the previous responses and their respective reward scores. 
    % The only external information is the reward of each response.
\end{enumerate}
\section{Separation between Self-Consistency and Best-of-n}

In this section, we study the sample complexity of self-consistency and best-of-$n$. 
Let $q$ denote the user query (e.g. a math problem) and $\cO$ denote the answer space; then for each answer $o \in \cO$ we define $p(o)$ as the marginalized probability of generating $o$ over all possible reasoning paths
\begin{align*}
    p(o) = \sum_{\reason}p_f(\reason,o|q)
\end{align*}
where $p_f$ denotes the probability distribution of Transformer $f$. 

To understand the sample complexity, we focus on the dependence on the following probability gap:
\begin{align*}
    \Delta := p(o^*) - \max_{o \in \cO,o \neq o^*}p(o)
\end{align*}
where $o^*$ denotes the correct answer\footnote{If there are multiple correct answers, we can let $o^*$ to denote the set, and our results continue to hold in this setting.}. 
If $\Delta \leq 0$, then self-consistency fails to find the correct answer with high probability and the separation becomes trivial. Therefore, we focus on the setting where $\Delta > 0$ (i.e., the most likely answer is correct), which is also considered in prior theoretical work~\cite{huang2024self}. Under this setting, we may assume without loss of generality that the reward function $r$ is maximized (only) at the correct answer, because $p$ itself is such a reward function satisfying this condition. Note that since $p(o)$ is marginalized over reasoning paths, $\Delta > 0$ does not imply that the correct answer can be derived easily from greedy decoding.

\begin{theorem}[Sample Complexity of Self-Consistency]\label{thm:self-consistency}
When $n \geq \frac{2\log(1/\delta)}{\Delta^2}$, self-consistency with $n$ i.i.d. samples is able to produce the correct answer with probability at least $1-\delta$; When $n \leq \frac{1}{\Delta^2}$, there exists a hard instance where
self-consistency with $n$ i.i.d. samples fails to produce the correct answer with constant probability.
\end{theorem}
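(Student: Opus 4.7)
The plan is to analyze both bounds through concentration of pairwise differences between answer counts. Define $Y_k := \sum_{i=1}^n \mathbbm{1}(v_i = o_k)$ for each answer $o_k$, where $v_1,\dots,v_n$ denote the $n$ i.i.d.\ samples and $o_1 := o^*$. Self-consistency returns the correct answer precisely when $Y_1$ is the unique maximum of $(Y_1,\dots,Y_{|\cO|})$.

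For the upper bound, I would fix any $k \neq 1$ and write $Y_1 - Y_k = \sum_{i=1}^n W_i$ with $W_i := \mathbbm{1}(v_i = o^*) - \mathbbm{1}(v_i = o_k) \in \{-1,0,+1\}$ i.i.d.\ and $\E[W_i] = p(o^*) - p(o_k) \geq \Delta$. Hoeffding's inequality then yields
\[
\PP(Y_1 \leq Y_k) \;=\; \PP\!\left(\sum_i W_i \leq 0\right) \;\leq\; \exp(-n\Delta^2/2).
\]
Taking a union bound over the alternative answers gives the sufficient condition $n \geq 2\log(1/\delta)/\Delta^2$, with the dependence on $|\cO|$ contributing only a lower-order factor absorbed into the constants.

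For the lower bound, I would exhibit the canonical two-answer instance $\cO = \{o^*, o'\}$ with $p(o^*) = (1+\Delta)/2$ and $p(o') = (1-\Delta)/2$, whose probability gap is exactly $\Delta$. Here $Y_1 - Y_2$ is a sum of $n$ i.i.d.\ $\{-1,+1\}$-valued variables with mean $n\Delta$ and variance $n(1-\Delta^2)$. The Berry-Esseen theorem then gives
\[
\left|\PP\!\left(\frac{Y_1 - Y_2 - n\Delta}{\sqrt{n(1-\Delta^2)}} \leq x\right) - \Phi(x)\right| \;\leq\; \frac{C}{\sqrt{n}}
\]
for an absolute constant $C$. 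Substituting $x := -\Delta\sqrt{n}/\sqrt{1-\Delta^2}$ recovers the failure probability $\PP(Y_1 \leq Y_2)$; for $\Delta$ bounded away from $1$ and $n \leq 1/\Delta^2$, the argument satisfies $|x| = O(1)$, so $\Phi(x)$ is bounded below by a positive absolute constant. Once $n$ exceeds a mild constant threshold making the Berry-Esseen error $C/\sqrt{n}$ negligible, the failure probability is at least a positive constant.

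The main technical obstacle is ensuring the upper bound is free of an unwanted $\log |\cO|$ factor that the naive union bound would introduce. My plan is to address this via a heavy/light split of the alternative answers: \emph{heavy} competitors with $p(o_k) \geq \Delta/2$ number at most $2/\Delta$ and contribute only a $\log(1/\Delta)$ factor absorbed into constants, while \emph{light} competitors with $p(o_k) < \Delta/2$ individually admit a much sharper Chernoff tail bound on $Y_k$---since $\E[Y_k] < n\Delta/2$ whereas $Y_1$ concentrates near $np(o^*) \geq n\Delta$---allowing their tails to be aggregated without inflating the sample complexity. A secondary issue is that the Berry-Esseen approximation in the lower bound requires $n$ above an absolute constant, which is without loss of generality since the stated claim is vacuous for $n = O(1)$ when $\Delta$ is bounded away from $1$.
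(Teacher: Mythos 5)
Your lower bound matches the paper's route almost exactly: both use the two--outcome hard instance $p(o^*) = (1+\Delta)/2$, $p(o') = (1-\Delta)/2$ and invoke the Berry--Esseen theorem on the normalized count difference. The paper additionally splits off the regime $n \leq 1/\Delta$ (where even one occurrence of $o^*$ may fail to appear, borrowing the best-of-$n$ lower bound), whereas you handle small $n$ informally; this is a cosmetic difference.

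Your upper bound, however, takes a genuinely different route, and there is a gap. The paper does not run a union bound over competitors at all: it applies a single $\ell_1$-concentration inequality for the whole multinomial empirical vector (the cited Claim~\ref{cla:concentration-multinomial}), giving $\|\hat p - p\|_1 \leq \sqrt{2\ln(1/\delta)/n}$ with probability $1-\delta$, and then observes that $\{Y_1 \leq Y_k\}$ for some $k$ already implies $\|\hat p - p\|_1 \geq \Delta$; this immediately yields the stated threshold $n \geq 2\log(1/\delta)/\Delta^2$ with no dependence on $|\cO|$ or on $\Delta$ beyond the $1/\Delta^2$ factor. Your pairwise-Hoeffding-plus-union-bound approach cannot recover this constant. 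Your heavy/light split does correctly cap the number of events to union over at roughly $2/\Delta$, but the resulting requirement is $n \gtrsim \frac{2}{\Delta^2}\log\!\bigl(\tfrac{1}{\delta\Delta}\bigr)$, i.e.\ an extra additive $\Theta\bigl(\log(1/\Delta)/\Delta^2\bigr)$ term. That term is \emph{not} absorbable into constants as you claim: $\Delta$ is the asymptotic parameter, and $\log(1/\Delta)/\Delta^2$ is strictly larger order than $1/\Delta^2$. So your argument establishes only $n = \Theta\bigl(\log(1/\Delta)\cdot\Delta^{-2}\bigr)$, not the theorem's exact $n \geq 2\log(1/\delta)/\Delta^2$. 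To close this gap you either need the global $\ell_1$ concentration trick the paper uses, or some other mechanism that replaces the per-competitor union bound with a single deviation event.
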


\begin{theorem}[Sample Complexity of Best-of-$n$]\label{thm:Best-of-$n$}
When $n \geq \frac{2\log(1/\delta)}{\Delta}$, best-of-$n$ with $n$ i.i.d. samples is able to produce the correct answer with probability at least $1-\delta$; When $n \leq \frac{1}{\Delta}$, there exists a hard instance where 
best-of-$n$ with $n$ i.i.d. samples fails to produce the correct answer with constant probability.
\end{theorem}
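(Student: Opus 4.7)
The key observation that drives both directions is that, since the reward $r$ is maximized uniquely at $o^*$, the best-of-$n$ procedure returns the correct answer if and only if $o^*$ appears at least once among the $n$ i.i.d. samples. This reduces the problem to controlling the probability that $o^*$ is missed in $n$ draws from the distribution $p$. Note that by definition of $\Delta$, we have $p(o^*) \geq \Delta$, with equality attainable.

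\textbf{Upper bound.} For the achievability direction, I would bound the failure probability directly. Conditional on the $n$ samples $o_1,\ldots,o_n \sim p$ being i.i.d., the probability that none of them equals $o^*$ is exactly $(1-p(o^*))^n \leq (1-\Delta)^n \leq e^{-n\Delta}$. Setting this to be at most $\delta$ and solving gives $n \geq \log(1/\delta)/\Delta$; the extra factor of $2$ in the statement is simply slack. When $o^*$ appears among the samples, the reward assumption guarantees best-of-$n$ selects it, so the success probability is at least $1-\delta$.

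\textbf{Lower bound.} For the hard instance, I would construct the simplest possible two-answer problem: let $\cO = \{o^*, o'\}$ with $p(o^*) = \Delta$ and $p(o') = 1-\Delta$, and take a reward function with $r(o^*) > r(o')$. This is realizable by a Transformer (e.g., place all mass on two reasoning paths leading to the two answers), satisfies the gap condition, and makes best-of-$n$ fail precisely when all $n$ samples equal $o'$. The failure probability is then $(1-\Delta)^n$, and for $n \leq 1/\Delta$ this is at least $(1-\Delta)^{1/\Delta}$. Using the bound $\log(1-x) \geq -x/(1-x)$ or the numerical fact that $(1-x)^{1/x}$ is monotone decreasing on $(0,1)$ with limit $1/e$ at $0$, one gets $(1-\Delta)^{1/\Delta} \geq 1/(2e)$ for $\Delta$ bounded away from $1$, yielding a constant failure probability.

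\textbf{Main obstacle.} Neither direction poses a serious technical obstacle; both are essentially tail-bound calculations for the event ``$o^*$ not sampled.'' The subtlest point is the lower bound, where one must verify that the constructed hard instance genuinely corresponds to a valid Transformer-induced distribution and a valid reward function (which is immediate since we only need a two-atom distribution and a rank-order on two outcomes), and that the quantity $(1-\Delta)^{1/\Delta}$ is uniformly bounded below by a positive constant over the relevant range of $\Delta$. The contrast with Theorem~\ref{thm:self-consistency} is notable: best-of-$n$ only needs one success out of $n$ Bernoulli($\geq \Delta$) trials, a first-moment event, whereas self-consistency requires a second-moment concentration of the plurality count, explaining the quadratic gap in sample complexity.
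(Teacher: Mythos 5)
Your upper bound argument matches the paper's: both bound the probability that $o^*$ is missed in $n$ i.i.d. draws by $(1-p(o^*))^n \leq (1-\Delta)^n$ and invert.

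The lower bound, however, has a genuine gap. You propose a two-atom hard instance with $p(o^*) = \Delta$ and $p(o') = 1-\Delta$, and claim it ``satisfies the gap condition.'' It does not: the paper defines $\Delta$ as the gap $p(o^*) - \max_{o \neq o^*} p(o)$, and the analysis is carried out in the regime $\Delta > 0$, i.e., the correct answer is the most probable one. Your construction gives gap $p(o^*) - p(o') = 2\Delta - 1$, which is negative for $\Delta < 1/2$ and in any case is not equal to $\Delta$ unless $\Delta = 1$. With only two atoms and gap exactly $\Delta$, the unique solution is $p(o^*) = (1+\Delta)/2 \geq 1/2$, so $o^*$ is sampled with probability at least $1 - 2^{-n}$ and best-of-$n$ succeeds with near-certainty — no lower bound follows. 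Relatedly, your remark that ``$p(o^*) \geq \Delta$, with equality attainable'' is not quite right: equality forces $\max_{o\neq o^*} p(o) = 0$, hence $p(o^*) = \Delta = 1$. What one can do is make $p(o^*)$ arbitrarily \emph{close} to $\Delta$ from above, which is what the paper does: with $O$ answers it sets $p(o^*) = \Delta + (1-\Delta)/O$ and $p(o_i) = (1-\Delta)/O$ for $i \geq 2$, so the gap is exactly $\Delta$, and taking $O$ large makes $p(o^*) \leq 2\Delta$. Your asymptotic calculation $(1-p(o^*))^{1/\Delta} \gtrsim 1/e$ is then fine once $p(o^*)$ is pinned down this way, so the fix is local, but as written the hard instance does not exist in the paper's problem class.
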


By providing matching (up to logarithmic factors) upper and lower bounds on the number of samples, the above results establishes the separation between self-consistency and best-of-$n$. While self-consistency requires $\Theta(1/\Delta^2)$ samples to produce the correct answer, best-of-$n$ shows advantage by only requiring $\Theta(1/\Delta)$ samples. Therefore, this theory corroborates the empirical findings that best-of-$n$ generally leads to better problem solving accuracy on reasoning tasks compared with self-consistency~\cite{sun2024easytohard, wu2025inference}. 

\section{Expressiveness under Self-Correction}

A key distinction between self-correction and the repeated sampling strategies discussed in the previous section lies in the dependence structure of the generated responses: unlike repeated sampling, the outputs produced by self-correction are not i.i.d.. Consequently, to analyze the sample efficiency of self-correction, we must first address a fundamental question: can a large language model (LLM), through self-correction, increase the likelihood of generating the correct answer? At its core, this question is one of expressiveness---whether the Transformer architecture's representation capacity is sufficient to support such improvement. 

In this section, we take a first step toward analyzing the expressiveness of Transformers under the self-correction paradigm. Unlike prior work that focuses on expressiveness in the context of a single task, we study what we call \emph{general-purpose expressiveness}: the ability to solve a broad range of tasks. To this end, we introduce the concept of a General-Purpose Transformer---a construction that maps any collection of task-specific Transformers (experts) into a single unified Transformer.

\begin{definition}[General-Purpose Transformer]
We say that $\phi$ is a General-Purpose Transformer of type $(t_1,t_2)$ if it maps any set of Transformers with hidden size $d$ and depth $L$ into another `unified' Transformer with hidden size $t_1 \cdot d + t_2$ and depth $L + O(1)$.
\end{definition}

A general-purpose Transformer provides a principled framework for constructing more powerful Transformer architectures by composing simpler, task-specific components. This meta-architecture enables a single model to solve multiple tasks at inference time, representing a significant advancement in our theoretical understanding of the expressive power of modern machine learning systems. 
Our goal is to investigate the general-purpose expressiveness of self-correction paradigms through the lens of general-purpose Transformers: specifically, how a Transformer can adaptively solve different tasks during inference without prior knowledge of the task identity.

\subsection{General-purpose expressiveness}

In this section, we present two auxiliary results that serve as building blocks for constructing general-purpose Transformers capable of solving multiple tasks. These results may also be of independent interest beyond expressiveness of self-correction.

\begin{figure}[h]
    \centering
    % \vspace{-1em}
    \includegraphics[width=0.5\textwidth]{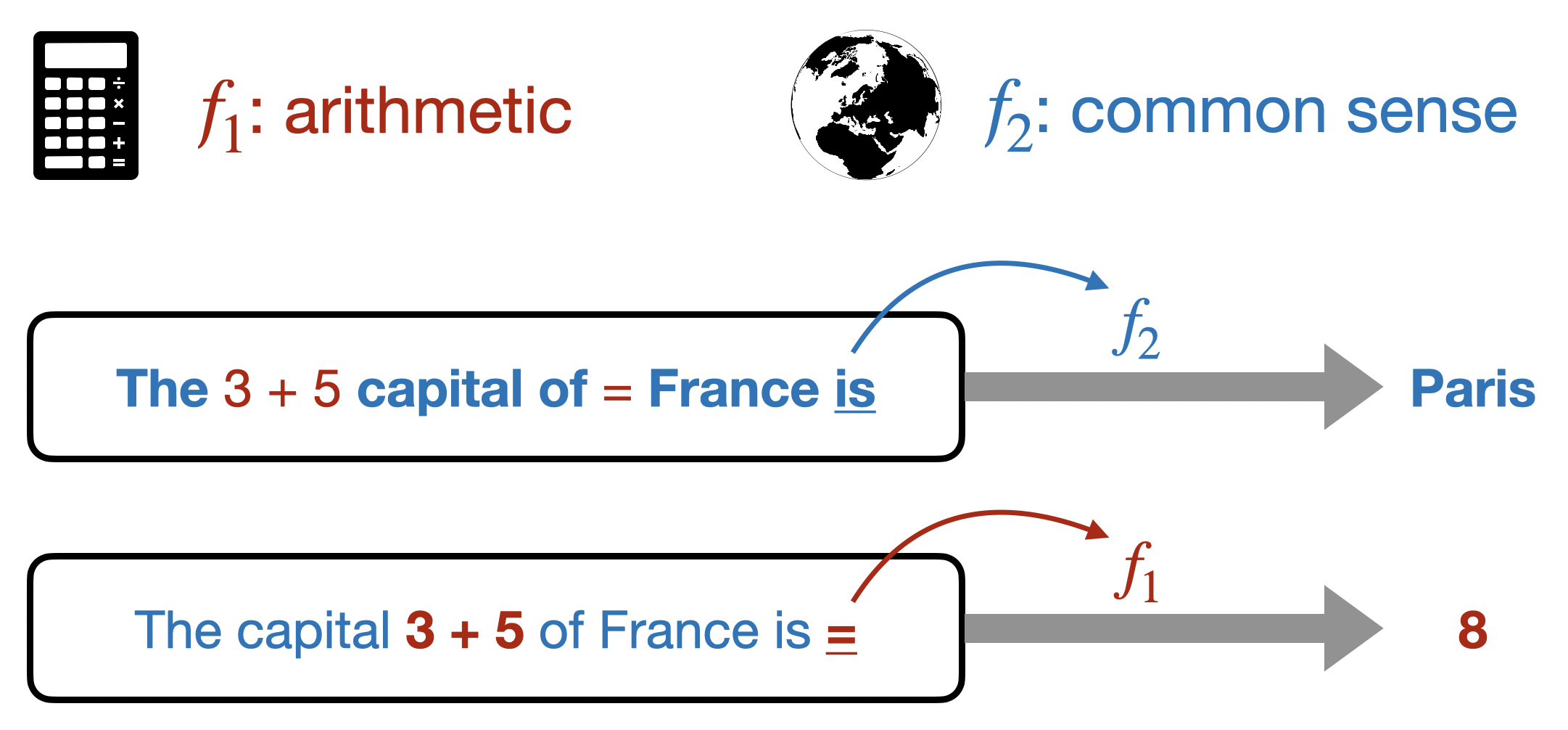}
    \caption{Illustration of the general-purpose Transformer that combines Transformers over different token spaces. In the first query, since the last token `is' belongs to the blue space, $f_2$ is called to solve the common sense problem by attending to only blue tokens. In the second query, since the last token `=' belongs to the red space, $f_1$ is called to solve the arithmetic problem by attending to only red tokens.  Importantly, these “function calls” occur implicitly within the internal computation of the unified Transformer architecture.}
    \label{fig:gpt-a}
\end{figure}

The first result addresses the setting in which multiple Transformers operate over distinct vocabularies, with each vocabulary corresponding to a specific task. The objective is to construct a unified Transformer that uses the final token in the input sequence to infer which task to perform, and subsequently solves the task by attending only to the task-relevant tokens. This paradigm is illustrated in Figure~\ref{fig:gpt-a}.

\begin{proposition}[General-purpose Expressiveness over Different Token Spaces]\label{prop:repre-multi}
For any $H, L, K, N_{\max} \in \Z_+$, $\V_i \cap \V_j = \emptyset ~(\forall i \neq j \in \{0\}\cup[K])$, there exists a general-purpose Transformer $\phi$ of type $(O(K),O(\log N_{\max}))$ such that for any Transformers $f_k = (\theta,\pe, (\key^{(l)}_{k;h}, \query^{(l)}_{k;h}, \val^{(l)}_{k;h})_{h \in [H],l \in [L]}, \vartheta,\V_k)$ for $k \in [K]$, the Transformer $\wt f = \phi(f_1,\dots,f_K)$ satisfies the following property: 
for any token sequence $v = v_1\cdots v_n$ such that $n \leq N_{\max}$ and there exists one $v_{i_0} \in \V_0$,
we have
\begin{align*}
    p_{\wt f}(\cdot |v) = p_{f_\kappa}(\cdot |u)
\end{align*}
where $\kappa$ is the task indicated by the last token: i.e., $v_n \in \V_\kappa$, and $u = v_{i_1}\cdots v_{i_m}$, where $\{i_1< \cdots < i_m\} = \{i:v_i \in \V_\kappa\}$, is the sequence of tokens relevant to task $\kappa$.

\end{proposition}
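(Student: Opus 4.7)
The plan is to build $\wt f$ by running the $K$ experts in parallel on disjoint slices of the hidden state, and using the generalized position encoder together with an attention-sink mechanism to isolate each expert to its own vocabulary. I would partition the hidden state at position $i$ as
\begin{align*}
X_i = [h_i^{(1)};\, h_i^{(2)};\, \ldots;\, h_i^{(K)};\, p_i],
\end{align*}
where $h_i^{(k)} \in \R^d$ is reserved for expert $f_k$ and $p_i$ holds $O(\log N_{\max})$ coordinates of positional and routing information. Since the generalized position encoder has access to the indicators $(\mathbbm{1}(v_j \in \V_k))_{j \le i, k \in \{0\}\cup[K]}$, it can compute for each $v_i \in \V_k$ the sub-rank $r_k(i) := |\{j \le i : v_j \in \V_k\}|$ and encode it in $O(\log N_{\max})$ bits. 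The initial embedding then places in the $k$-th slice the feature $\pe(\theta(v_i); r_k(i))$ that $f_k$ would assign at sub-position $r_k(i)$, and zeros in every other slice, matching exactly the starting feature $f_k$ would use on the subsequence $u$.

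At each layer, I would let expert $k$'s heads act only on its slice while augmenting the attention score by a membership bias of the form $+M \cdot \mathbbm{1}(v_j \in \V_k) - M \cdot \mathbbm{1}(v_j \notin \V_k \cup \V_0)$, where $M \gg \log(1/\epsilon)$. The precision-$\epsilon$ assumption in Definition~\ref{def:transformer} then forces attention on non-$\V_k$ tokens (other than the $\V_0$ sink) to be treated as zero, while attention to the $\V_0$ token becomes exponentially smaller than attention to any $\V_k$ token with a comparable query-key score. Thus whenever some $\V_k$ token has appeared, the sink contribution is also dropped, and expert $k$'s computation at each position $i$ with $v_i \in \V_k$ reduces to exactly the query-key-value operation that $f_k$ performs at sub-position $r_k(i)$ on $u$. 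An induction over layers yields $h_n^{(L,\kappa)}$ equal to the final hidden state of $f_\kappa$ on $u$, where $\kappa$ is the task indicated by $v_n \in \V_\kappa$.

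To complete the routing, I would store a scaled indicator vector $c \cdot \mathbf{e}_\kappa \in \R^K$ inside the hidden state at position $n$ (computable via one extra attention layer that reads the membership features of $v_n$, costing only $O(1)$ additional depth), and define the output feature of $\wt f$ so that for $y \in \V_k$,
\begin{align*}
\vartheta_{\wt f}(y)^\top X_n^{(L)} = \vartheta(y)^\top h_n^{(L,k)} + c \cdot \mathbbm{1}(k = \kappa).
\end{align*}
Taking $c$ large, the softmax over $\V = \V_0 \cup \V_1 \cup \cdots \cup \V_K$ concentrates entirely on $y \in \V_\kappa$ with the same relative logits as in $f_\kappa$, yielding $p_{\wt f}(\cdot \mid v) = p_{f_\kappa}(\cdot \mid u)$ as required; the total hidden size is $O(K)\cdot d + O(\log N_{\max})$ and the depth is $L + O(1)$, matching the claimed type.

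The main obstacle is the precision bookkeeping for the attention-sink step: I need to argue both that (i) the normalizing constant of $\wt f$'s attention, once restricted to $\V_k$ tokens, agrees with that of $f_k$ on $u$ up to the precision $\epsilon$, and (ii) the small but formally nonzero attention weights leaking onto the $\V_0$ sink and onto non-$\V_k$ tokens do not accumulate across the $L$ layers to perturb $h_n^{(L,\kappa)}$ beyond the model's own precision. Both can be addressed by choosing $M$ as a sufficiently large multiple of $\log(1/\epsilon) + L \cdot \poly(d,H)$, but verifying that the resulting equality is exact under the precision convention of Definition~\ref{def:transformer}, rather than merely approximate, is the technical crux of the construction.
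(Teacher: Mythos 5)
Your high-level plan --- parallel expert slices, a generalized position encoding that tracks per-vocabulary sub-ranks, a membership bias in the attention scores, and $\V_0$ as an attention sink --- is essentially the construction the paper uses (the paper factors it through Proposition~\ref{prop:repre-ext}, which extends each $f_k$ to a Transformer on the larger vocabulary that attends only to $\V_k$ tokens, and then combines the extended experts). You also correctly flag the technical crux: the precision-$\epsilon$ convention of Definition~\ref{def:transformer} is what lets the suppressed attention weights be treated as \emph{exactly} zero rather than merely small, so there is no leakage to accumulate across $L$ layers.

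The genuine gap is in your final routing step. You store $c\cdot \mathbf{e}_\kappa$ in the hidden state and add $c\cdot\mathbbm{1}(k=\kappa)$ to the logit of every $y\in\V_k$, then argue that ``taking $c$ large'' the output softmax concentrates on $\V_\kappa$. But $c$ is a fixed parameter of the Transformer $\wt f$, and the $\epsilon$-precision convention in Definition~\ref{def:transformer} applies to the \emph{attention} softmax, not the output softmax. For any finite $c$ you only get $p_{\wt f}(\cdot\mid v)\approx p_{f_\kappa}(\cdot\mid u)$, not the exact equality claimed in the proposition. The paper closes this gap differently: it adds one extra attention layer (layer $L+1$) with $K$ heads, one per expert, where head $k$ uses a key matrix $A_k$ designed so that for $k\neq\kappa$ the attention sinks to the $\V_0$ token $v_{i_0}$, whose expert slices are all zero, and for $k=\kappa$ the attention sinks to position $n$ itself. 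The value matrices extract the $k$-th slice, so the layer outputs $\wt X^{(L+1)}_n = (0,\dots,X^{(L)}_{\kappa;n},\dots,0)$ with all non-$\kappa$ slices zeroed out \emph{exactly}, and then $\wt\vartheta(y)^\top \wt X^{(L+1)}_n = \vartheta(y)^\top X^{(L)}_{\kappa;n}$ with no limit needed. This sink-to-$v_{i_0}$ readout layer is the piece your construction is missing. As a smaller point, the margin constant you call $M$ must dominate the worst-case query-key inner product across all $L$ layers, which grows like $(HB_v)^{2L}B_{qk}^2 B_\theta^2$ (exponential in $L$), not $L\cdot\poly(d,H)$ as you suggest; the paper sets $C = 4\bigl((HB_v)^L B_{qk}B_\theta\bigr)^2 + \log(1/\epsilon)$ accordingly.
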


\begin{remark}
The existence of $v_{i_0}$ which does not belong to any $\{\V_i\}_{i\in[K]}$ serves the technical purpose of inducing attention sink of all irrelevant experts to $v_{i_0}$. It may be achieved by assuming the user query always ends with the special token \texttt{<eos>}.
\end{remark}

The following result considers a more challenging scenario in which multiple Transformers operate across different tasks but share a common vocabulary space. A set of indicator tokens, denoted by $\Omega$, is used to specify the intended task. The objective is to determine which task to execute based on the most recent indicator token. It then proceeds to solve the task by attending exclusively to the task-relevant tokens appearing before the first indicator token and after the last indicator token in the input sequence. This paradigm is closely related to self-correction, and is illustrated in Figure~\ref{fig:gpt-b}.

\begin{figure}[h]
    \centering
    % \vspace{-1em}
        \includegraphics[width=0.6\textwidth]{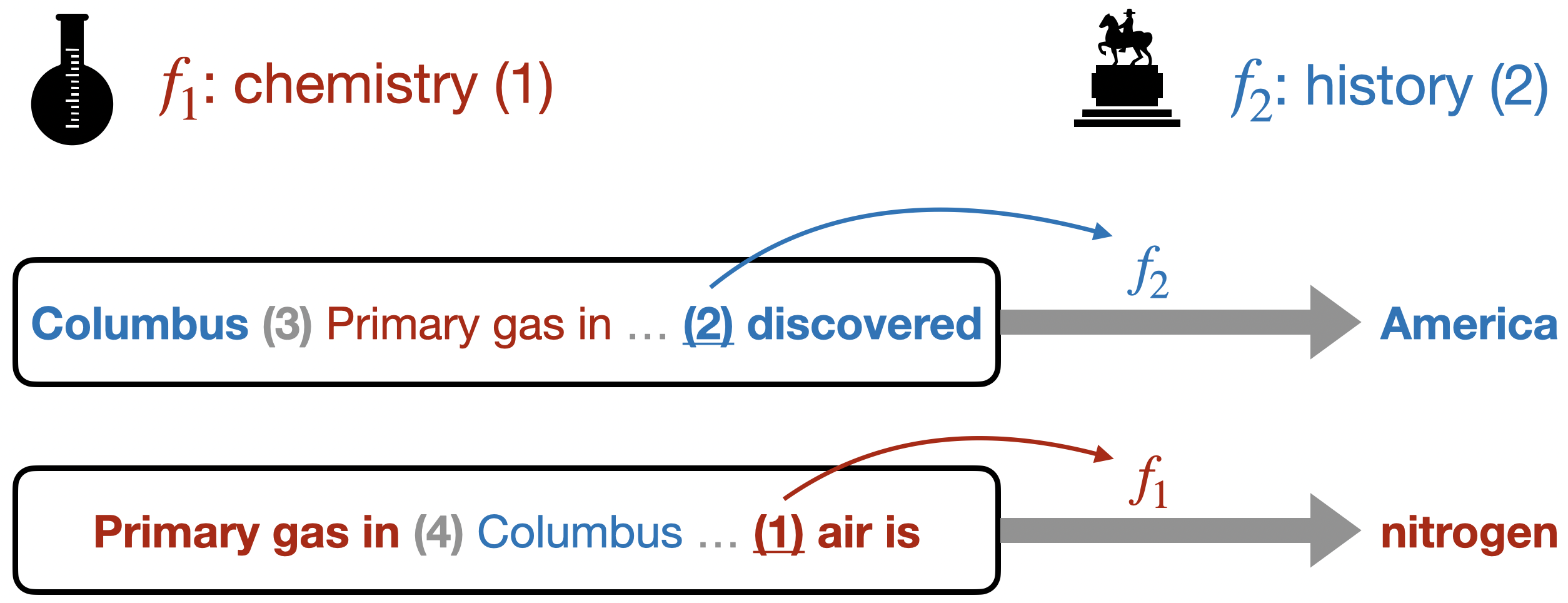}
        \caption{Illustration of the general-purpose Transformer that combines Transformers over the same token spaces.  In the first query, since the last indicator token `(2)' points to the second expert, $f_2$ is called to solve the history problem by attending to only blue tokens. In the second query, since the last indicator token `(1)' points to the first expert, $f_1$ is called to solve the chemistry problem by attending to only red tokens. Importantly, these “function calls” occur implicitly within the internal computation of the unified Transformer architecture.}
        \label{fig:gpt-b}
\end{figure}

\begin{proposition}[Multi-Task Representation over the Same Token Space]\label{prop:repre-same}
For any $H, L, K, N_{\max} \in \Z_+$, token spaces $\Omega \cap \V = \emptyset$, there exists a general-purpose Transformer $\phi$ of type $(O(K),O(\log N_{\max}))$ such that for any Transformers $f_k = (\theta,\pe, (\key^{(l)}_{k;h}, \query^{(l)}_{k;h}, \val^{(l)}_{k;h})_{h \in [H],l \in [L]}, \vartheta, \V), k \in [K]$ over $\V$, the Transformer $\wt f = \phi(f_1,\dots,f_K)$ satisfies the following property: 
for any token sequence $v = v_1\cdots v_n$ such that 
\begin{align*}
    \{\xi_1< \cdots < \xi_{m}\} = \{j: v_j \in \Omega\},~ \xi_m < n \leq N_{\max}
\end{align*} 
then we have
\begin{align}\label{eq:multi-task-same}
    p_{\wt f}(\cdot |v) = p_{f_\kappa}(\cdot |u)
\end{align}
where $u = v_1\cdots v_{\xi_1-1}v_{\xi_m+1}\cdots v_n$ is the token sequence obtained by omitting tokens from position $\xi_1$ to $\xi_m$, 
and $\kappa$ is the task indicated by token $v_{\xi_m}$.
\end{proposition}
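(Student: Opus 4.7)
The plan is to follow the construction strategy underlying Proposition~\ref{prop:repre-multi}, adapting it to the setting where all experts share the vocabulary $\V$ and routing is controlled by indicator tokens in $\Omega$ rather than by the vocabulary of the last input token. The unified Transformer $\wt f$ will have a hidden state that concatenates $K$ slots of size $d$ (one per expert) together with $O(\log N_{\max})$ auxiliary coordinates storing routing and positional information. The high-level strategy remains the same: stack the attention blocks of $f_1,\dots,f_K$ into parallel slots at each layer, and use a carefully designed generalized position encoder together with auxiliary attention biases to (i) identify the active task $\kappa$ indicated by the last indicator $v_{\xi_m}$, (ii) induce attention sink on every non-active expert so its contribution vanishes, and (iii) force the active expert's slot to behave as if it were running on $u$ rather than $v$.

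First I would design the generalized position encoder with respect to the partition $\V,\{\omega_1\},\dots,\{\omega_K\}$ of $\V\cup\Omega$, where $\Omega=\{\omega_1,\dots,\omega_K\}$. From the membership indicators of $v_1,\dots,v_i$, the encoder can compute as fixed deterministic functions: the identity of the last indicator token (hence $\kappa$); the positions $\xi_1$ and $\xi_m$; a flag indicating whether $i$ is relevant, skipped, or itself an indicator; and the remapped position $\pi_i$, equal to $i$ for $i<\xi_1$ and $i-(\xi_m-\xi_1+1)$ for $i>\xi_m$. All of these fit into $O(\log N_{\max})$ coordinates and are placed in the auxiliary block of $X_i^{(1)}$. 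In slot $k$ of $X_i^{(1)}$ the encoder additionally writes the value that the shared encoder $\pe$ would assign to $v_i$ at position $\pi_i$ of the relevant subsequence, so that the slot-$k$ embedding of $v_i$ matches what $f_k$ would see at position $\pi_i$ of $u$; this is possible because $\pe$ is generalized and its output depends only on membership information that $\wt f$'s encoder has access to.

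Next I would assemble the layers of $\wt f$. In slot $k$ of layer $l$ I would install the attention heads $(\key^{(l)}_{k;h},\query^{(l)}_{k;h},\val^{(l)}_{k;h})_{h\in[H]}$ of $f_k$ acting on the slot-$k$ subspace, and add to every attention score two auxiliary terms computed from the routing coordinates. The first is a large negative bias, of magnitude $M$ of order $\log(1/\epsilon)+\log N_{\max}+\log K$, applied whenever $k=\kappa$ and the key position lies in the skipped region $\{\xi_1,\dots,\xi_m\}$; by the softmax precision assumption this forces those keys to be treated as zero, so expert $\kappa$'s attention replicates $f_\kappa$ on $u$ exactly. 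The second is a large positive bias applied whenever $k\ne\kappa$ and the key position equals $\xi_m$, driving every non-active expert's attention into a sink on $v_{\xi_m}$; by choosing the value action of slot $k$ to be zero on the anchor hidden state, each non-active slot remains zero throughout all layers. Finally the output feature $\vartheta$ of $\wt f$ reads only from the slot indexed by $\kappa$, yielding $p_{\wt f}(\cdot|v)=p_{f_\kappa}(\cdot|u)$.

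The main obstacle will be ensuring the two gating mechanisms compose cleanly across all $L$ layers rather than only at the first. Three invariants must be maintained inductively: (a) the active slot's recurrence at layer $l$ coincides with $f_\kappa$'s layer-$l$ recurrence on $u$, which requires the remapped positions $\pi_i$ to remain accessible at every layer, handled by keeping the auxiliary routing coordinates untouched by the expert blocks; (b) the skipping bias and the sink bias never interfere, which is arranged by applying them on disjoint conditions ($k=\kappa$ vs.\ $k\ne\kappa$); and (c) each non-active slot stays algebraically zero, so no stray signal contaminates later layers, which follows from a rank-one kill on the anchor value. Setting $M$ large enough relative to $\epsilon$, $K$, $L$, and $N_{\max}$ and verifying that these invariants propagate is the same bookkeeping as in Proposition~\ref{prop:repre-multi}, with the only genuinely new ingredient being the position remapping $i\mapsto\pi_i$ used by the active slot.
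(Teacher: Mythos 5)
Your plan follows the same blueprint as the paper's proof: stack the experts' attention blocks into $K$ parallel slots, let a generalized position encoder compute remapped positions and routing information, and use a positional attention-sink mechanism built on near-orthogonal position codes so that inactive experts collapse onto the anchor $v_{\xi_m}$ while the active expert behaves exactly as $f_\kappa$ on $u$. There are, however, two substantive issues worth flagging.

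First, the auxiliary routing coordinates cannot be preserved merely by "keeping them untouched by the expert blocks." Definition~\ref{def:transformer} is attention-only with no residual stream: $X^{(l+1)}_i$ is \emph{entirely} the sum of the heads' weighted values, so any coordinate that no head writes to becomes zero at the next layer. The paper handles this by installing an extra head (index $KH{+}1$) whose value matrix is identity on the auxiliary block and whose attention pattern (driven by the $A$ matrix in Lemma~\ref{lem:positional-enc}) recopies the auxiliary coordinates forward across every layer. Without this head your conditional biases are no longer available after layer~1, and the whole gating scheme collapses. You gesture at "the same bookkeeping as in Proposition~\ref{prop:repre-multi}," which does include such a head, but as written your plan omits it, and this is an essential ingredient rather than bookkeeping.

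Second, the invariant "each non-active slot remains zero throughout all layers" is too strong and, as stated, false: for positions $j<\xi_1$ every expert must produce its genuine prompt features $Y^{(l)}_{k;j}$ (the model does not yet know $\kappa$ there), and even for positions $i>\xi_m$ the non-active slots are nonzero at layer~1 (they contain the expert embeddings); zeroness holds only for positions after $\xi_m$ and only from layer~2 on, which is exactly the statement of Eq.~\eqref{eq:layer-feature-expert-part} vs.\ Eq.~\eqref{eq:layer-feature-prompt-part} in the paper. Relatedly, the fixed output feature $\wt\vartheta$ cannot "read only from slot $\kappa$" — it reads all slots, and the argument goes through because at position $n>\xi_m$ the non-active slots have been zeroed. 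A final stylistic difference: you compute $\kappa$ explicitly in the query's auxiliary coordinates and add conditional biases $\mathbbm{1}(k=\kappa)\cdot\mathbbm{1}(j\text{ skipped})$ and $\mathbbm{1}(k\ne\kappa)\cdot\mathbbm{1}(j=\xi_m)$; the paper never computes $\kappa$, instead storing $\beta_{\cE(v_{\xi_m})}$ at position $\xi_m$ and designing $A_1,\dots,A_K$ so that $\alpha_m^\top A_k(\alpha_m+\beta_\kappa)$ is large exactly when $k\ne\kappa$ (Lemma~\ref{lem:positional-enc}, property 1) and small when $k=\kappa$ (property 2). Both routes can be made to work, but the paper's choice avoids storing a one-hot task indicator in every query's auxiliary block and fits more naturally in the bilinear-attention-only model.
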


\begin{remark}
We observe that in both results above, reducing the type parameters is generally not feasible. The dependence on $K$ arises from the need to compute features for all $K$ experts corresponding to the user query. Since the model lacks prior knowledge of the task, it must encode all task-relevant information to preserve the ability to invoke any expert at inference time. The $\log(N_{\max})$ scaling stems from the positional encoding: in order to construct $N_{\max}$ nearly orthogonal vectors, the positional embedding must have dimension at least $\log(N_{\max})$.
\end{remark}

\subsection{General-purpose expressiveness of Transformers with self-correction}

In this section we state the main result that establishes general-purpose expressiveness of Transformers with self-correction. We rely on the following notion of regret-minimization Transformer, which expresses the single task of finding the most rewardable action.

\begin{definition}[Regret-Minimization Transformer]\label{def:reg-transformer}
We say that a Transformer $f$ achieves simple regret $\reg(\cdot)$ over reward function $r$ and action space $\A$, if for any $T \in \Z_+$, we have $\max_{a^* \in \A}r(a^*) - \E[r(a_T)] \leq \reg(T)$ where $a_1,\dots,a_T$ are generated in the following way:
\begin{align*}
    a_t \sim &~  p_f(\cdot|a_1,r_1,\dots,a_{t-1},r_{t-1}),~\forall t = 1,\dots, T,\\
    r_t = &~ r(a_t),~\forall t = 1,\dots, T.
\end{align*}
\end{definition}

Essentially, the goal of a regret-minimization Transformer is to learn from a reward oracle and ultimately recommend an action that is near-optimal, which is related to a concept commonly referred to as simple regret in the bandit literature~\cite{even2006action,carpentier2015simple,jamieson2014lil}. To achieve this, the Transformer may implement strategies such as mirror descent, upper confidence bounds, or search-based algorithms, depending on the problem structure. As these procedures rely only on basic arithmetic operations, such Transformers can be constructed by applying the universal approximation capabilities of Transformers~\cite{yun2020are,luo2022your,feng2023towards,li2024chain}: for example, \cite{lin2023transformers} provides constructions to approximate upper confidence bounds and Thompson sampling algorithms with regret $O(\sqrt{T})$. Consequently, their construction is not the primary focus of this work.

\begin{algorithm}[h]
   \caption{Self-correction with verifier}
   \label{alg:self-correction}
   \small
\begin{algorithmic}[1]
\Procedure{\textsc{Generation}}{$q$}\Comment{$q = q_1\dots q_{n_0}$ denotes the user query.}
\State $\prompt \leftarrow q$ %\Comment{$\prompt$ is the prompt fed into $\wt f$, updated via COT.}
\For {$t=1,\dots,T$}
\State $a^{(t)} \sim p_{\wt f}(\cdot\mid \prompt)$ \Comment{$a^{(t)}$ designates which expert to use in $t$-th iteration}
\State $\prompt \leftarrow \prompt | a^{(t)}$ \Comment{Update the prompt autoregressively, $|$ represents token concatenation.}
\For {$i=1,\dots$}
\State $u^{(t)}_{i} \sim p_{\wt f}(\cdot\mid \prompt)$ \Comment{Generate $t$-th response autoregressively}
\State $\prompt \leftarrow \prompt | u^{(t)}_{i}$ \Comment{Update the prompt autoregressively}
\If {$u^{(t)}_i = \EOS$}
    \State {\bf Break}
\EndIf
\EndFor
\State $ r^{(t)} \leftarrow r(q,u^{(t)}), ~\prompt \leftarrow \prompt | r^{(t)}$ \Comment{Query verifier to obtain reward of $t$-th response}
\EndFor
\State {\bf Return}
\EndProcedure
\end{algorithmic}
\end{algorithm}

The following theorem establishes the existence of a general-purpose Transformer that can simulate the behavior of a set of expert Transformers (not necessarily over the same token space) through self-correction. Specifically, it shows that such a unified Transformer can, at inference time, identify and invoke the appropriate expert to solve any task that the original experts can solve. The self-correction protocol is described in Algorithm~\ref{alg:self-correction}, wherein the unified Transformer autoregressively generates actions and responses, after which the verifier is queried to obtain reward signals. Through this process of trial and error, the model effectively ``learns'' at inference time, using the verifier to minimize regret and adaptively select the correct expert.

\begin{theorem}[Regret Minimization via Self-Correction]\label{thm:self-correction}
For any $H, L, K, N_{\max} \in \Z_+$, token spaces $\V_0,\V_1,\dots,\V_K, \A ~(|\A| = K)$ such that $\V_0, \V=(\cup_{k=1}^K \V_k),\text{ and }\A$ are disjoint, and reward function $r$, there exists a general-purpose Transformer $\phi$ of type $(O(K),O(\log N_{\max}))$ such that 
given any set of Transformers denoted as follows,
\begin{itemize}
    \item \textbf{$K$ expert Transformers}: $f_k = (\theta,\pe, (\key^{(l)}_{k;h}, \query^{(l)}_{k;h}, \val^{(l)}_{k;h})_{h \in [H],l \in [L]}, \vartheta,\V_k)$ for $k \in [K]$, such that one of the expert $f_{k^*}$ achieves $\lambda$-suboptimal reward: 
    \begin{align*}
        \E_{u \sim f_{k^*}(\cdot|q)}[r(q,u)] \geq \max_{u^* \in \V^\omega}r(q,u^*) - \lambda
    \end{align*}
    \item \textbf{Regret-Minimization Transformer}: $f_0 = (\theta,\pe, \key^{(l)}_{0;h}, \query^{(l)}_{0;h}, \val^{(l)}_{0;h})_{h \in [H],l \in [L]}, \vartheta,\V_0 \cup \A)$  that implements a bandit algorithm over the reward function $r_0$ and action space $\A$ with simple regret $\reg(t)$, where $r_0(a) = \E_{u \sim f_a(\cdot|q)}[r(q,u)]$ denotes the average reward of responses generated by the $a$-th expert,
\end{itemize}
then the Transformer $\wt f = \phi(f_0,f_1,\dots,f_K)$ satisfies the following property: 
for any prompt $v = v_1\cdots v_n$, if the response sequence $u^{(1)}, \dots,u^{(T)}$ generated by the protocol in Algorithm~\ref{alg:self-correction} has total length $\leq N_{\max}$, then we have
\begin{align*}
    \max_{u^* \in \V^\omega}r(q,u^*) - \E[r(q,u^{(T)})] \leq \lambda + \reg(T)
\end{align*} 
\end{theorem}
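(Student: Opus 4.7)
The plan is to construct the unified Transformer $\wt f$ by combining the attention blocks of $f_0, f_1, \dots, f_K$ via the general-purpose Transformer machinery of Propositions~\ref{prop:repre-multi} and~\ref{prop:repre-same}, so that at every step of Algorithm~\ref{alg:self-correction}, exactly one expert's attention pathway is active while the others' attentions sink to an irrelevant token. The key observation is that, throughout Algorithm~\ref{alg:self-correction}, the vocabulary class of the most recent token in the prompt uniquely determines which expert should produce the next token: if the last token is a reward token in $\V_0$, then $f_0$ should propose the next action; if the last token is an action $a^{(t)} \in \A$ or a mid-response token in $\V_{a^{(t)}}$, then $f_{a^{(t)}}$ should generate the next response token.

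First I would apply Proposition~\ref{prop:repre-multi} to the $K+1$ experts with disjoint vocabularies $\V_0 \cup \A, \V_1, \dots, \V_K$, so that whenever the prompt ends in a token of $\V_0$, the unified Transformer simulates $f_0$ on the subsequence of $(\V_0 \cup \A)$-tokens (the action/reward history). To handle the case where the prompt ends with an action token or inside a response segment, I would adapt the indicator-based construction of Proposition~\ref{prop:repre-same}, treating action tokens in $\A$ as indicators that reroute attention to the expert $f_{a^{(t)}}$ acting on the query and any response tokens already generated after the last action. The type of the resulting general-purpose Transformer is $(O(K), O(\log N_{\max}))$ since we stack one hidden-state copy per expert and need logarithmic-sized positional features to encode set membership up to length $N_{\max}$.

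By a straightforward induction on the generation step, this construction yields
\begin{align*}
    a^{(t)} \sim p_{f_0}(\cdot \mid a^{(1)},r^{(1)},\dots,a^{(t-1)},r^{(t-1)}), \qquad u^{(t)} \sim p_{f_{a^{(t)}}}(\cdot \mid q).
\end{align*}
The action sequence is therefore exactly what $f_0$ would produce under its bandit protocol with stochastic feedback $r^{(t)} = r(q, u^{(t)})$, whose conditional mean equals $r_0(a^{(t)})$. The regret bound then follows by chaining three inequalities: the $\lambda$-suboptimality of $f_{k^*}$ gives $\max_{a \in \A} r_0(a) \geq r_0(k^*) \geq \max_{u^*\in\V^\omega} r(q, u^*) - \lambda$; the simple regret bound of $f_0$ gives $\E[r_0(a^{(T)})] \geq \max_a r_0(a) - \reg(T)$; and the identity $\E[r(q, u^{(T)})] = \E[r_0(a^{(T)})]$ (from $u^{(T)} \sim p_{f_{a^{(T)}}}(\cdot\mid q)$) delivers $\max_{u^*} r(q, u^*) - \E[r(q, u^{(T)})] \leq \lambda + \reg(T)$.

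The main obstacle I anticipate lies in the first step. Action tokens in $\A$ must play a dual role: they are part of $f_0$'s regret-minimization history (so $f_0$ must see them), yet they also act as indicators that redirect the next-token generation away from $f_0$ and toward the corresponding expert $f_{a^{(t)}}$. Reconciling these roles requires an extended generalized position encoder whose membership indicators (in the sense of Definition~\ref{def:pe}) track both which vocabulary class each token belongs to and whether it is the most recent indicator-type token, so that the attention-sink mechanisms of Propositions~\ref{prop:repre-multi} and~\ref{prop:repre-same} can be fused into a single routing layer without interference. Reward tokens $r^{(t)}$ being in $\V_0$ naturally ensures they are visible to $f_0$ but not to any expert $f_k$ during response generation.
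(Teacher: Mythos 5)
Your proposal correctly identifies the structural skeleton of the argument---compose the routing mechanisms of Propositions~\ref{prop:repre-multi} and~\ref{prop:repre-same} so that exactly one expert pathway is active per step, then derive the regret bound by chaining the $\lambda$-suboptimality of $f_{k^*}$, the simple-regret guarantee of $f_0$, and the identity $\E[r(q,u^{(T)})] = \E[r_0(a^{(T)})]$. The final inequality chain is exactly the paper's, and your diagnosis of the central difficulty (action tokens must be visible to $f_0$ as history yet also act as routing indicators for the response generator) is precisely right. However, you flag this obstacle as unresolved---``reconciling these roles requires an extended generalized position encoder ... so that the attention-sink mechanisms ... can be fused into a single routing layer without interference''---and that is where your proof has a genuine gap rather than merely a missing detail.

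The paper resolves the conflict not by fusing the two mechanisms into one layer, but by a layered composition that avoids any fusion. Concretely, it sets $\wt\phi(f_0,f_1,\dots,f_K) = \phi_m\bigl(\phi_s(\phi_e(f_1),\dots,\phi_e(f_K)),\, f_d,\, f_0\bigr)$, where: (i) $\phi_e$ (Proposition~\ref{prop:repre-ext}, which your proposal omits entirely) first lifts each $f_k$ from its private vocabulary $\V_k$ to the common vocabulary $\V = \cup_k\V_k$, so that Proposition~\ref{prop:repre-same}'s shared-vocabulary hypothesis is met; (ii) $\phi_s$ then produces a single model $g_1$ that, using action tokens as indicators $\Omega=\A$, routes to $f_{a^{(t)}}$ restricted to $q\mid u^{(t)}_{1:i-1}$; (iii) a dummy Transformer $f_d$ that emits $\BOS$ after any $\A$-token handles the problematic boundary case where the last token is an action; and (iv) $\phi_m$ with $K=3$ disjoint token spaces routes the next-token prediction to exactly one of $g_1$, $f_d$, $f_0$ based on the class of the last token. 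Because the outer $\phi_m$ uses only three experts, it contributes an $O(1)$ width factor, so the overall type remains $(O(K),O(\log N_{\max}))$. In short, your approach of applying Proposition~\ref{prop:repre-multi} directly to $K+1$ experts and then ``adapting'' Proposition~\ref{prop:repre-same} puts both routing rules at the same level, which is exactly what creates the dual-role conflict you could not resolve; the paper's nested composition, with $\phi_e$ as a preprocessing step and $f_d$ as a glue component, is the device that makes the conflict disappear.
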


\begin{remark}
While the general-purpose Transformer $\phi$ can be applied to construct the brutal-force Transformer $\wt f$ that simply tries every expert, we note that the generality of Definition~\ref{def:reg-transformer} allows us to construct more powerful Transformers beyond brutal search. Leveraging the structures in the problem and the expert pool, it is entirely possible to identify the correct expert using $\ll K$ trials~\cite{russo2018learning,foster2021statistical}.
\end{remark}

As a consequence of Theorem~\ref{thm:self-correction}, we obtain a Transformer architecture that can provably produce a final answer that nearly maximizes the reward. This means that the unified transformer can solve $K$ distinct tasks at inference time, without requiring prior knowledge of which task the user query pertains to. Notably, the construction of such an architecture is \emph{general-purpose}, in that it is independent of the specific tasks, reward functions, or expert policies. To the best of our knowledge, this constitutes the first theoretical result of its kind in the study of Transformer architectures. Furthermore, our theory aligns with the empirical finding that LLMs are able to progressively optimize outcome rewards during test-time~\cite{qu2025optimizing}.

\section{Experiments}
\label{sec:exp}

In this section, we conduct synthetic experiments to show that Transformers can self-correct with verifier feedback.\footnote{Code: \url{https://github.com/LithiumDA/Representation-Ability-of-Test-time-Scaling}.}

% \section{Synthetic Experiments}
\subsection{Experimental Setup}
\label{sec:exp-setup}

\paragraph{Data generation.} We aim to construct a test problem with complex prompts such that correctly solving the problem in the single-term generation is challenging. In this case, self-correction can play a critical role if Transformers have such capacities. Specifically, in our synthetic problem, the prompt is the concatenation of the following two components:

\begin{itemize}
    \item \textbf{Instruction}: A 3-SAT problem, e.g.,
    \[
    (\sim x_3 \lor \sim x_1 \lor \sim x_2) \land (\sim x_1 \lor \sim x_3 \lor x_2) \land (\sim x_4 \lor x_2 \lor \sim x_3) \land \cdots
    \]
    \item \textbf{Data}: A string composed of characters from the set \{\texttt{a}, \texttt{b}\}.
\end{itemize}

\begin{wrapfigure}{r}{6.6cm} % !htb
\vspace{-3.5mm}
\centering
    \begin{tabular}{lccc}
        \toprule
        \midrule
        \textbf{Model} & \textbf{Depth} & \textbf{Heads} & \textbf{Width} \\
        \midrule
        GPT-nano     & 3 & 3  & 48  \\
        GPT-micro    & 4 & 4  & 128 \\   
        GPT-mini     & 6 & 6  & 192 \\
        Gopher-44M   & 8 & 16 & 512 \\  
    \midrule
    \bottomrule
    \end{tabular}
    \vspace{-1mm}
    \captionof{table}{Model configuration hyperparameters.}\label{tab:model-config}
    \vspace{-6mm}
\end{wrapfigure}

The ground truth target is defined as follows: If the 3-SAT problem in the \textit{instruction} is satisfiable, the model should \textit{copy} the string in the \textit{data} part in the output; otherwise, the model should \textit{reverse} the string in the output.

\paragraph{Model configuration.}We train Transformer models of various sizes. The configurations are detailed in Table \ref{tab:model-config}.

\paragraph{Implementation details.} Our codes are implemented based on \texttt{PyTorch} \cite{paszke2019pytorch} and \texttt{minGPT}\footnote{\url{https://github.com/karpathy/minGPT} (MIT license).}. All the models are trained on one NVIDIA GeForce RTX 2080 Ti GPU with 11GB memory. 

\begin{wrapfigure}{r}{6.6cm} % !htb
% \vspace{-6mm}
\centering
    \vspace{-1mm}
    \includegraphics[width=0.99\linewidth]{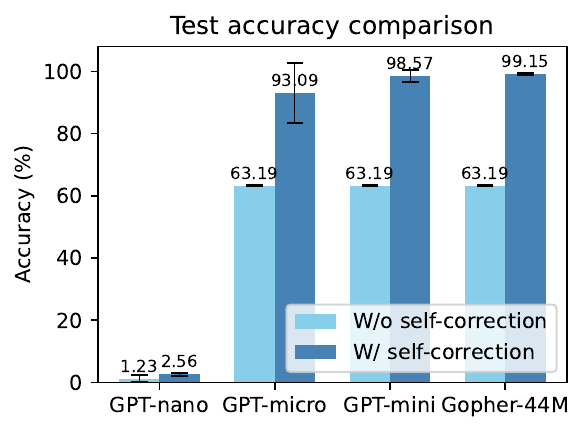}
    \vspace{-8mm}
    \captionof{figure}{Accuracy comparisons of different models with/without self-correction at test time.}
    \vspace{-5mm}
    \label{fig:exp-result}
\end{wrapfigure}

In our experiment, we construct datasets using 3-SAT problems with 4 variables and 20 clauses. The lengths of the data strings are set to 5. We generate 10000 instances for training and 512 instances for evaluation. In the training set, we control the ratio of satisfiable and unsatisfiable 3-SAT instructions to 9:1, while in the test set, the ratio is set to 1:1. 

All our models are trained with the Adam optimizer \cite{kingma2015adam} for 5 epochs. Following common practice, the learning rate goes through the warm-up stage in the first 5\% of training iterations, and then decays linearly to 0 until training finishes. We set the peak learning rate to $10^{-4}$ and find that all the models are stably trained under this learning rate schedule. We do not apply drop out or weight decay during training. We repeat the experiments for 3 times under different random seeds and report the average accuracy with error bars.

\subsection{Results}

Test set accuracy across different inference settings is shown in Figure \ref{fig:exp-result}. We note that model performance plateaus at $63.19\%$ when there is no self-correction at test time, with no improvement from increased model size. By contrast, when models are equipped with verifier signals to enable self-correction, test accuracy improves substantially, demonstrating the efficacy of this mechanism. Crucially, larger models -- such as GPT-mini and Gopher-44M -- achieve near-perfect accuracy under self-correction, suggesting that sufficiently expressive Transformers are capable of implementing effective self-correction strategies. This empirical result supports our theoretical findings.

\section{Related Works}

\paragraph{Theories of Transformers and Large Language Models.} 
The success of Transformers and LLMs has motivated the study on their expressiveness. Existing research has shown that Transformers can implement simple functions such as sparse linear functions, two-layer neural networks, and decision trees~\cite{garg2022can}, gradient descent~\cite{akyurek2022learning,bai2023transformers,von2023transformers}, automata~\cite{liu2022transformers, zhao2023transformers}, Dyck languages~\cite{bhattamishra2020ability,yao2021self}, Turing machines~\cite{dehghani2018universal, bhattamishra2020computational, zaheer2020big, perez2021attention, wei2022statistically}, variational inference~\cite{mei2023deep}, and bandit algorithms~\cite{lin2023transformers}. 
\cite{yun2020are, luo2022your, alberti2023sumformer, petrov2024prompting} establish universal approximation results under various settings. 
\cite{edelman2022inductive,elhage2021mathematical, li2021can,likhosherstov2021expressive} study representational capabilities and properties of self-attention, the core component in Transformers.
\cite{feng2023towards, li2024chain} study the expressiveness of auto-regressive Transformers with chain-of-thought. \cite{edelman2022inductive,li2024promises,botta2025query} studies the sample complexity of Transformers. 
Recently, a growing body of work has begun to explore the theoretical foundations of self-improvement in large language models (LLMs). \cite{song2024mind} introduces the generation-verification gap as a key quantity governing scaling behavior. \cite{huang2024self} proposes a progressive sharpening framework in which the policy gradually shifts toward more confident responses. \cite{setlur2025scaling} draws on reinforcement learning theory to formally establish the advantages of verifier-based methods. 
In contrast to these works, our results provide explicit sample complexity rates and tangible representation architectures, enabling a more concrete understanding of the fundamental capabilities and limitations of test-time scaling paradigms.

\paragraph{Test-time scaling.}
Recent research has established the test-time scaling law of LLMs, illuminating a new scaling axis beyond training-time scaling laws \cite{kaplan2020scaling, hoffmann2022an}. Existing approaches of scaling up test-time compute of LLMs can be broadly classified into two categories: (1) applying test-time algorithms (aka inference-time algorithms) during LLM decoding \cite{brown2024large, wu2025inference, snell2025scaling}; and (2) explicitly training LLMs to output long chain-of-thought traces \cite{guo2025deepseek,kimi-k1.5, openaio3, yang2025qwen3}. 
Many recent works focus on understanding and improving the effectiveness of test-time scaling empirically: \cite{chen2024not, aggarwal2025l1, cuadron2025danger, wang2025thoughts} study under-thinking, over-thinking, and length control in LLM reasoning. \cite{chen2025sets} proposes to integrates self-verification and self-correction into sampling. \cite{qu2025optimizing} analyzes optimizing test-time compute by introducing a meta reinforcement learning formulation. \cite{setlur2025scaling} demonstrates that verification/RL is important for optimal test-time scaling. 
\cite{zhang2025and} provides an extensive review of the test-time scaling landscape. 
In contrast, our work focuses on theoretical analyses of test-time scaling.
\section{Discussions}
\label{sec:discussions}
In this work, we present a theoretical analysis of test-time scaling paradigms, focusing on two core aspects: sample efficiency and representational capacity. Our investigation reveals a fundamental separation in sample complexity between self-consistency and best-of-$n$, providing theoretical support for the empirically observed superiority of the latter method. Furthermore, by introducing the framework of \emph{general-purpose expressiveness}, we construct generic Transformer architectures capable of emulating online learning algorithms at test time. This capability enables a single model to provably solve multiple tasks without task-specific adaptation, thus extending our understanding of expressiveness to multi-task settings. Our results highlight the theoretical advantage of self-correction paradigms, which iteratively refine predictions to increase the likelihood of correct answers---surpassing the limitations of i.i.d. responses by repeated sampling approaches. This finding is validated through experiments and we observe that it requires additional model capacities for Transformer to implement self-correction. 

Despite these contributions, our work comes with limitations: our construction in Theorem~\ref{thm:self-correction} only applies to attention-only Transformers and relies on a slightly generalized position encoding method. Relaxing these constraints constitutes interesting problems for future research.

\clearpage
\bibliography{ref}
\ifdefined\isarxiv
\bibliographystyle{IEEE}
\else
\bibliographystyle{IEEE}
\fi

\newpage

\newpage
\appendix

\onecolumn

\section{Proofs}\label{sec:proofs}

\subsection{Proof of Theorem~\ref{thm:self-consistency}}

\begin{proof}
Write $\cO = \{1,\dots,O\}$ ($O \in \mathbb{Z}_+$) where $i$ is the $i$-th most likely answer and let $n_i$ denote the number of occurrences of $i$. Then we have
\begin{align*}
    \hat{p} = \frac{1}{n}(n_1,\dots,n_O) \sim \frac{1}{n}\mathrm{Multinomial}(n,p),
\end{align*}
where $p = (p(1),\dots,p(O))$.

\paragraph{Upper bound.}
When $n \geq \frac{2\log(1/\delta)}{\Delta^2}$ we apply Claim~\ref{cla:concentration-multinomial} to obtain that with probability at least $1-\delta$,
\begin{align*}
\|\hat{p} - p\|_1 \leq \sqrt{\frac{2 \ln (1/\delta)}{n}}  \leq \Delta.
\end{align*}
Under this event, we have that for any $i > 1$
\begin{align*}
    n_1 - n_i = &~ n\cdot (\hat{p}_1 - \hat{p}_i)\\
    \geq &~ n\cdot ({p}_1 - {p}_i - \|\hat{p} - p\|_1 )\\
    % \geq &~ n\cdot ({p}_1 - {p}_i - \Delta) \\
    \geq &~ 0
\end{align*}
and hence the correct answer $1$ is the most consistent answer. 
It follows that self-consistency can produce the correct answer with probability at least $1-\delta$.

\paragraph{Lower bound.}
When $n \leq \frac{1}{\Delta^2}$, we construct the hard instance where $p_1 = (1+\Delta)/2, p_2 = (1-\Delta)/2$ and $\Delta < 0.00001$. If $n \leq \frac{1}{\Delta}$ then by the proof of Theorem~\ref{thm:Best-of-$n$}, with constant probability the correct answer is not generated at all and hence self-consistency fails to produce the correct answer. Otherwise $n \geq \frac{1}{\Delta} \geq 10000$. We may write $X := \frac{n_1 - n_2 - n \Delta}{\sqrt{n}}$ as a sum of i.i.d. random variables divided by $\sqrt{n}$:
\begin{align*}
    X = \frac{\sum_{i=1}^n Y_i}{\sqrt{n}},
\end{align*}
where $\E(Y_i) = 0, \sigma^2 = \E(Y_i^2) \geq 1/2, \rho = \E(|Y_i|^3) \leq 1$. 
By Claim~\ref{cla:berry}, we have that
\begin{align*}
\PP(n_1 < n_2) = &~ \PP(X < - 1)\\
\geq &~ \Phi(-1) - \frac{8 \rho}{\sigma^3 \sqrt{n}}\\
\geq &~ 0.01.
\end{align*}
Thus in both cases, self-consistency fails to produce the correct answer with constant probability.
\end{proof}
\subsection{Proof of Theorem~\ref{thm:Best-of-$n$}}

\begin{proof}
Write $\cO = \{1,\dots,O\}$ where $i$ is the $i$-th most likely answer and let $n_i$ denote the number of occurrences of $i$. Then we have
\begin{align*}
    p(1) \geq p(2) + \Delta \geq \Delta.
\end{align*}
Note that for best-of-$n$, correctness is achieved if the correct answer appears at least once among $n$ independent samples.
\paragraph{Upper bound.}

When $n \geq \frac{2\log(1/\delta)}{\Delta}$, we have
\begin{align*}
    \PP(\text{Best-of-}n\text{ outputs correct answer}) = &~ 1 - (1 - p(1))^n \\
    \geq &~ 1 - (1 - \Delta)^{\frac{2\log(1/\delta)}{\Delta}}\\ 
    \geq &~ 1-\delta.
\end{align*}
This confirms that best-of-$n$ achieves the correct answer with $1-\delta$ probability.

\paragraph{Lower bound.}
When $n \leq \frac{1}{\Delta}$, we construct the hard instance where $p(1) = \Delta + (1-\Delta)/O, p(2) = \cdots = p(O) = (1-\Delta)/O$ and $\Delta < 0.0000001$. Since the correct answer occurs with probability at least $\Delta$, we have:
\begin{align*}
    \PP(\text{Best-of-}n\text{ outputs correct answer}) = &~ 1 - (1 - p(1))^n \\
    \leq &~ 1 - (1 - 2\Delta)^{\frac{1}{\Delta}}\\ 
    \leq &~ 0.99.
\end{align*}
This confirms that best-of-$n$ fails to produce the correct answer with constant probability.
\end{proof}

\subsection{Proof of Proposition~\ref{prop:repre-multi}}

We first introduce the following result that extends any Transformer to a larger vocabulary, so that it only attends to tokens in its original vocabulary.

\begin{proposition}[Extended Representation to Multiple Token Spaces]\label{prop:repre-ext}
For any $H, L, N_{\max} \in \Z_+$, $\V_1 \cap \V_0 = \emptyset$, there exists a general-purpose Transformer $\phi$ of type  $(O(1),O(\log N_{\max}))$ such that for any Transformers $f = (\theta,\pe, (\key^{(l)}_{h}, \query^{(l)}_{h}, \val^{(l)}_{h})_{h \in [H],l \in [L]}, \vartheta,\V_1)$ over vocabulary $\V_1$, the Transformer $\wt f = \phi(f_1)$ satisfies the following property: 
for any token sequence $v = v_1\cdots v_n$ such that $n \leq N_{\max}$, denote $\{i_1< \cdots < i_m\} = \{i:v_i \in \V_1\}$,
then we have
\begin{align*}
    p_{\wt f}(\cdot |v) = p_{f}(\cdot |u),
\end{align*}
where $u = v_{i_1}\cdots v_{i_m}$.
\end{proposition}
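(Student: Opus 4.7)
The plan is to construct $\wt f = \phi(f)$ as an $(L+1)$-layer attention-only Transformer whose first $L$ layers simulate $f$'s computation on the subsequence $u = v_{i_1}\cdots v_{i_m}$ by making every $\V_0$ key an attention sink for all queries, and whose extra ``copy'' layer propagates the final representation of the most recent $\V_1$ token forward to the last position, so that the first $d$ coordinates of $\wt X^{(L+1)}_n$ agree with $X^{(L)}_m$ regardless of whether $v_n \in \V_0$ or $v_n \in \V_1$.

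\textbf{Construction.} Augment $f$'s hidden size by $O(\log N_{\max}) + O(1)$ coordinates carrying a flag $\text{flag}_i = \mathbbm{1}[v_i \in \V_0]$, a constant ``$1$'' coordinate, and a nearly orthogonal encoding $\eta(k_i) \in \R^{O(\log N_{\max})}$ of the effective position $k_i := |\{j \le i : v_j \in \V_1\}|$ (chosen deterministically so that $\langle \eta(k), \eta(k)\rangle = 1$ and $\langle \eta(k), \eta(k')\rangle \le 1/2$ for $k \neq k'$). The tokenizer $\wt\theta$ writes $\theta(v)$ into the first $d$ coordinates when $v \in \V_1$ and is zero otherwise. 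Define the generalized position encoder $\wt\pe$ with respect to the partition $\{\V_0, \V_1\}$ (refined by $\pe$'s partition) so that, for $v_i \in \V_1$, it reproduces $\pe(\theta(v_i); v_{i_1},\dots,v_{i_{k_i}})$ in the original $d$ coordinates; this is well-defined because the set-membership data of $v_1,\dots,v_i$ uniquely determines the $\V_1$-subsequence up to position $i$. In each of the first $L$ blocks, extend $\key, \query, \val$ to act as in $f$ on the first $d$ coordinates and insert a flag-sink component: a key coordinate carrying $\text{flag}_j$ paired with a query coordinate carrying $-M$ times the constant-$1$ coordinate, so that every score acquires an additive $-M \cdot \text{flag}_j$ term. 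Add one pass-through head per layer attending to the self-position and copying the auxiliary coordinates, ensuring the flag/constant/$\eta$ entries persist across layers. Append one copy layer with score $M \langle \eta(k_i), \eta(k_j)\rangle - M \cdot \text{flag}_j$ and value matrix extracting the first $d$ coordinates. Finally, set $\wt\vartheta(y) = (\vartheta(y), 0, \dots, 0)$ and choose $M$ as a function of $f$'s parameter norms, $\epsilon$, and $N_{\max}$ so that $-M$ dominates the precision threshold.

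\textbf{Inductive and copy argument.} I would then prove by induction on $l \in \{1,\dots,L\}$ that for every $\V_1$ token $v_{i_k}$, the first $d$ coordinates of $\wt X^{(l)}_{i_k}$ equal $X^{(l)}_k$ (the corresponding representation of $f$ on $u$). The base case is immediate from $\wt\theta$ and $\wt\pe$. For the step, the $h$-th head's score from query $v_{i_k}$ to key $v_j$ evaluates to $s^{(l)}_h(X^{(l)}_k, X^{(l)}_{k_j}) - M \cdot \text{flag}_j$; the $\epsilon$-precision clause zeros every contribution with $\text{flag}_j = 1$, leaving precisely the weighted sum over $v_{i_1},\dots,v_{i_k}$ that $f$ applies at position $k$ of $u$, and the pass-through head preserves the auxiliary coordinates. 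After the copy layer, for each $i$ the score is uniquely maximized at $j = i_{k_i}$ with value $M$ while all other admissible scores are at most $M/2$, so the attention concentrates at the most recent $\V_1$ token; consequently $\wt X^{(L+1)}_n$ has first $d$ coordinates equal to $X^{(L)}_m$, yielding $p_{\wt f}(\cdot \mid v) = p_f(\cdot \mid u)$.

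\textbf{Main obstacle.} The principal care is realizing each score bias as a genuine $\wt\query^\top \wt\key$ inner product using the shared auxiliary coordinates: the constant-$1$ coordinate supplies the query-independence needed for the flag sink, and the $\eta$-basis supplies the position-matching needed for the copy layer, all within the $(O(1), O(\log N_{\max}))$ hidden-size budget. Handling the absence of residual connections forces the dedicated pass-through head so that the auxiliary coordinates survive every layer. Finally, the uniformity of the precision argument---that $M$ can be chosen once as a function of $f$ and $N_{\max}$ rather than per input---follows because $f$'s fixed parameter norms and depth bound the representation magnitudes across all sequences of length $\le N_{\max}$, giving a uniform score bound that $M$ need only exceed by $\log(1/\epsilon)$.
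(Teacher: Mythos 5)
Your proposal is correct and follows the same high-level strategy (block-diagonal extension of the key/query/value matrices, attention sink so $\V_1$ queries ignore $\V_0$ keys, a pass-through head that preserves auxiliary coordinates), but differs from the paper's proof in a few genuine ways. First, you replace the paper's bespoke positional-encoding lemma (Lemma~\ref{lem:positional-enc-2}, which supplies vectors $\alpha_i,\beta_0,\beta_1$ and matrices $A_0,A$ satisfying a list of carefully orchestrated dominance inequalities) with an explicit and more elementary scheme: a $\V_0$-membership flag, a constant coordinate, and a $O(\log N_{\max})$-dimensional near-orthogonal encoding $\eta(k_i)$ of the \emph{effective} $\V_1$-position, with the sink realized as the rank-one score bias $-M\cdot\text{flag}_j$. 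Second, you append an extra copy layer to route the representation of the most recent $\V_1$ token to position $n$. The paper instead stops at $L$ layers and reads $\softmax(\vartheta(y)^\top X^{(L)}_m)$ off position $n$ directly, which by its own Eq.~\eqref{eq:layer-feature-extended} only works when $v_n\in\V_1$ (otherwise the first $d$ coordinates at position $n$ are identically zero and the output is uniform); your copy layer makes the statement hold verbatim for $v_n\in\V_0$ as well, which is a cleaner treatment. A side effect of your scheme is that $\V_0$ queries now attend to $\V_1$ keys (rather than sinking to $\V_0$ as in the paper), so the first $d$ coordinates at $\V_0$ positions become nonzero; this is harmless because $\V_1$ queries never see them, but it means the uniform magnitude bound used to choose $M$ must be established for all positions rather than just $\V_1$ positions — you gesture at this but it deserves an explicit line. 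One small imprecision: the pass-through head cannot literally attend to "the self-position" using only $(\text{flag}_i,1,\eta(k_i))$, since $\eta(k_i)$ is shared across a $\V_0$-run; it suffices (and is what the natural score realizes) that it attend to the set of positions carrying identical auxiliary coordinates, since all of them store the same flag/constant/$\eta$ values. The paper avoids this subtlety by using a per-index $\alpha_i$ in its pass-through head.
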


\begin{proof}
Set constants $B_v, B_{qk}, B_\theta$ such that for any layer $l$ and head $h$, it holds that $\left\|(\query^{(l)}_h)^\top \key^{(l)}_h\right\|_2 \leq B_{qk}$, $\left\|\val^{(l)}_h\right\|_2 \leq B_{v}$,
and $\|\theta(v)\|_2 \leq B_\theta $ holds for all $v \in \V$. 
Let $B = (HB_v)^{L} B_{qk} B_\theta, C = 4B^2 + \log (1/\epsilon), C_0 = 4C$. 
By Lemma~\ref{lem:positional-enc-2}, there exists $\alpha_1,\dots,\alpha_{N_{\max}},\beta_0,\beta_1 \in \R^{d_0}$ and $A_0,A_1,A \in \R^{d_0 \times d_0}$ for $d_0 \leq O(\log N_{\max})$ such that 
\begin{enumerate}
    \item For any $i \geq j_1, j_2, j_3$:
    \begin{align}\label{eq:ex-pe-1}
        (\alpha_i + \beta_1)^\top A_0 (\alpha_{j_1} + \beta_{1}) = &~ (\alpha_i + \beta_1)^\top A_0 (\alpha_{j_2} + \beta_{1}) \geq (\alpha_i + \beta_1)^\top A_0 (\alpha_{j_1} + \beta_{0}) + C_0\notag\\
        (\alpha_{i} + \beta_0)^\top A_0 (\alpha_i + \beta_0) \geq &~ (\alpha_{i} + \beta_0)^\top A_0 (\alpha_{j_1} + \beta_1) + C_0,
    \end{align}
    \item For any $i>j$
    \begin{align}\label{eq:ex-pe-2}
        (\alpha_i + \beta_1)^\top A (\alpha_{i} + \beta_{1}) \geq &~  (\alpha_i + \beta_1)^\top A (\alpha_j + \beta_{1}) + C_0 \notag \\
\geq &~ (\alpha_i + \beta_1)^\top A (\alpha_{j} + \beta_{0})+ 2C_0,
    \end{align}
    \item For any $i \geq j, j_1$ 
    \begin{align}\label{eq:ex-pe-3}
        (\alpha_i + \beta_1)^\top A_{1} (\alpha_{j} + \beta_{0}) = &~ (\alpha_i + \beta_1)^\top A_{1} (\alpha_{j_1} + \beta_{1}) + C_0\notag\\
        (\alpha_{i} + \beta_1)^\top A_1 (\alpha_i + \beta_1) \geq &~ \max\{(\alpha_i + \beta_1)^\top A_{1} (\alpha_{j_1} + \beta_{1}),(\alpha_i + \beta_1)^\top A_{1} (\alpha_{j_1} + \beta_{0})\} + C_0.
    \end{align}
\end{enumerate}

We define $\phi$ as follows: for any Transformers 
$f = (\theta,\pe, (\key^{(l)}_{h}, \query^{(l)}_{h}, \val^{(l)}_{h})_{h \in [H],l \in [L]}, \vartheta,\V_1)$, the Transformer $\wt f = \phi(f)$ is given by 
\begin{align*}
    (\wt\theta,\wt\pe, (\wt\key^{(l)}_h, \wt\query^{(l)}_h, \wt\val^{(l)}_h)_{h \in [H+1],l \in [L]}, \wt\vartheta,\V_1\cup\V_0),
\end{align*} 
where the tokenizer is given by
\begin{align*}
    \wt\theta(v) = \mathbbm{1}(v \in \V_1) \cdot \begin{pmatrix}
        \theta(v)\\\beta_{1}
    \end{pmatrix} + \mathbbm{1}(v \in \V_0) \cdot \begin{pmatrix}
        0\\\beta_{0}
    \end{pmatrix},
\end{align*}
the positional encoder is given by
\begin{align*}
    \wt \pe\left(\begin{pmatrix}
        x\\y
    \end{pmatrix};v_1,\dots,v_i\right) = \begin{pmatrix}
        \pe\left(x;u\right)\\ \alpha_{i} + y
    \end{pmatrix},
\end{align*}
where $u = v_{i_1}\cdots v_{i_m}$ and $x \in \R^d$; for $l = 1,\dots, L$ the key, query, value matrices are given by
\begin{align*}
    &~ \wt \key^{(l)}_{h} = \begin{pmatrix}
        \key^{(l)}_{h} & \\ & A_0
    \end{pmatrix}, ~\wt \query^{(l)}_{h} = \begin{pmatrix}
        \query^{(l)}_{h} & \\& I
    \end{pmatrix},\\
    &~ \wt \val^{(l)}_{h} = \begin{pmatrix}
        \val^{(l)}_{h} & \\
        & 0
    \end{pmatrix},\\
    &~ \wt \key^{(l)}_{H+1} = \begin{pmatrix}
         0 & \\
        & A
    \end{pmatrix}, ~\wt \query^{(l)}_{H+1} = \begin{pmatrix}
        0 & \\
        & I
    \end{pmatrix}, ~ \wt \val^{(l)}_{H+1} = \begin{pmatrix}
        0 & \\
        & I
    \end{pmatrix}       .               
\end{align*}

The output feature is given by $\wt \vartheta(y) = \begin{pmatrix}
    \vartheta(y) \\0
\end{pmatrix}$. 
Since $i_1,\dots,i_m$ only depends on whether $v_i$'s belong to the set $\V_1$, the generalized position encoding $\pe$ is well-defined. 
It can be verified that $\phi$ is indeed a general-purpose Transformer of type $(O(1), O(\log N_{\max}))$.

We show that for any $l = 1,\dots, L$,
\begin{align}\label{eq:layer-feature-ex}
    \wt X^{(l)}_i = \begin{pmatrix}
        X^{(l)}_{i}\\ \wt \alpha_i
    \end{pmatrix},~ \forall i = i_1,\dots,i_m
\end{align}
where $X^{(l)}_{i}$ is the $l$-th layer of Transformer $f$ at position $i$ (attending only to positions $i_1,\dots,i_m$) such that 
\begin{align}\label{eq:layer-bound-feature-ext}
    \|X^{(l)}_{i}\|_2 \leq B_\theta (HB_v)^l,
\end{align}
and
\begin{align}\label{eq:layer-feature-extended}
    \wt X^{(l)}_{j} = \begin{pmatrix}
        0 \\\wt \alpha_{j}
    \end{pmatrix},~\forall j \notin \{i_1,\dots,i_m\}
\end{align}
where $\wt \alpha_i = \alpha_i + \mathbbm{1}(v \in \V_0) \cdot \beta_{0} + \mathbbm{1}(v \in \V_1) \cdot \beta_{1}$.

We prove these results by induction. The case $l=1$ folows directly from the definitions of the tokenizer. 

\paragraph{Prove Eq.~\eqref{eq:layer-feature-ex}.}
Suppose Eq.~\eqref{eq:layer-feature-ex} and Eq.~\eqref{eq:layer-feature-extended} hold for $1,\dots, l-1$-th layer, and consider $l$-the layer. We have
\begin{align*}
    \wt X^{(l+1)}_{i} = &~ \underbrace{\sum_{h=1}^H \sum_{j=1}^i \frac{\exp\left((\wt \query^{(l)}_{h} \wt X^{(l)}_i)^\top (\wt \key^{(l)}_{h} \wt X^{(l)}_j)\right)}{\wt Z^{(l)}_{h}} \cdot \wt \val^{(l)}_{h} \wt X^{(l)}_j}_{\text{term 1}}\\
    &~ + \underbrace{\sum_{j=1}^i \frac{\exp\left((\wt \query^{(l)}_{H+1} \wt X^{(l)}_i)^\top (\wt \key^{(l)}_{H+1} \wt X^{(l)}_j)\right)}{\wt Z^{(l)}_{H+1}} \cdot \wt \val^{(l)}_{H+1} \wt X^{(l)}_j}_{\text{term 2}}.
\end{align*}
Eq.~\eqref{eq:ex-pe-1} ensures that 
for any $i,i' \in \{i_1,\dots,i_m\}, j \notin \{i_1,\dots,i_m\}$:
\begin{align*}
     (\wt \query^{(l)}_{h} \wt X^{(l)}_i)^\top (\wt \key^{(l)}_{h} \wt X^{(l)}_{i'}) = &~ (\query^{(l)}_{h} \wt X^{(l)}_i)^\top (\key^{(l)}_{h} \wt X^{(l)}_{i'}) + (\alpha_i + \beta_{1})^\top A_0 (\alpha_{i'} + \beta_{1}) \\
     \geq &~ (\query^{(l)}_{h} X^{(l)}_i)^\top (\key^{(l)}_{h} X^{(l)}_{j}) + (\alpha_i + \beta_{1})^\top A_0 (\alpha_{j} + \beta_{0}) + C\\
     = &~ (\wt \query^{(l)}_{h} \wt X^{(l)}_i)^\top (\wt \key^{(l)}_{h} \wt X^{(l)}_{j}) + C,
\end{align*}
and if $i, j_1,j_2 \in \{i_1,\dots,i_m\}$
\begin{align*}
     &~ (\wt \query^{(l)}_{h} \wt X^{(l)}_i)^\top (\wt \key^{(l)}_{h} \wt X^{(l)}_{j_1}) - (\wt \query^{(l)}_{h} \wt X^{(l)}_i)^\top (\wt \key^{(l)}_{h} \wt X^{(l)}_{j_2}) \\
     = &~ (\query^{(l)}_{h} X^{(l)}_i)^\top (\key^{(l)}_{h} X^{(l)}_{j_1}) + (\alpha_i + \beta_{1})^\top A_0 (\alpha_{j_1} + \beta_{1)}) -  (\query^{(l)}_{h}  X^{(l)}_i)^\top (\key^{(l)}_{h} X^{(l)}_{j_2}) - (\alpha_i + \beta_{1})^\top A_0 (\alpha_{j_2} + \beta_{1}) \\
     = &~  (\query^{(l)}_{h} X^{(l)}_i)^\top (\key^{(l)}_{ h} \wt X^{(l)}_{j_1}) -  (\query^{(l)}_{ h} \wt X^{(l)}_i)^\top (\key^{(l)}_{h} X^{(l)}_{j_2}),
\end{align*}
where we use the fact that $C_0
\geq C + 2\max_{h,l,i,j} \left| (\query^{(l)}_{h} X^{(l)}_{i})^\top (\key^{(l)}_{h} X^{(l)}_{j})\right| $. 
Since the transformers have precision $\epsilon$ and $C \geq 2\max_{h,l,i,j} \left| (\query^{(l)}_{h} X^{(l)}_{i})^\top (\key^{(l)}_{h} X^{(l)}_{j})\right| + \log(1/\epsilon)$, it follows that the attention weights of head $(k-1)H+h$ is identical to the attention weights of expert $k$, i.e.
\begin{align*}
    \frac{\exp\left((\wt \query^{(l)}_{h} \wt X^{(l)}_i)^\top (\wt \key^{(l)}_{h} \wt X^{(l)}_j)\right)}{\wt Z^{(l)}_{h}} = \mathbbm{1}(j \in \{i_1,\dots,i_m\}) \cdot \frac{\exp\left((\query^{(l)}_{h} X^{(l)}_i)^\top (\key^{(l)}_{h} X^{(l)}_j)\right)}{Z^{(l)}_{h}}.
\end{align*}
Therefore
\begin{align*}
    \text{term 1} = \sum_{h=1}^H \sum_{j=i_1,\dots,i_m} \frac{\exp\left((\query^{(l)}_{h} X^{(l)}_i)^\top (\key^{(l)}_{h} X^{(l)}_j)\right)}{Z^{(l)}_{h}} \cdot \begin{pmatrix}\val^{(l)}_{h} X^{(l)}_{j} \\ 0
    \end{pmatrix}= \begin{pmatrix}X^{(l+1)}_{j} \\ 0
    \end{pmatrix}.
\end{align*}
Furthermore, by Eq.~\eqref{eq:ex-pe-2} we have for any $j < i$
\begin{align*}
    (\wt \query^{(l)}_{H+1} \wt X^{(l)}_i)^\top (\wt \key^{(l)}_{H+1} \wt X^{(l)}_i) = &~ \wt\alpha_{i}^\top A \wt\alpha_{i}\\
    \geq &~ \wt\alpha_{i}^\top A \wt\alpha_{j} + C\\
    = &~ (\wt \query^{(l)}_{H+1} \wt X^{(l)}_i)^\top (\wt \key^{(l)}_{H+1} \wt X^{(l)}_j) + C,
\end{align*}
and hence the attention weighs concentrates on $i$ itself. Thus
\begin{align*}
    \text{term 2} =\begin{pmatrix}
        0 & \\
        & I
    \end{pmatrix} \cdot \begin{pmatrix}
        X^{(l)}_{i}\\ \wt \alpha_i
    \end{pmatrix} = \begin{pmatrix}
        0 \\ \wt \alpha_i
    \end{pmatrix}.
\end{align*}
Combining, we derive Eq.\eqref{eq:layer-feature-ex} for $(l+1)$-th layer. 

\paragraph{Prove Eq.~\eqref{eq:layer-bound-feature-ext}.}
From above,
\begin{align*}
    \|X^{(l+1)}_{i}\|_2 = &~ \left\|\sum_{h=1}^H \sum_{j=1}^i \frac{\exp\left((\wt\query^{(l)}_{h} \wt X^{(l)}_{i})^\top (\wt\key^{(l)}_{h} \wt X^{(l)}_{j})\right)}{\wt Z^{(l)}_{h}} \cdot \val^{(l)}_{h} X^{(l)}_{j}\right\|_2\\
    \leq &~ HB_v \cdot \max_{j\leq i} \|X^{(l)}_{j}\|_2\\
    \leq &~ B_\theta (HB_v)^{l+1}.
\end{align*}
This confirms Eq.~\eqref{eq:layer-bound-feature} for $l+1$.

\paragraph{Prove Eq.~\eqref{eq:layer-feature-extended}.}

Notice that Eq.~\eqref{eq:ex-pe-1} ensures that for any $j,j' \notin \{i:v_i \in \V_1\}$ and $i \in \{i:v_i \in \V_1\}$:
\begin{align*}
     (\wt \query^{(l)}_{h} \wt X^{(l)}_{j})^\top (\wt \key^{(l)}_{h} \wt X^{(l)}_{j'}) = &~ (\query^{(l)}_{h}  X^{(l)}_{j})^\top (\key^{(l)}_{h} X^{(l)}_{j'}) + (\alpha_{j} + \beta_{0})^\top A_0 (\alpha_{j'} + \beta_{0}) \\
     \geq &~ (\query^{(l)}_{h} X^{(l)}_{j})^\top (\key^{(l)}_{h} X^{(l)}_{i}) + (\alpha_{j} + \beta_{0})^\top A_0 (\alpha_{i} + \beta_{1}) + C\\
     = &~ (\wt \query^{(l)}_{h} \wt X^{(l)}_{j})^\top (\wt \key^{(l)}_{h} \wt X^{(l)}_{i}) + C.
\end{align*}
It follows that the attention weights is concentrated on the compliment of $\{i:v_i \in \V_1\}$ itself, and therefore Eq.~\eqref{eq:layer-feature-extended} follows by a simple induction argument.

Finally, at the output layer
\begin{align*}
    p_{\wt f}(y|v_1,\dots,v_n) = &~ \softmax(\wt \vartheta(y)^\top \wt X^{(L)}_n)\\
    = &~ \softmax(\vartheta(y)^\top X^{(L)}_{m})\\
    = &~ p_{f_{\kappa}}(y|u).
\end{align*}
This establishes the desired statement.
\end{proof}

Now we return to the proof of Proposition~\ref{prop:repre-multi}.

\begin{proof}
By Proposition~\ref{prop:repre-ext}, it suffices to construct general-purpose Transformer $\phi$ such that
\begin{align*}
    p_{\wt f}(\cdot |v) = p_{f_\kappa}(\cdot |u),
\end{align*}
where $u = v_{1}\cdots v_{i_0-1}v_{i_0+1}\cdots v_n$, because then the $\wt \phi$ given by
\begin{align*}
    \wt \phi (f_1,\dots,f_K) = \phi (\phi_e(f_1),\dots,\phi_e(f_K))
\end{align*}
satisfies the requirement, where $\phi_e$ is the general-purpose Transformer that extends the $K$ Transformers to the larger vocabulary $\V := \cup_{k=1}^K \V_k$ as given by Proposition~\ref{prop:repre-ext}.

Set constants $B_v, B_{qk}, B_\theta$ such that for any layer $l$ and head $h$, it holds that $\left\|(\query^{(l)}_h)^\top \key^{(l)}_h\right\|_2 \leq B_{qk}$, $\left\|\val^{(l)}_h\right\|_2 \leq B_{v}$,
and $\|\theta(v)\|_2 \leq B_\theta $ holds for all $v \in \V$. Let $B = (KHB_v)^{L} B_{qk} B_\theta, C = 4B^2 + \log (1/\epsilon), C_0 = 4C$. 
By Lemma~\ref{lem:positional-enc-2}, there exists $\alpha_1,\dots,\alpha_N,\beta_0,\beta_1,\dots,\beta_K \in \R^{d_0}$ and $A_1,\dots,A_K \in \R^{{d_0} \times {d_0}}$ for ${d_0} \leq O(K + \log N_{\max})$ such that 
\begin{enumerate}
    \item For any $i \geq j_1, j_2, j_3$ and $k,k',k'' \neq 0$:
    \begin{align}\label{eq:multi-pe-1}
        (\alpha_i + \beta_k)^\top A_0 (\alpha_{j_1} + \beta_{k'}) = &~ (\alpha_i + \beta_k)^\top A_0 (\alpha_{j_2} + \beta_{k''}) \geq (\alpha_i + \beta_k)^\top A_0 (\alpha_{j_1} + \beta_{0}) + C_0\notag\\
        (\alpha_{i} + \beta_0)^\top A_0 (\alpha_i + \beta_0) \geq &~ (\alpha_{i} + \beta_0)^\top A_0 (\alpha_{j_1} + \beta_k) + C_0,
    \end{align}
    \item For any $i>j$ and $k \neq k' \neq 0$
    \begin{align}\label{eq:multi-pe-2}
        (\alpha_i + \beta_k)^\top A (\alpha_{i} + \beta_{k}) \geq &~ (\alpha_i + \beta_k)^\top A (\alpha_j + \beta_{k'}) + C_0\notag\\
        \geq &~ (\alpha_i + \beta_k)^\top A (\alpha_{j} + \beta_{0})+ 2C_0,
    \end{align}
    \item For any $i \geq j, j_1$ and $k \neq k', k''$
    \begin{align}\label{eq:multi-pe-3}
        (\alpha_i + \beta_k)^\top A_{k'} (\alpha_{j} + \beta_{0}) = &~ (\alpha_i + \beta_k)^\top A_{k'} (\alpha_{j_1} + \beta_{k''}) + C_0\notag\\
        (\alpha_{i} + \beta_k)^\top A_k (\alpha_i + \beta_k) \geq &~ \max\{(\alpha_i + \beta_k)^\top A_{k} (\alpha_{j_1} + \beta_{k''}),(\alpha_i + \beta_k)^\top A_{k'} (\alpha_{j_1} + \beta_{0})\} + C_0,
    \end{align}
\end{enumerate}

We define $\phi$ as follows: for any Transformers 
\begin{align*}
    f_k = (\theta_k,\pe_k, (\key^{(l)}_{k;h}, \query^{(l)}_{k;h}, \val^{(l)}_{k;h})_{h \in [H],l \in [L]}, \vartheta_k,\V_k),
\end{align*}
over $\V_k,~ k \in [K]$, the Transformer $\wt f = \phi(f_1,\dots,f_K)$ is given by 
\begin{align*}
    (\wt\theta,\wt\pe, (\wt\key^{(l)}_h, \wt\query^{(l)}_h, \wt\val^{(l)}_h)_{h \in [KH+1],l \in [L+1]}, \wt\vartheta,\V),
\end{align*} 
where the tokenizer is given by
\begin{align*}
    \wt\theta(v) = \mathbbm{1}(v \notin \V_0) \cdot \begin{pmatrix}
        \theta_1(v)\\\vdots \\\theta_K(v)\\0
    \end{pmatrix} + \begin{pmatrix}
        0\\\vdots \\0\\\beta_{\cE(v)}
    \end{pmatrix},
\end{align*}
where $\cE(v) = k$ iff $v \in \V_k$.  
Let the positional encoder be given by
\begin{align*}
    \wt \pe\left(\begin{pmatrix}
        x\\y
    \end{pmatrix};v_1,\dots,v_i\right) = \begin{pmatrix}
        \pe_1\left(x;u\right)\\ \vdots\\ \pe_K\left(x;u\right)\\ \alpha_{i} + y
    \end{pmatrix},
\end{align*}
where $x \in \R^d$ and $u$ is the sub-sequence of $v$ that omits $v_{i_0}$ (if any); for $l = 1,\dots, L$ the key, query, value matrices are given by
\begin{align*}
    &~ \wt \key^{(l)}_{(k-1)H + h} = \begin{pmatrix}
        0 & & & & \\
        & \ddots & & & \\
        & & \key^{(l)}_{k;h} & & \\
        & & & \ddots & \\
        & & & & A_0
    \end{pmatrix}, ~\wt \query^{(l)}_{(k-1)H + h} = \begin{pmatrix}
        0 & & & & \\
        & \ddots & & & \\
        & & \query^{(l)}_{k;h} & & \\
        & & & \ddots & \\
        & & & & I
    \end{pmatrix},\\
    &~ \wt \val^{(l)}_{(k-1)H + h} = \begin{pmatrix}
        0 & & & & \\
        & \ddots & & & \\
        & & \val^{(l)}_{k;h} & & \\
        & & & \ddots & \\
        & & & & 0
    \end{pmatrix},                    
\end{align*}
\begin{align*}
    \wt \key^{(l)}_{KH + 1} = \begin{pmatrix}
        0 & & & \\
        & \ddots & & \\
        & & 0 & \\
        & & & A
    \end{pmatrix}, ~\wt \query^{(l)}_{KH + 1} = \begin{pmatrix}
        0 & & & \\
        & \ddots & & \\
        & & 0 & \\
        & & & I
    \end{pmatrix}, ~ \wt \val^{(l)}_{KH + 1} = \begin{pmatrix}
        0 & & & \\
        & \ddots & & \\
        & & 0 & \\
        & & & I
    \end{pmatrix}   ,                   
\end{align*}
where the submatrices $\key^{(l)}_{k;h},\query^{(l)}_{k;h},\val^{(l)}_{k;h}$ are located in the $k$-th diagonal block, 
and for the final layer
\begin{align*}
    &~ \wt \key^{(L + 1)}_{k} = \begin{pmatrix}
        0 & & & \\
        & \ddots & & \\
        & & 0 & \\
        & & & A_k
    \end{pmatrix}, ~\wt \query^{(L + 1)}_{k} = \begin{pmatrix}
        0 & & & \\
        & \ddots & & \\
        & & 0 & \\
        & & & I
    \end{pmatrix}, ~ \wt \val^{(L + 1)}_{k} = \begin{pmatrix}
        0 & & & & \\
        & \ddots & & & \\
        & & I & & \\
        & & & \ddots & \\
        & & & & 0
    \end{pmatrix},
\end{align*}
where the identity sub-matrix in $\wt \val^{(L + 1)}_{k}$ is located in the $k$-th block. The output feature is given by $\wt \vartheta(y) = \begin{pmatrix}
    \vartheta_1(y) \\ \vdots \\\vartheta_K(y) \\ 0
\end{pmatrix}$. 
Since $u^{(k)}$'s only depend on set membership information of $v_i$'s, the generalized position encoding $\pe$ is well-defined. 
We can easily verify that $\phi$ is indeed a general-purpose Transformer of type $(O(K),O(\log N_{\max}))$.

We show that for any $l = 1,\dots, L$,
\begin{align}\label{eq:layer-feature-multi}
    \wt X^{(l)}_i = \begin{pmatrix}
        X^{(l)}_{1;i} \\ \vdots \\ X^{(l)}_{K;i}\\ \wt \alpha_i
    \end{pmatrix},~ \forall i \neq i_0
\end{align}
where $X^{(l)}_{k;i}$ is the $l$-th layer of Transformer $k$ at position $i$ (attending to all positions but $i_0$) such that 
\begin{align}\label{eq:layer-bound-feature-multi}
    \|X^{(l)}_{k;i}\|_2 \leq B_\theta (KHB_v)^l .
\end{align}
and
\begin{align}\label{eq:layer-feature-dummy}
    \wt X^{(l)}_{i_0} = \begin{pmatrix}
        0 \\ \vdots \\ 0\\ \wt \alpha_{i_0}
    \end{pmatrix}
\end{align}
where $\wt \alpha_i = \alpha_i + \beta_{\cE(v_i)}$.

We prove these results by induction. The case $l=1$ folows directly from the definitions of the tokenizer. 

\paragraph{Prove Eq.~\eqref{eq:layer-feature-multi}.}
Suppose Eq.~\eqref{eq:layer-feature-multi} and Eq.~\eqref{eq:layer-feature-dummy} hold for $1,\dots, l-1$=th layer, and consider $l$-the layer. We have
\begin{align*}
    \wt X^{(l+1)}_{i} = &~ \underbrace{\sum_{k=1}^K \sum_{h=1}^H \sum_{j=1}^i \frac{\exp\left((\wt \query^{(l)}_{(k-1)H + h} \wt X^{(l)}_i)^\top (\wt \key^{(l)}_{(k-1)H + h} \wt X^{(l)}_j)\right)}{\wt Z^{(l)}_{(k-1)H + h}} \cdot \wt \val^{(l)}_{(k-1)H + h} \wt X^{(l)}_j}_{\text{term 1}}\\
    &~ + \underbrace{\sum_{j=1}^i \frac{\exp\left((\wt \query^{(l)}_{KH + 1} \wt X^{(l)}_i)^\top (\wt \key^{(l)}_{KH + 1} \wt X^{(l)}_j)\right)}{\wt Z^{(l)}_{KH + 1}} \cdot \wt \val^{(l)}_{KH + 1} \wt X^{(l)}_j}_{\text{term 2}}.
\end{align*}
Eq.~\eqref{eq:multi-pe-1} ensures that 
for any $j_1 < j_2 \leq i$ such that $i_0 \notin \{i,j_1,j_2\}$:
\begin{align*}
     (\wt \query^{(l)}_{(k-1)H + h} \wt X^{(l)}_i)^\top (\wt \key^{(l)}_{(k-1)H + h} \wt X^{(l)}_{j_1}) = &~ (\query^{(l)}_{k;h} X^{(l)}_{k;i})^\top (\key^{(l)}_{k;h} X^{(l)}_{k;j_1}) + (\alpha_i + \beta_{\cE(i)})^\top A_0 (\alpha_{j_1} + \beta_{\cE({j_1})}) \\
     \geq &~ (\query^{(l)}_{k;h} X^{(l)}_{k;i})^\top (\key^{(l)}_{k;h}  X^{(l)}_{k;j_1}) + (\alpha_i + \beta_{\cE(i)})^\top A_0 (\alpha_{i_0} + \beta_{\cE({i_0})}) + C\\
     = &~ (\wt \query^{(l)}_{(k-1)H + h} \wt X^{(l)}_i)^\top (\wt \key^{(l)}_{(k-1)H + h} \wt X^{(l)}_{i_0}) + C.
\end{align*}
and
\begin{align*}
     &~ (\wt \query^{(l)}_{(k-1)H + h} \wt X^{(l)}_i)^\top (\wt \key^{(l)}_{(k-1)H + h} \wt X^{(l)}_{j_1}) - (\wt \query^{(l)}_{(k-1)H + h} \wt X^{(l)}_i)^\top (\wt \key^{(l)}_{(k-1)H + h} \wt X^{(l)}_{j_2}) \\
     = &~ (\query^{(l)}_{k;h}  X^{(l)}_{k;i})^\top (\key^{(l)}_{k;h}  X^{(l)}_{k;j_1}) + (\alpha_i + \beta_{\cE(i)})^\top A_0 (\alpha_{j_1} + \beta_{\cE({j_1})}) \\
     &~ -  (\query^{(l)}_{k;h}  X^{(l)}_{k;i})^\top (\key^{(l)}_{k;h} X^{(l)}_{k;j_2}) - (\alpha_i + \beta_{\cE(i)})^\top A_0 (\alpha_{j_2} + \beta_{\cE({j_2})}) \\
     = &~  (\query^{(l)}_{k;h} X^{(l)}_{k;i})^\top (\key^{(l)}_{k;h}  X^{(l)}_{k;j_1}) -  (\query^{(l)}_{k;h}  X^{(l)}_{k;i})^\top (\key^{(l)}_{k;h}  X^{(l)}_{k;j_2}).
\end{align*}

It follows from the precision $\epsilon$ of the transformers that the attention weights of head $(k-1)H+h$ is identical to the attention weights of expert $k$, i.e.
\begin{align*}
    \frac{\exp\left((\wt \query^{(l)}_{(k-1)H + h} \wt X^{(l)}_i)^\top (\wt \key^{(l)}_{(k-1)H + h} \wt X^{(l)}_j)\right)}{\wt Z^{(l)}_{(k-1)H + h}} = \frac{\exp\left((\query^{(l)}_{k;h}  X^{(l)}_{k;i})^\top (\key^{(l)}_{k;h}  X^{(l)}_{k;j})\right)}{Z^{(l)}_{k;h}}.
\end{align*}
Therefore
\begin{align*}
    \text{term 1} = \sum_{k=1}^K \sum_{h=1}^H \sum_{j=1}^i \frac{\exp\left((\query^{(l)}_{k;h}  X^{(l)}_{k;i})^\top (\key^{(l)}_{k;h}  X^{(l)}_{k;j})\right)}{Z^{(l)}_{k;h}} \cdot \begin{pmatrix}
        0 \\\vdots \\ \val^{(l)}_{k;h} X^{(l)}_{k;j} \\ \vdots \\ 0
    \end{pmatrix}= \begin{pmatrix}
        X^{(l)}_{1;i} \\ \vdots \\ X^{(l)}_{K;i}\\ 0
    \end{pmatrix}.
\end{align*}
Furthermore, by Eq.~\eqref{eq:multi-pe-2} we have for any $j < i$
\begin{align*}
    (\wt \query^{(l)}_{KH + 1} \wt X^{(l)}_i)^\top (\wt \key^{(l)}_{KH + 1} \wt X^{(l)}_i) = &~ \wt\alpha_{i}^\top A \wt\alpha_{i}\\
    \geq &~ \wt\alpha_{i}^\top A \wt\alpha_{j} + C\\
    = &~ (\wt \query^{(l)}_{KH + 1} \wt X^{(l)}_i)^\top (\wt \key^{(l)}_{KH + 1} \wt X^{(l)}_j) + C
\end{align*}
and hence the attention weighs concentrates on $i$ itself. Thus
\begin{align*}
    \text{term 2} =\begin{pmatrix}
        0 & & & \\
        & \ddots & & \\
        & & 0 & \\
        & & & I
    \end{pmatrix} \cdot \begin{pmatrix}
        X^{(l)}_{1;i} \\ \vdots \\ X^{(l)}_{K;i}\\ \wt \alpha_i
    \end{pmatrix} = \begin{pmatrix}
        0 \\ \vdots \\ 0\\ \wt \alpha_i
    \end{pmatrix}
\end{align*}
Combining these two terms, we confirm that Eq.\eqref{eq:layer-feature-multi} holds for $(l+1)$-th layer. 

\paragraph{Prove Eq.~\eqref{eq:layer-bound-feature-multi}.}
From above,
\begin{align*}
    \|X^{(l+1)}_{k;i}\|_2 = &~ \left\|\sum_{k=1}^K \sum_{h=1}^H \sum_{j=1}^i \frac{\exp\left((\wt\query^{(l)}_{(k-1)H+h} \wt X^{(l)}_{i})^\top (\wt\key^{(l)}_{(k-1)H+h} \wt X^{(l)}_{j})\right)}{\wt Z^{(l)}_{(k-1)H + h}} \cdot \val^{(l)}_{k;h} X^{(l)}_{k;j}\right\|_2\\
    \leq &~ KHB_v \cdot \max_{j\leq i} \|X^{(l)}_{k;j}\|_2\\
    \leq &~ B_\theta (KHB_v)^{l+1}.
\end{align*}
This confirms Eq.~\eqref{eq:layer-bound-feature-multi} for $l+1$.

\paragraph{Prove Eq.~\eqref{eq:layer-feature-dummy}.}

Notice that Eq.~\eqref{eq:multi-pe-1} ensures that for any $j \leq i_0$:
\begin{align*}
     (\wt \query^{(l)}_{(k-1)H + h} \wt X^{(l)}_{i_0})^\top (\wt \key^{(l)}_{(k-1)H + h} \wt X^{(l)}_{i_0}) = &~ (\query^{(l)}_{k;h} X^{(l)}_{k;i_0})^\top (\key^{(l)}_{k;h} X^{(l)}_{k;i_0}) + (\alpha_{i_0} + \beta_{\cE({i_0})})^\top A_0 (\alpha_{i_0} + \beta_{\cE({i_0})}) \\
     \geq &~ (\query^{(l)}_{k;h} X^{(l)}_{k;i_0})^\top (\key^{(l)}_{k;h} X^{(l)}_{k;j}) + (\alpha_{i_0} + \beta_{\cE({i_0})})^\top A_0 (\alpha_{j} + \beta_{\cE({j})}) + C\\
     = &~ (\wt \query^{(l)}_{(k-1)H + h} \wt X^{(l)}_{i_0})^\top (\wt \key^{(l)}_{(k-1)H + h} \wt X^{(l)}_{j}) + C.
\end{align*}
It follows that the attention weights of head $(k-1)H+h$ is concentrated on $i_0$ itself, therefore
\begin{align*}
    \text{term 1} = \sum_{k=1}^K \sum_{h=1}^H  \begin{pmatrix}
        0 \\\vdots \\ \val^{(l)}_{k;h} \cdot 0 \\ \vdots \\ 0
    \end{pmatrix} = 0.
\end{align*}

By the same argument, for $i = i_0$ we have
\begin{align*}
    \text{term 2} =\begin{pmatrix}
        0 & & & \\
        & \ddots & & \\
        & & 0 & \\
        & & & I
    \end{pmatrix} \cdot \begin{pmatrix}
        0 \\ \vdots \\ 0\\ \wt \alpha_{i_0}
    \end{pmatrix} = \begin{pmatrix}
        0 \\ \vdots \\ 0\\ \wt \alpha_{i_0}
    \end{pmatrix}.
\end{align*}
Combining these confirms Eq.~\eqref{eq:layer-feature-dummy}. 

Next, we show that the last layer satisfies
\begin{align}\label{eq:last-layer-feature}
    \wt X^{(L+1)}_n = \begin{pmatrix}
      0\\  \vdots \\ X^{(L+1)}_{\kappa;n} \\ \vdots \\0
    \end{pmatrix}
\end{align}
where $X^{(L+1)}_{\kappa;n}$ is the $\kappa$-th block. To see this, we notice that Eq.~\eqref{eq:multi-pe-3} implies the followings (the proofs are identical to the above):
\begin{enumerate}
    \item Attention sink to dummny token $v_{i_0}$ for mismatch expert: for any $k' \neq \kappa$ and $j \leq n$ we have
    \begin{align}\label{eq:mismatch-expert-attention-sink-dummy}
        (\wt \query^{(L)}_{(k'-1)H + h} \wt X^{(L)}_n)^\top (\wt \key^{(L)}_{(k'-1)H + h} \wt X^{(L)}_{j}) = &~ (\alpha_{n} + \beta_{\cE({n})})^\top A_{k'} (\alpha_{j} + \beta_{\cE({j})}) \notag \\
        \leq &~ (\alpha_{n} + \beta_{\cE({n})})^\top A_{k'} (\alpha_{i_0} + \beta_{\cE({i_0})}) - C \notag \\
        = &~ (\wt \query^{(L)}_{(k'-1)H + h} \wt X^{(L)}_n)^\top (\wt \key^{(L)}_{(k'-1)H + h} \wt X^{(L)}_{i_0}) - C.
    \end{align}
    \item Attention to oneself for matching expert: for any $j \neq i_0$ we have
    \begin{align}\label{eq:matching-expert-attention-dominate-dummy-1}
        (\wt \query^{(L)}_{(\kappa-1)H + h} \wt X^{(L)}_n)^\top (\wt \key^{(L)}_{(\kappa-1)H + h} \wt X^{(L)}_j) = &~ (\alpha_{n} + \beta_{\cE({n})})^\top A_{\kappa} (\alpha_{j} + \beta_{\cE({j})}) \notag \\
        \geq &~ (\alpha_{n} + \beta_{\cE({n})})^\top A_{\kappa} (\alpha_{i_0} + \beta_{\cE({i_0})}) + C \notag \\
        = &~  (\wt \query^{(L)}_{(\kappa-1)H + h} \wt X^{(L)}_n)^\top (\wt \key^{(L)}_{(\kappa-1)H + h} \wt X^{(L)}_{i_0}) + C,
    \end{align}
    and
    \begin{align}\label{eq:matching-expert-attention-dominate-dummy-2}
        (\wt \query^{(L)}_{(\kappa-1)H + h} \wt X^{(L)}_n)^\top (\wt \key^{(L)}_{(\kappa-1)H + h} \wt X^{(L)}_n) = &~ (\alpha_{n} + \beta_{\cE({n})})^\top A_{\kappa} (\alpha_{n} + \beta_{\cE({n})}) \notag \\
        \geq &~ (\alpha_{n} + \beta_{\cE({n})})^\top A_{\kappa} (\alpha_{j} + \beta_{\cE({j})}) + C \notag \\
        = &~  (\wt \query^{(L)}_{(\kappa-1)H + h} \wt X^{(L)}_n)^\top (\wt \key^{(L)}_{(\kappa-1)H + h} \wt X^{(L)}_{j}) + C.
    \end{align}
\end{enumerate}
Combining Eq.~\eqref{eq:mismatch-expert-attention-sink-dummy}, Eq.~\eqref{eq:matching-expert-attention-dominate-dummy-1}, and Eq.~\eqref{eq:matching-expert-attention-dominate-dummy-2}, we have
\begin{align*}
    \frac{\exp\left((\wt \query^{(L)}_{(k-1)H + h} \wt X^{(L)}_n)^\top (\wt \key^{(L)}_{(k-1)H + h} \wt X^{(L)}_j)\right)}{Z^{(l)}_{k}} = \begin{cases}
        \delta^{i_0}_j, &~ k \neq \kappa\\
        \delta^{n}_j, &~ k = \kappa
    \end{cases}
\end{align*}
It follows that
\begin{align*}
    \wt X^{(L+1)}_n = &~  \wt \val^{(L)}_{(\kappa-1)H + h} \cdot \wt X^{(L)}_n + \sum_{k \neq \kappa} \val^{(L)}_{(\kappa-1)H + h} \cdot \wt X^{(L)}_{i_0}  \\
    = &~ \begin{pmatrix}
        0 & & & & \\
        & \ddots & & & \\
        & & I & & \\
        & & & \ddots & \\
        & & & & 0
    \end{pmatrix} \cdot \begin{pmatrix}
        X^{(L)}_{1;i} \\ \vdots \\ X^{(L)}_{K;i}\\ \wt \alpha_i
    \end{pmatrix} = \begin{pmatrix}
      0\\  \vdots \\ X^{(L)}_{\kappa;n} \\ \vdots \\0
    \end{pmatrix}.
\end{align*}
Therefore we establish Eq.~\eqref{eq:last-layer-feature}. 

Finally, at the output layer
\begin{align*}
    p_{\wt f}(y|v_1,\dots,v_n) = &~ \softmax(\wt \vartheta(y)^\top \wt X^{(L+1)}_n)\\
    = &~ \softmax(\vartheta(y)^\top Y^{(L)}_{n-1})\\
    = &~ p_{f_{\kappa}}(y|u).
\end{align*}
This establishes the desired statement.
\end{proof}

\subsection{Proof of Proposition~\ref{prop:repre-same}}
\begin{proof}
Set constants $B_v, B_{qk}, B_\theta$ such that for any layer $l$ and head $h$, it holds that $\left\|(\query^{(l)}_h)^\top \key^{(l)}_h\right\|_2 \leq B_{qk}$, $\left\|\val^{(l)}_h\right\|_2 \leq B_{v}$,
and $\|\theta(v)\|_2 \leq B_\theta $ holds for all $v \in \V$. 
Let $B = (KHB_v)^{L} B_{qk} B_\theta, C = 2B^2 + \log (1/\epsilon), C_0 = 4C$. 
Define $\iota(i) = u$ iff $\xi_u \leq i < \xi_{u+1}$ ($\xi_0 = -1, \xi_{m+1} = \infty$ by default). 
Let $\cE(\cdot)$ denote the task id indicated by the special token. 
By Lemma~\ref{lem:positional-enc}, there exists $\alpha_1,\dots,\alpha_N,\beta_1,\dots,\beta_K \in \R^{d_0}$ and $A_1,\dots,A_K \in \R^{{d_0} \times {d_0}}$ for ${d_0} \leq O(K + \log N_{\max})$ such that for any $n \leq N$ we have
\begin{enumerate}
    \item For any $k \neq k'$:
    \begin{align}\label{eq:pe_property_1}
        \alpha_n^\top A_k (\alpha_n + \beta_{k'}) \geq C_0 + \begin{cases}
            \alpha_n^\top A_k \alpha_n \\
            \alpha_n^\top A_k \alpha_j\\
            \alpha_n^\top A_k (\alpha_j + \beta_{k''})
        \end{cases}, ~\forall 0 \leq j \leq n, 1 \leq k''\leq K.
    \end{align}
    \item For any $k \in [K]$:
    \begin{align}\label{eq:pe_property_2}
        \alpha_n^\top A_k \alpha_n = \alpha_n^\top A_k \alpha_0 \geq C_0 + \begin{cases}
            \alpha_n^\top A_k (\alpha_n + \beta_{k})  \\
            \alpha_n^\top A_k \alpha_j\\
            \alpha_n^\top A_k (\alpha_j + \beta_{k'})
        \end{cases}, ~\forall 0 < j < n, k' \neq k.
    \end{align}
    \item For any $k, k', k'' \in [K]$:
    \begin{align}\label{eq:pe_property_sink}
        (\alpha_n + \beta_{k'})^\top A_k (\alpha_n + \beta_{k'}) \geq C_0 +
            (\alpha_n + \beta_{k'})^\top A_k \alpha_j,  ~\forall 0 \leq j \leq n.
    \end{align}
    \item For any $0 < j < n$:
    \begin{align}\label{eq:pe_property_3}
        \alpha_n^\top A \alpha_n \geq &~ \alpha_n^\top A (\alpha_n + \beta_{k}) + C_0 \notag \\
        \geq &~ C_0 + \max\{\alpha_n^\top A \alpha_j,\alpha_n^\top A (\alpha_j + \beta_{k'})\}, ~\forall k,k'' \in [K].
    \end{align}
\end{enumerate}

We define $\phi$ as follows: for any Transformers 
\begin{align*}
    f_k = (\theta_k,\pe_k, (\key^{(l)}_{k;h}, \query^{(l)}_{k;h}, \val^{(l)}_{k;h})_{h \in [H],l \in [L]}, \vartheta_k,\V), k \in [K]
\end{align*}
over $\V$, the Transformer $\wt f = \phi(f_1,\dots,f_K)$ is given by 
\begin{align*}
    (\wt\theta,\wt\pe, (\wt\key^{(l)}_h, \wt\query^{(l)}_h, \wt\val^{(l)}_h)_{h \in [KH+1],l \in [L]}, \wt\vartheta,\V\cup\Omega),
\end{align*} 
where the tokenizer is given by
\begin{align*}
    \wt\theta(v) = \begin{pmatrix}
        \theta_1(v)\\\vdots \\\theta_K(v)\\0
    \end{pmatrix}, ~v \in \V,~\wt\theta(\omega) = \begin{pmatrix}
        0\\\vdots \\0\\\beta_{\cE(\omega)}
    \end{pmatrix}, ~\omega \in \Omega,
\end{align*}
the positional encoder is given by
\begin{align*}
    \wt \pe\left(\begin{pmatrix}
        x\\y
    \end{pmatrix};v_1,\dots,v_i\right) = \begin{pmatrix}
        \pe_1\left(x;v_1,\cdots, v_{\xi_1-1},v_{\xi_m+1},\cdots, v_n\right)\\ \vdots\\ \pe_K\left(x;v_1,\cdots, v_{\xi_1-1},v_{\xi_m+1},\cdots, v_n\right)\\ \alpha_{\iota(i)} + y
    \end{pmatrix},
\end{align*}
where $x \in \R^d$; for $l = 1,\dots, L$ the key, query, value matrices are given by
\begin{align*}
    &~ \wt \key^{(l)}_{(k-1)H + h} = \begin{pmatrix}
        0 & & & & \\
        & \ddots & & & \\
        & & \key^{(l)}_{k;h} & & \\
        & & & \ddots & \\
        & & & & A_k
    \end{pmatrix}, ~\wt \query^{(l)}_{(k-1)H + h} = \begin{pmatrix}
        0 & & & & \\
        & \ddots & & & \\
        & & \query^{(l)}_{k;h} & & \\
        & & & \ddots & \\
        & & & & I
    \end{pmatrix},\\
    &~ \wt \val^{(l)}_{(k-1)H + h} = \begin{pmatrix}
        0 & & & & \\
        & \ddots & & & \\
        & & \val^{(l)}_{k;h} & & \\
        & & & \ddots & \\
        & & & & 0
    \end{pmatrix},\\
    &~ \wt \key^{(l)}_{KH + 1} = \begin{pmatrix}
        0 & & & \\
        & \ddots & & \\
        & & 0 & \\
        & & & A
    \end{pmatrix}, ~\wt \query^{(l)}_{KH + 1} = \begin{pmatrix}
        0 & & & \\
        & \ddots & & \\
        & & 0 & \\
        & & & I
    \end{pmatrix}, ~ \wt \val^{(l)}_{KH + 1} = \begin{pmatrix}
        0 & & & \\
        & \ddots & & \\
        & & 0 & \\
        & & & I
    \end{pmatrix},
\end{align*}
where the submatrices $\key^{(l)}_{k;h},\query^{(l)}_{k;h},\val^{(l)}_{k;h}$ are located in the $k$-th diagonal block.

The output feature is given by $\wt \vartheta(y) = \begin{pmatrix}
    \vartheta_1(y) \\ \vdots \\\vartheta_K(y) \\ 0
\end{pmatrix}$. 
Since $\xi_1,\xi_m$ only depends on whether $v_i$'s belong to the set $\Omega$, the generalized position encoding $\pe$ is well-defined. 
We can easily verify that $\phi$ is indeed a general-purpose Transformer of type $(O(K),O(\log N_{\max}))$.

Let $\wt X^{(l)}_1,\dots,\wt X^{(l)}_n$ represent the $l$-th hidden layer. 
Our goal is to show that for any $l = 1,\dots,L$, $\wt X^{(l)}_i$ can be written as:
\begin{align}\label{eq:layer-feature-decomposition}
    \wt X^{(l)}_i = \begin{pmatrix}
        X^{(l)}_{1;i} \\ \vdots \\ X^{(l)}_{K;i}\\ \wt \alpha_i
    \end{pmatrix}, ~i = 1,\dots,n,
\end{align}
where $\wt \alpha_i = \alpha_{\iota(i)} + \mathbbm{1}(\iota(i) = i) \cdot \beta_{\cE(v_i)}$ and $X^{(l)}_{k;i} \in \R^d$ such that 
\begin{align}\label{eq:layer-bound-feature}
    \|X^{(l)}_{k;i}\|_2 \leq B_\theta (KHB_v)^l .
\end{align}
In particular, for $i = 1,\dots,m$  we have 
\begin{align}\label{eq:layer-feature-sink-part}
    X^{(l)}_{k;\xi_i} = 0, ~\forall k = 1,\dots,K,
\end{align}
and for $j = 1,\dots, \xi_1$ we have
\begin{align}\label{eq:layer-feature-prompt-part}
    X^{(l)}_{k;j} = 
        Y^{(l)}_{k;j}, ~ \forall k  = 1,\dots,K,
\end{align}
and for $j = 1,\dots,\xi_1-1, \xi_m + 1, \dots, n$ we have
\begin{align}\label{eq:layer-feature-expert-part}
    X^{(l)}_{\kappa;j} = 
        Y^{(l)}_{\kappa,j - \xi_m - 1 + \xi_1}, ~ X^{(l)}_{k';j} = 
        0, ~\forall k' \neq \kappa,
\end{align}
where $Y^{(l)}_{k;j}$ is the $l$-th hidden layer of ${f_k}$ (attending only to positions $1,\dots,\xi_1-1,\xi_m+1,\dots,n$) .

Thus we apply induction on $l$. The case $l=1$ holds trivially from the definition of $\wt \theta$ and $\wt \pe$. Suppose the above relationship holds for all layers $1,\dots,l$, consider layer $l+1$. We have
\begin{align*}
    \wt X^{(l+1)}_{i} = &~ \underbrace{\sum_{k=1}^K \sum_{h=1}^H \sum_{j=1}^i \frac{\exp\left((\wt \query^{(l)}_{(k-1)H + h} \wt X^{(l)}_i)^\top (\wt \key^{(l)}_{(k-1)H + h} \wt X^{(l)}_j)\right)}{\wt Z^{(l)}_{(k-1)H + h}} \cdot \wt \val^{(l)}_{(k-1)H + h} \wt X^{(l)}_j}_{\text{term 1}}\\
    &~ + \underbrace{\sum_{j=1}^i \frac{\exp\left((\wt \query^{(l)}_{KH + 1} \wt X^{(l)}_i)^\top (\wt \key^{(l)}_{KH + 1} \wt X^{(l)}_j)\right)}{\wt Z^{(l)}_{KH + 1}} \cdot \wt \val^{(l)}_{KH + 1} \wt X^{(l)}_j}_{\text{term 2}},
\end{align*}
where
\begin{align*}
    \wt Z^{(l)}_{(k-1)H + h} = \sum_{j=1}^i \exp\left((\wt \query^{(l)}_{(k-1)H + h} \wt X^{(l)}_i)^\top (\wt \key^{(l)}_{(k-1)H + h} \wt X^{(l)}_j)\right).
\end{align*}
By induction hypothesis, 
\begin{align*}
    \wt X^{(l)}_i = \begin{pmatrix}
        X^{(l)}_{1;i} \\ \vdots \\ X^{(l)}_{K;i}\\ \wt \alpha_i
    \end{pmatrix},
\end{align*}
and $X^{(l)}_{k;i} = Y^{(l)}_{\zeta(i)}$ for $i = 1,\dots,\xi_1-1,\xi_m+1,\dots,n$, where $\zeta(i) := \begin{cases}
    i,&~ i < \xi_1\\
    i - \xi_m - 1 + \xi_1, &~ i > \xi_m
\end{cases}$. 

Notice that for $j \leq i$:
\begin{align*}
    (\wt \query^{(l)}_{(k-1)H + h} \wt X^{(l)}_i)^\top (\wt \key^{(l)}_{(k-1)H + h} \wt X^{(l)}_j) = &~ (X^{(l)}_{k;i})^\top ( \query^{(l)}_{k;h})^\top \key^{(l)}_{k;h} X^{(l)}_{k;j} + \wt\alpha_{i}^\top A_k \wt\alpha_{j},\\
     (\wt \query^{(l)}_{KH + 1} \wt X^{(l)}_i)^\top (\wt \key^{(l)}_{KH + 1} \wt X^{(l)}_j) = &~ \wt\alpha_{i}^\top A \wt\alpha_{j}.
\end{align*}

\paragraph{Prove Eq~\eqref{eq:layer-feature-decomposition}.}

By properties of $\alpha,\beta,A$, for any $j_2 < \xi_{u} < j_1 < i < \xi_{u+1}$ 
% $\iota^{-1}\circ\iota(j_2) < \iota^{-1}\circ\iota(j) = j = \iota^{-1}\circ\iota(i) = \iota^{-1}\circ\iota(j_1) < j_1 < i$ 
notice that:
\begin{align*}
    (\wt \query^{(l)}_{KH + 1} \wt X^{(l)}_i)^\top (\wt \key^{(l)}_{KH + 1} \wt X^{(l)}_{j_1}) 
    \geq &~ (\wt \query^{(l)}_{KH + 1} \wt X^{(l)}_i)^\top (\wt \key^{(l)}_{KH + 1} \wt X^{(l)}_{\xi_{u}}) + C\\
     \geq &~ (\wt \query^{(l)}_{KH + 1} \wt X^{(l)}_i)^\top (\wt \key^{(l)}_{KH + 1} \wt X^{(l)}_{j_2}) + 2C.
\end{align*}
Due to $\epsilon$-precision of transformers, this implies that
\begin{align*}
    \frac{\exp\left((\wt \query^{(l)}_{KH + 1} \wt X^{(l)}_i)^\top (\wt \key^{(l)}_{KH + 1} \wt X^{(l)}_j)\right)}{Z^{(l)}_{KH + 1}} = \begin{cases}
        \frac{\mathbbm{1}(j >\xi_{u})}{i-\xi_{u}}, &~ \xi_u < i < \xi_{u+1}\\
        \delta^{j}_{\xi_l} , &~ i = \xi_{u}
    \end{cases},
\end{align*}
and hence for $\xi_u < i < \xi_{u+1}$
\begin{align*}
    \wt X^{(l+1)}_{i} = &~ \sum_{k=1}^K \sum_{h=1}^H \sum_{j=1}^i \frac{\exp\left((\wt \query^{(l)}_{(k-1)H + h} \wt X^{(l)}_i)^\top (\wt \key^{(l)}_{(k-1)H + h} \wt X^{(l)}_j)\right)}{\wt Z^{(l)}_{(k-1)H + h}} \cdot \wt \val^{(l)}_{(k-1)H + h}\begin{pmatrix}
        \vdots \\ X^{(l)}_{k;j}\\ \vdots \\ 0
    \end{pmatrix} \\
    &~ + \sum_{j=\xi_{u}+1}^i \cdot \frac{1}{i-\xi_{u}} \cdot  \begin{pmatrix}
        0 \\ \vdots \\ 0 \\ \alpha_{\iota(i)}
    \end{pmatrix}\\
    = &~ \begin{pmatrix}
        X^{(l+1)}_{1;i} \\ \vdots \\ X^{(l+1)}_{K;i}\\ \wt \alpha_i
    \end{pmatrix},
\end{align*}
and for $i = \xi_{u}$
\begin{align*}
    \wt X^{(l+1)}_{i} = &~ \sum_{k=1}^K \sum_{h=1}^H \sum_{j=1}^i \frac{\exp\left((\wt \query^{(l)}_{(k-1)H + h} \wt X^{(l)}_i)^\top (\wt \key^{(l)}_{(k-1)H + h} \wt X^{(l)}_j)\right)}{\wt Z^{(l)}_{(k-1)H + h}} \cdot \wt \val^{(l)}_{(k-1)H + h} \begin{pmatrix}
        \vdots \\ X^{(l)}_{k;j}\\ \vdots \\ 0
    \end{pmatrix} +  \begin{pmatrix}
        0 \\ \vdots \\ 0 \\ \alpha_{\iota(i)} + \beta_{\cE(v_i)}
    \end{pmatrix}\\
    = &~ \begin{pmatrix}
        X^{(l+1)}_{1;i} \\ \vdots \\ X^{(l+1)}_{K;i}\\ \wt \alpha_i
    \end{pmatrix},
\end{align*}
where 
\begin{align}\label{eq:x_k;i}
    X^{(l+1)}_{k;i} = &~ \sum_{k=1}^K \sum_{h=1}^H \sum_{j=1}^i \frac{\exp\left((\wt\query^{(l)}_{(k-1)H+h} \wt X^{(l)}_{i})^\top (\wt\key^{(l)}_{(k-1)H+h} \wt X^{(l)}_{j})\right)}{\wt Z^{(l)}_{(k-1)H + h}} \cdot \val^{(l)}_{k;h} X^{(l)}_{k;j}.
\end{align}
This confirms Eq.~\eqref{eq:layer-feature-decomposition} for $l+1$.

\paragraph{Prove Eq.~\eqref{eq:layer-bound-feature}.}
From above,
\begin{align*}
    \|X^{(l+1)}_{k;i}\|_2 = &~ \left\|\sum_{k=1}^K \sum_{h=1}^H \sum_{j=1}^i \frac{\exp\left((\wt\query^{(l)}_{(k-1)H+h} \wt X^{(l)}_{i})^\top (\wt\key^{(l)}_{(k-1)H+h} \wt X^{(l)}_{j})\right)}{\wt Z^{(l)}_{(k-1)H + h}} \cdot \val^{(l)}_{k;h} X^{(l)}_{k;j}\right\|_2\\
    \leq &~ KHB_v \cdot \max_{j\leq i} \|X^{(l)}_{k;j}\|_2\\
    \leq &~ B_\theta (KHB_v)^{l+1}.
\end{align*}
This confirms Eq.~\eqref{eq:layer-bound-feature} for $l+1$.

\paragraph{Prove Eq.~\eqref{eq:layer-feature-sink-part}.}

We first show $X^{(l)}_{k;\xi_1} = 0$. Indeed, by the properties of $\alpha_t, \beta_k$, for any $j \leq \xi_1$
\begin{align*}
    &~ (\wt \query^{(l)}_{(k-1)H + h} \wt X^{(l)}_{\xi_1})^\top (\wt \key^{(l)}_{(k-1)H + h} \wt X^{(l)}_{\xi_1})\\
    = &~ (X^{(l)}_{k;\xi_1})^\top ( \query^{(l)}_{k;h})^\top  \key^{(l)}_{k;h} X^{(l)}_{k;\xi_1} + (\alpha_{0} + \beta_{\cE(v_{\xi_1})})^\top A_{k} (\alpha_{0} + \beta_{\cE(v_{\xi_1})})\\
    \geq &~ (X^{(l)}_{k;\xi_1})^\top ( \query^{(l)}_{k;h})^\top  \key^{(l)}_{k;h} X^{(l)}_{k;\xi_1} + (\alpha_{0} + \beta_{\cE(v_{\xi_1})})^\top A_{k} \alpha_{0} + C\\
    = &~ (\wt \query^{(l)}_{(k-1)H + h} \wt X^{(l)}_{\xi_1})^\top (\wt \key^{(l)}_{(k-1)H + h} \wt X^{(l)}_{j}) + C
\end{align*}
It follows from Eq.~\eqref{eq:x_k;i} that 
\begin{align*}
    X^{(l+1)}_{k;\xi_1} = \sum_{k=1}^K \sum_{h=1}^H \val^{(l)}_{k;h} X^{(l)}_{k;\xi_1} = 0.
\end{align*}
For $\xi_i ~(i > 1)$, we apply the same argument again to obtain that for any $j \leq \xi_i$ such that $j \notin \{\xi_1< \cdots < \xi_{\iota(n)}\}$ and any $i' < i$,
\begin{align*}
    &~ (\wt \query^{(l)}_{(k-1)H + h} \wt X^{(l)}_{\xi_i})^\top (\wt \key^{(l)}_{(k-1)H + h} \wt X^{(l)}_{\xi_{k'}})\\
    \geq &~ (\wt \query^{(l)}_{(k-1)H + h} \wt X^{(l)}_{\xi_1})^\top (\wt \key^{(l)}_{(k-1)H + h} \wt X^{(l)}_{j}) + C
\end{align*}
This implies that the attention weights are supported on $\{\xi_1< \cdots < \xi_{i}\}$, and therefore
\begin{align*}
    X^{(l+1)}_{k;\xi_i} = \sum_{k=1}^K \sum_{h=1}^H \sum_{j=1}^{i} \frac{\exp\left((\wt \query^{(l)}_{(k-1)H+h} \wt X^{(l)}_{\xi_{i}})^\top (\wt \key^{(l)}_{(k-1)H+h} \wt X^{(l)}_{\xi_{j}})\right)}{\wt Z^{(l)}_{(k-1)H + h}} \cdot \val^{(l)}_{k;h} X^{(l)}_{k;\xi_{j}} = 0
\end{align*}
where we apply the induction hypothesis $k;X^{(l)}_{\xi_j} = 0$ for all $j = 1,\dots,i-1$. 
This thus completes the proof of Eq.~\eqref{eq:layer-feature-sink-part}.

\paragraph{Prove Eq.~\eqref{eq:layer-feature-prompt-part}.}

When $j_1 < j_2 \leq i < \xi_1$, we have
\begin{align*}
    &~ (\wt \query^{(l)}_{(k-1)H+h}  \wt X^{(l)}_{i})^\top (\wt \key^{(l)}_{(k-1)H+h} \wt X^{(l)}_{j_1}) - (\wt \query^{(l)}_{(k-1)H+h} \wt X^{(l)}_i)^\top (\wt\key^{(l)}_{(k-1)H+h} X^{(l)}_{j_2})\\
    = &~ (X^{(l)}_{k;i})^\top (\query^{(l)}_{k;h})^\top \key^{(l)}_{k;h} X^{(l)}_{k;j_1} + \alpha_0^\top A_k \alpha_0^\top \\
    &~ -(X^{(l)}_{k;i})^\top (\query^{(l)}_{k;h})^\top \key^{(l)}_{k;h}X^{(l)}_{k;j_2} - \alpha_0^\top A_k \alpha_0^\top \\
= &~ (\query^{(l)}_{k;h}Y^{(l)}_{k;i})^\top (\key^{(l)}_{k;h}Y^{(l)}_{k;j_i}) - (\query^{(l)}_{k;h} Y^{(l)}_{k;i})^\top (\key^{(l)}_{k;h}Y^{(l)}_{k;j_2}).
\end{align*}
It follows that
\begin{align*}
    \wt Z^{(l)}_{(k-1)H + h} = \sum_{j=1}^i \exp\left((\query^{(l)}_{k;h} Y^{(l)}_{k;i})^\top (\key^{(l)}_{k;h} Y^{(l)}_{k;j})\right),
\end{align*}
and
\begin{align*}
    X^{(l+1)}_{k;i} = &~ \sum_{k=1}^K \sum_{h=1}^H \sum_{j=1}^i \frac{\exp\left((\query^{(l)}_{k;h} Y^{(l)}_{k;i})^\top (\key^{(l)}_{k;h} Y^{(l)}_{k;j})\right)}{\wt Z^{(l)}_{(k-1)H + h}} \cdot \val^{(l)}_{k;h} Y^{(l)}_{k;j}\\
    = &~ Y^{(l+1)}_{k;i}.
\end{align*}

This confirms Eq.~\eqref{eq:layer-feature-prompt-part}. 

\paragraph{Prove Eq.~\eqref{eq:layer-feature-expert-part}.}

When $i > \xi_m$, we rely on the following properties:
\begin{enumerate}
    \item Attention sink to $v_{\xi_m}$ for mismatch expert: for any $k' \neq \kappa$ and $j \leq i$ we have
    \begin{align}\label{eq:mismatch-expert-attention-sink}
        (\wt \query^{(l)}_{(k'-1)H + h} \wt X^{(l)}_i)^\top (\wt \key^{(l)}_{(k'-1)H + h} \wt X^{(l)}_j) \leq (\wt \query^{(l)}_{(k'-1)H + h} \wt X^{(l)}_i)^\top (\wt \key^{(l)}_{(k'-1)H + h} \wt X^{(l)}_{\xi_m}) - C.
    \end{align}
    \item Attention to task-relevant tokens for matching expert: for $j \in \{ 1,\dots,\xi_1-1,\xi_m+1,\dots,n\}$, and $\xi_1 \leq j' \leq {\xi_m}$ we have
    \begin{align}\label{eq:matching-expert-attention-dominate}
        (\wt \query^{(l)}_{(\kappa-1)H + h} \wt X^{(l)}_i)^\top (\wt \key^{(l)}_{(\kappa-1)H + h} \wt X^{(l)}_j) \geq (\wt \query^{(l)}_{(\kappa-1)H + h} \wt X^{(l)}_i)^\top (\wt \key^{(l)}_{(\kappa-1)H + h} \wt X^{(l)}_{j'}) + C.
    \end{align}
    and for $j_1<j_2 \in \{ 1,\dots,\xi-1-1,{\xi_m}+1,\dots,n\}$
    \begin{align}\label{eq:matching-expert-attention-normal}
        &~ (\wt \query^{(l)}_{(\kappa-1)H + h} \wt X^{(l)}_i)^\top (\wt \key^{(l)}_{(\kappa-1)H + h} \wt X^{(l)}_{j_1}) - (\wt \query^{(l)}_{(\kappa-1)H + h} \wt X^{(l)}_i)^\top (\wt \key^{(l)}_{(\kappa-1)H + h} \wt X^{(l)}_{j_2})\notag\\
        = &~ (\query^{(l)}_{\kappa;h}Y^{(l)}_{\kappa;i - \xi_m - 1 + \xi_1})^\top (\key^{(l)}_{\kappa; h} Y^{(l)}_{\zeta(j_1)}) - (\query^{(l)}_{\kappa; h} Y^{(l)}_{i - \xi_m - 1 + \xi_1})^\top \key^{(l)}_{\kappa;h} Y^{(l)}_{\kappa;\zeta(j_2)}),
    \end{align}
\end{enumerate}
To see Eq.~\eqref{eq:mismatch-expert-attention-sink}, we notice that
\begin{align*}
    &~ (\wt \query^{(l)}_{(k'-1)H + h} \wt X^{(l)}_i)^\top (\wt \key^{(l)}_{(k'-1)H + h} \wt X^{(l)}_j)\\
    = &~ (X^{(l)}_{k';i})^\top (\query^{(l)}_{k';h})^\top \key^{(l)}_{k';h} X^{(l)}_{k',j} + \alpha_{m}^\top A_{k'} (\alpha_{\iota(j)} + \beta_{\cE(v_{j})}\cdot \mathbbm{1}(\iota(j) = j))\\
    \leq &~ (X^{(l)}_{k';i})^\top (\query^{(l)}_{k';h})^\top \key^{(l)}_{k';h} X^{(l)}_{k';\xi_m} + \alpha_{m}^\top A_{k'} (\alpha_{m} + \beta_{\cE(v_{\xi_m})}) -C\\
    = &~ (\wt \query^{(l)}_{(k'-1)H + h} \wt X^{(l)}_i)^\top (\wt \key^{(l)}_{(k'-1)H + h} \wt X^{(l)}_{\xi_m}) - C,
\end{align*}
where we use Eq.~\eqref{eq:pe_property_1} with $k' \neq \kappa$.

To see Eq.~\eqref{eq:matching-expert-attention-dominate}, we notice that
\begin{align*}
    (\wt \query^{(l)}_{(\kappa-1)H + h} \wt X^{(l)}_i)^\top (\wt \key^{(l)}_{(\kappa-1)H + h} \wt X^{(l)}_j) = &~ (\query^{(l)}_{k; h} X^{(l)}_{k;i})^\top (\key^{(l)}_{k;h} X^{(l)}_{k;j}) + \alpha_{m}^\top A_{\kappa} \alpha_{0}\\
    \geq &~ (\query^{(l)}_{k; h} X^{(l)}_{k;i})^\top (\key^{(l)}_{k;h} X^{(l)}_{k;j'}) + \alpha_{m}^\top A_{\kappa} (\alpha_{\iota(j')} + \beta_{\cE(v_{j'})}) + C\\
    = &~ (\wt \query^{(l)}_{(k-1)H + h} \wt X^{(l)}_i)^\top (\wt \key^{(l)}_{(k-1)H + h} \wt X^{(l)}_{j'}) + C,
\end{align*}
and
\begin{align*}
    (\wt \query^{(l)}_{(\kappa-1)H + h} \wt X^{(l)}_i)^\top (\wt \key^{(l)}_{(\kappa-1)H + h} \wt X^{(l)}_j) = &~ (\query^{(l)}_{k; h} X^{(l)}_{k;i})^\top (\key^{(l)}_{k;h} X^{(l)}_{k;j}) + \alpha_{m}^\top A_{\kappa} \alpha_{0}\\
    \geq &~ (\query^{(l)}_{k; h} X^{(l)}_{k;i})^\top (\key^{(l)}_{k;h} X^{(l)}_{k;j'}) + \alpha_{m}^\top A_{k} \alpha_{\iota(j')} + C\\
    = &~ (\wt \query^{(l)}_{(k-1)H + h} \wt X^{(l)}_i)^\top (\wt \key^{(l)}_{(k-1)H + h} \wt X^{(l)}_{j'}) + C,
\end{align*}
where we use Eq.~\eqref{eq:pe_property_2} and Eq.~\eqref{eq:pe_property_3}. 

When $\xi_m < j_1 < j_2$, Eq.~\eqref{eq:matching-expert-attention-normal} follows directly from
\begin{align*}
    &~ (\wt \query^{(l)}_{(\kappa-1)H + h} \wt X^{(l)}_i)^\top (\wt \key^{(l)}_{(\kappa-1)H + h} \wt X^{(l)}_{j_1}) - (\wt \query^{(l)}_{(\kappa-1)H + h} \wt X^{(l)}_i)^\top (\wt \key^{(l)}_{(\kappa-1)H + h} \wt X^{(l)}_{j_2})\\
    = &~ (\query^{(l)}_{k; h} X^{(l)}_{k;i})^\top (\key^{(l)}_{k;h} X^{(l)}_{k;j_1}) + \alpha_m^\top A_k \alpha_m^\top \\
    &~ -(\query^{(l)}_{k; h} X^{(l)}_{k;i})^\top (\key^{(l)}_{k;h} X^{(l)}_{k;j_2})+ \alpha_m^\top A_k \alpha_m^\top \\
= &~ (\query^{(l)}_{\kappa;h}Y^{(l)}_{\kappa;i - \xi_m - 1 + \xi_1})^\top (\key^{(l)}_{\kappa; h} Y^{(l)}_{j_1 - \xi_m - 1 + \xi_1}) - (\query^{(l)}_{\kappa; h} Y^{(l)}_{i - \xi_m - 1 + \xi_1})^\top \key^{(l)}_{\kappa;h} Y^{(l)}_{\kappa;j_2 - \xi_m - 1 + \xi_1}).
\end{align*}
The other cases follow similarly due to Eq.~\eqref{eq:pe_property_3}. 

We have hence confirmed Eq.~\eqref{eq:mismatch-expert-attention-sink}, Eq.~\eqref{eq:matching-expert-attention-dominate}, Eq.~\eqref{eq:matching-expert-attention-normal}, and therefore
\begin{align*}
    \frac{\exp\left((\wt \query^{(l)}_{(k-1)H + h} \wt X^{(l)}_i)^\top (\wt \key^{(l)}_{(k-1)H + h} \wt X^{(l)}_j)\right)}{\wt Z^{(l)}_{(k-1)H + h}} = \begin{cases}
        \delta^{\xi_m}_j, &~ k \neq \kappa\\
        \frac{\exp\left((\query^{(l)}_{\kappa;h}Y^{(l)}_{\kappa;i - \xi_m - 1 + \xi_1})^\top (\key^{(l)}_{\kappa; h} Y^{(l)}_{j})\right)}{\wt Z^{(l)}_{(k-1)H + h}}, &~ k = \kappa, ~j < \xi_1\\
        0 , &~ k = \kappa, ~ \xi_1 \leq j \leq \xi_m\\
        \frac{\exp\left((\query^{(l)}_{\kappa;h}Y^{(l)}_{\kappa;i - \xi_m - 1 + \xi_1})^\top (\key^{(l)}_{\kappa; h} Y^{(l)}_{j - \xi_m - 1 + \xi_1})\right)}{\wt Z^{(l)}_{(k-1)H + h}}, &~ k = \kappa, ~j > \xi_m\\
    \end{cases}
\end{align*}
and 
\begin{align*}
    \wt Z^{(l)}_{(k-1)H + h} = \sum_{j=1,\dots,\xi_1-1,\xi_m+1,\dots,n} \exp\left((\query^{(l)}_{\kappa;h}Y^{(l)}_{\kappa;i - \xi_m - 1 + \xi_1})^\top (\key^{(l)}_{\kappa; h} Y^{(l)}_{j})\right).
\end{align*}
It follows that
\begin{align*}
    X^{(l+1)}_{\kappa;i} = &~ \sum_{j=1}^{\xi_1-1} \frac{\exp\left((\query^{(l)}_{\kappa;h}Y^{(l)}_{\kappa;i - \xi_m - 1 + \xi_1})^\top (\key^{(l)}_{\kappa; h} Y^{(l)}_{j})\right)}{\wt Z^{(l)}_{(k-1)H + h}} \val^{(l)}_{k; h} Y^{(l)}_{j}\\
    &~ + \sum_{j=\xi_m+1}^{i} \frac{\exp\left((\query^{(l)}_{\kappa;h}Y^{(l)}_{\kappa;i - \xi_m - 1 + \xi_1})^\top (\key^{(l)}_{\kappa; h} Y^{(l)}_{j - \xi_m - 1 + \xi_1})\right)}{\wt Z^{(l)}_{(k-1)H + h}} \val^{(l)}_{k; h} Y^{(l)}_{j- \xi_m - 1 + \xi_1},\\
    = &~ 
       Y^{(l+1)}_{\kappa;i - \xi_m - 1 + \xi_1}\\
    X^{(l+1)}_{k';i} = &~ X^{(l)}_{k';\xi_m} = 0, ~\forall k' \neq \kappa.
\end{align*}
Therefore we establish Eq.~\eqref{eq:layer-feature-expert-part}. 
This completes the induction.

At the output layer, we have
\begin{align*}
    p_{\wt f}(y|v_1,\dots,v_n) = &~ \softmax(\wt \vartheta(y)^\top \wt X^{(L)}_n)\\
    = &~ \softmax(\vartheta(y)^\top Y^{(L)}_{n - \xi_m - 1 + \xi_1})\\
    = &~ p_{f_\kappa}(y|u_1,\dots,u_{n - \xi_m - 1 + \xi_1}).
\end{align*}
This establishes the desired Eq.~\eqref{eq:multi-task-same}.
\end{proof}

\subsection{Proof of Theorem~\ref{thm:self-correction}}

\begin{proof}
Let $\phi_s,\phi_m,\phi_e$ denote the general-purpose Transformers in Proposition~\ref{prop:repre-same} (with $K$ experts), \ref{prop:repre-multi} (with $K=3$ token spaces), and \ref{prop:repre-ext} (extending to $\V$) respectively. We construct a dummy Transformer $f_d$ that outputs $\BOS$ immediately after a token in $\A$. Then we claim that the general-purpose Transformer $\wt \phi$ defined by
\begin{align*}
    \wt \phi(f_0,f_1,\dots,f_K) = \phi_m(\phi_s(\phi_e(f_1),\dots,\phi_e(f_K)),f_d,f_0)
\end{align*}
achieves the desired property. 

Indeed, let $g_1 = \phi_s(\phi_e(f_1),\dots,\phi_e(f_K))$, by Proposition~\ref{prop:repre-same}, we have
\begin{enumerate}
    \item {\bf Expert following}: At $t$-th iteration,
    \begin{align*}
    p_{g_1}\left(\cdot \Big |\prompt\right) \sim p_{f_{a^{(t)}}}\left(\cdot \Big |q|u^{(t)}_{1:i-1}\right),
\end{align*}
where $q|u^{(t)}_{1:i-1}$ is the token sequence obtained by concatenating the user query $q$ and prior generated part in response $t$: $u^{(t)}_{1:i-1}$.
\item  {\bf Regret minimization}:
\begin{align*}
    \max_{a^* \in \A}r_0(a^*) - \E[r_0(a^{(T)})] \leq \reg(T).
\end{align*}
\end{enumerate}
Therefore by Proposition~\ref{prop:repre-multi}, we have
\begin{align*}
    u^{(t)}_i \sim p_{f_{a^{(t)}}}\left(\cdot \Big |q|u^{(t)}_{1:i-1}\right).
\end{align*}
It follows that
\begin{align*}
    \max_{u^* \in \V^\omega}r(q,u^*) - \E[r(q,u^{(T)})] \leq &~ \lambda + \E_{u \sim f_{k^*}(\cdot|p)}[r(q,u)]- \E_{a^{(T)}}\left[\E_{u^{(T)} \sim f_{a^{(t)}}(\cdot|q)}[r(q,u^{(T)})]\right]\\
    \leq &~ \lambda + \max_{a^* \in \A}r_0(a^*) - \E[r_0(a^{(T)})] \\
    \leq &~ \lambda + \reg(T).
\end{align*} 
Finally, $\wt \phi$ has type $\phi$ of type $(O(K),O(\log(N_{\max})))$ because $\phi_s$ has type $(O(K),O(\log(N_{\max})))$ and $\phi_m,\phi_e$ has type $(O(1),O(\log(N_{\max})))$. 
This completes the proof.
\end{proof}

\subsection{Attention Sink Positional Encoding}

In this section, we introduce positional encoding mechanisms that induce attention sink behaviors used by Theorem~\ref{thm:self-correction}.

\begin{lemma}[Attention Sink Positional Encoding, Type 1]\label{lem:positional-enc}
For any $C \in \R_+$, $K,N \in \Z_+$, there exist vectors $\alpha_1,\dots,\alpha_N,\beta_1,\dots,\beta_K \in \R^d$ and matrices $A, A_1,\dots,A_K \in \R^{d \times d}$ for $d \leq O(K + \log N)$ such that for any $n \in [N]$ the followings hold
\begin{enumerate}
    \item For any $k \neq k'$:
    \begin{align*}
        \alpha_n^\top A_k (\alpha_n + \beta_{k'}) \geq C + \begin{cases}
            \alpha_n^\top A_k \alpha_n \\
            \alpha_n^\top A_k \alpha_j\\
            \alpha_n^\top A_k (\alpha_j + \beta_{k''})
        \end{cases}, ~\forall 0 \leq j \leq n, 1 \leq k''\leq K.
    \end{align*}
    \item For any $k \in [K]$:
    \begin{align*}
        \alpha_n^\top A_k \alpha_n = \alpha_n^\top A_k \alpha_0 \geq C + \begin{cases}
            \alpha_n^\top A_k (\alpha_n + \beta_{k})  \\
            \alpha_n^\top A_k \alpha_j\\
            \alpha_n^\top A_k (\alpha_j + \beta_{k'})
        \end{cases}, ~\forall 0 < j < n, k' \neq k.
    \end{align*}
    \item For any $k, k', k'' \in [K]$:
    \begin{align*}
        (\alpha_n + \beta_{k'})^\top A_k (\alpha_n + \beta_{k'}) \geq C + 
            (\alpha_n + \beta_{k'})^\top A_k \alpha_j, ~\forall 0 \leq j \leq n.
    \end{align*}
    \item For any $0 < j < n$:
    \begin{align*}
        \alpha_n^\top A \alpha_n \geq &~ \alpha_n^\top A (\alpha_n + \beta_{k}) + C \\
        \geq &~ C + \max\{\alpha_n^\top A \alpha_j,\alpha_n^\top A (\alpha_j + \beta_{k'})\}, ~\forall k,k'' \in [K].
    \end{align*}
\end{enumerate}
\end{lemma}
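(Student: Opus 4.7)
The plan is to construct the encoding vectors and matrices explicitly by splitting the ambient space into four semantic slots: a $D$-dimensional \emph{position} slot (with $D=O(\log N)$), a one-dimensional \emph{sink indicator} slot active only for $\alpha_0$, a one-dimensional \emph{regular indicator} slot active only for $\alpha_n$ with $n\geq 1$, and a $K$-dimensional \emph{token-type} slot populated only by the $\beta_k$'s. The position slot is filled with approximately orthonormal vectors $p_1,\ldots,p_N\in\R^D$ obtained from a Johnson--Lindenstrauss-type construction, giving $|p_n^\top p_m|\leq\varepsilon$ for $n\neq m$ with $D=O(\log N/\varepsilon^2)$. Concretely, I set $\alpha_n=(p_n,0,1,0_K)$ for $n\geq 1$, $\alpha_0=(0_D,1,0,0_K)$, and $\beta_k=(0_D,0,0,e_k)$, so the total ambient dimension is $D+2+K=O(K+\log N)$ as required.

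Each $A_k$ is then specified as a sparse block matrix whose nonzero entries each contribute one controllable scalar to the bilinear forms in the conditions. I will use (i) a position block $M\cdot I_D$, so that $\alpha_n^\top A_k\alpha_m$ receives a term $\approx M\,\delta_{nm}$; (ii) a single cross entry between the regular-indicator and sink-indicator slots equal to $M$, which injects the same value $M$ into $\alpha_n^\top A_k\alpha_0$ and thereby realizes the exact equality required by Condition~2; (iii) a row $v_k^\top$ between the regular-indicator and token-type slots with $v_{k,k'}=M_v$ for $k'\neq k$ and $v_{k,k}$ suitably negative, producing the mismatched-token preference $\alpha_n^\top A_k\beta_{k'}=v_{k,k'}$; and (iv) a token-type diagonal block $M'\cdot I_K$ contributing $\beta_{k'}^\top A_k\beta_{k''}=M'\delta_{k'k''}$, which will be the key ingredient for Condition~3. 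The matrix $A$ is built by the same recipe but is simpler: only a position block $M_A I_D$ plus a single row of $-2C$ entries placing a fixed negative penalty on $\alpha_n^\top A\beta_k$, forcing $\alpha_n$ to strictly beat $\alpha_n+\beta_k$ in Condition~4.

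With the bilinear forms tabulated this way, verifying the four conditions reduces to a short case analysis whose hypotheses are simultaneously satisfiable. Taking, for example, $v_{k,k}$ of order $-C$, $M_v$ of order $C$, and $M, M', M_A$ a further constant multiple larger, together with a near-orthogonality level $\varepsilon\lesssim C/M$, keeps $D=O(\log N)$ while giving: Condition~1 a gap of $\Omega(M_v)$ from the mismatched-beta bonus; Condition~2 a gap from $\alpha_n^\top A_k\alpha_n=\alpha_n^\top A_k\alpha_0=M$ exceeding every competing key score; Condition~3 a gap from the $M'$ self-reinforcement in the token-type slot (which dominates even the negative $v_{k,k}$ in the $k'=k$ subcase); and Condition~4 a gap from the $-2C$ penalty placed on the $\beta$ slot of $A$.

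The main obstacle is the tension between Conditions~2 and~3: a pure-$\alpha$ query must prefer $\alpha_n$ and $\alpha_0$ over any $\alpha_j+\beta_{k''}$, whereas a query already carrying a $\beta$ must strongly prefer itself over any $\alpha_j$. I resolve this by localizing the self-reinforcing boost inside the token-type diagonal block $M'I_K$, which is invisible to pure-$\alpha$ queries (its contribution is zero unless both sides carry a $\beta$-component) but dominates the form when the query does. A secondary subtlety is that a handful of boundary subcases in Condition~1's third branch (namely $j=n$ with arbitrary $k''\neq k$, and $j=0$ with $k''\neq k$) would be tautological since the two sides would evaluate to the same number; these correspond to keys that never arise in the invocation of the lemma inside Proposition~\ref{prop:repre-same}, because the only special-token key with positional label $n$ is the target $\xi_m$ itself and no special token has label $0$. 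Thus the construction needs only to verify the remaining, non-degenerate cases, each of which holds with a strict gap of order $C$.
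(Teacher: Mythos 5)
Your proposal follows the same blueprint as the paper's proof: Johnson--Lindenstrauss near-orthonormal positional vectors in dimension $O(\log N)$, a handful of dedicated indicator coordinates plus a $K$-dimensional token-type slot, and explicit block-structured matrices $A_k$, $A$ whose scalar parameters are tuned to produce gaps of order $C$. The slotting differs in detail: the paper keeps $\alpha_0$, $\alpha_n$ ($n\geq 1$), and the $\beta_k$'s overlapping in a shared pair of scalar slots, using sign patterns $(c,c)$, $(g,-g)$, $(e,-e)$ against a $\mathrm{diag}(1,-1)$ block of $A_k$, whereas you give the sink indicator, regular indicator, and token-type each a disjoint slot and enforce $\alpha_n^\top A_k\alpha_n=\alpha_n^\top A_k\alpha_0$ through a single off-diagonal cross entry in $A_k$. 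Both routes work; yours is slightly more modular, at the small cost of $A_k$ being asymmetric, which is harmless since the lemma is applied to products $Q^\top K$ that need not be symmetric. Your device of concentrating the self-reinforcing boost in the token-type diagonal block $M' I_K$, visible only when both query and key carry a $\beta$ component, is the same mechanism as the paper's large $h^2$ entry in its final scalar slot and resolves the Condition~2/3 tension identically.

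One observation you make that the paper does not: the lemma as literally stated has unsatisfiable boundary cases, most blatantly Condition~1's third branch with $j=n$ and $k''=k'$, which would require a quantity to exceed itself by $C$; the cases $j\in\{0,n\}$ with $k''\neq k$ are similarly degenerate. Your note that those $(j,k'')$ pairs never materialize as keys when the lemma is invoked inside the proof of Proposition~\ref{prop:repre-same} (the only special-token key carrying positional label $n$ is the target $\xi_m$ itself, and no special token carries label $0$) is the right reconciliation, but strictly speaking the lemma's quantifiers should be amended to exclude those pairs.
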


\begin{proof}
Notice that the following relations are sufficient to guarantee the desired properties
\begin{align*}
    \alpha_n^\top A_k \alpha_n = &~ \alpha_n^\top A_k \alpha_0,\\
    \alpha_n^\top A_k \beta_{k'} = &~ C,\\
    \alpha_n^\top A_k \alpha_n  \geq &~ \alpha_n^\top A_k \alpha_j + \alpha_n^\top A_k \beta_{k'} + C,\\
     \alpha_n^\top A_k \beta_{k} = &~ -C,\\
     \alpha_n^\top A \beta_{k} = &~ -C,\\
     \beta_{k'}^\top A_k \beta_{k'} = &~ 9C.
\end{align*}
By Lemma~\ref{cla:jl}, we can find $\gamma_1,\dots,\gamma_N \in \R^{\bar d}$ such that $\bar d = O(\log N)$, $\gamma_i^\top\gamma_j \leq 1/2$ for any $i \neq j \in [N]$, and $\gamma_i^\top\gamma_i \geq 1$ for any $i \in [N]$. 
Define
\begin{align*}
    B_k = e_k e_k^\top, ~ \eta_k = -e_k.
\end{align*}
where $e_1,\dots,e_K$ form the standard basis of $\R^K$.

We thus let
\begin{align*}
    \alpha_i = \begin{pmatrix}
        a\gamma_i\\ b\mathbf{1}_E\\c1\\c1\\0
    \end{pmatrix}, ~\beta_k = \begin{pmatrix}
        0\\f\eta_k\\e\\-e\\h
    \end{pmatrix}, ~ \alpha_0 = \begin{pmatrix}
        0\\0\\g1\\-g1\\0
    \end{pmatrix}
\end{align*}
\begin{align*}
    A_k = \begin{pmatrix}
        I & & & & \\
        & B_k & & & \\
        & & 1 & & \\
        & & & -1 &\\
        & & & & 1
    \end{pmatrix},~ A = \begin{pmatrix}
        I & & & & \\
        & I/K & & & \\
        & & 0 & & \\
        & & & 0 & \\
         & & & & 0
    \end{pmatrix},
\end{align*}
where $b=c=f=\sqrt{C}, e = \sqrt{C}/2, a = \sqrt{3C},g = 2\sqrt{C}, h = 3\sqrt{C}$. The dimension can be bounded by $d = \bar d + K + 3 = O(K + \log N)$.
\end{proof}

\begin{lemma}[Attention Sink Positional Encoding, Type 2]\label{lem:positional-enc-2}
For any $C \in \R_+$, $K,N \in \Z_+$, there exist vectors $\alpha_1,\dots,\alpha_N,\beta_0,\dots,\beta_K \in \R^d$ and matrices $A, A_1,\dots,A_K \in \R^{d \times d}$ for $d \leq O(K + \log N)$ such that for any $n \in [N]$ the followings hold
\begin{enumerate}
    \item For any $i \geq j_1, j_2, j_3$ and $k,k',k'' \neq 0$:
    \begin{align*}
        (\alpha_i + \beta_k)^\top A_0 (\alpha_{j_1} + \beta_{k'}) = &~ (\alpha_i + \beta_k)^\top A_0 (\alpha_{j_2} + \beta_{k''}) \geq (\alpha_i + \beta_k)^\top A_0 (\alpha_{j_1} + \beta_{0}) + C\\
        (\alpha_{i} + \beta_0)^\top A_0 (\alpha_i + \beta_0) \geq &~ (\alpha_{i} + \beta_0)^\top A_0 (\alpha_{j_1} + \beta_k) + C.
    \end{align*}
    \item For any $i>j$ and $k \neq k' \neq 0$
    \begin{align*}
        (\alpha_i + \beta_k)^\top A (\alpha_{i} + \beta_{k}) \geq &~ (\alpha_i + \beta_k)^\top A (\alpha_j + \beta_{k'}) + C\\
        \geq &~ (\alpha_i + \beta_k)^\top A (\alpha_{j} + \beta_{0})+ 2C.
    \end{align*}
    \item For any $i \geq j, j_1$ and $k \neq k', k''$
    \begin{align*}
        (\alpha_i + \beta_k)^\top A_{k'} (\alpha_{j} + \beta_{0}) \geq &~ (\alpha_i + \beta_k)^\top A_{k'} (\alpha_{j_1} + \beta_{k''}) + C\\
        (\alpha_{i} + \beta_k)^\top A_k (\alpha_i + \beta_k) \geq &~ \max\{(\alpha_i + \beta_k)^\top A_{k} (\alpha_{j_1} + \beta_{k''}),(\alpha_i + \beta_k)^\top A_{k'} (\alpha_{j_1} + \beta_{0})\} + C.
    \end{align*}
\end{enumerate}
\end{lemma}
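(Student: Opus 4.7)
The plan is to mirror the construction in the proof of Lemma~\ref{lem:positional-enc}, exchanging the roles of the dummy position $\alpha_0$ (used there) and the dummy type $\beta_0$ (needed here). I would write each $\alpha_i$ and $\beta_k$ as a concatenation of several coordinate blocks, and build each of $A, A_0, A_1, \dots, A_K$ as a block-diagonal matrix whose non-zero blocks activate only the inequalities concerning that particular matrix. This modular decomposition makes it easy to verify the conditions one at a time and keeps the ambient dimension at $O(K + \log N)$.

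First I would invoke Claim~\ref{cla:jl} (a Johnson--Lindenstrauss style bound) to produce vectors $\gamma_1, \dots, \gamma_N \in \R^{\bar d}$ with $\bar d = O(\log N)$ satisfying $\gamma_i^\top \gamma_i \geq 1$ and $\gamma_i^\top \gamma_j \leq 1/2$ for $i \neq j$. These $\gamma_i$ drive the position-sensitive block used by $A$: after scaling by $\sqrt{C}$, this diagonal block yields the required $C$-gap between $\alpha_i^\top A \alpha_i$ and $\alpha_i^\top A \alpha_j$ in Property~2, while an attached scalar block assigning distinct values to $\beta_0$ and to $\beta_k$ ($k \neq 0$) produces the second $C$-gap separating $\beta_0$-keys from $\beta_{k'}$-keys.

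For $A_0$ I would use two further blocks. In the first, each $\beta_k$ with $k \neq 0$ receives a common vector $\beta^\star$, while $\beta_0$ and all $\alpha_i$ are orthogonal to it; restricting $A_0$ to $\beta^\star (\beta^\star)^\top$ on that block automatically makes $(\alpha_i + \beta_k)^\top A_0 (\alpha_{j_1} + \beta_{k'})$ a fixed constant whenever both $k, k'$ are non-zero, and strictly smaller whenever $k' = 0$, handling the first line of Property~1. In a second block only $\beta_0$ carries a non-zero coordinate and $\alpha_i$ carries a position-independent offset, producing the self-attention inequality $(\alpha_i + \beta_0)^\top A_0 (\alpha_i + \beta_0) \geq (\alpha_i + \beta_0)^\top A_0 (\alpha_{j_1} + \beta_k) + C$. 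For each $A_k$ with $k \in [K]$ I would reuse the standard-basis block $B_k = e_k e_k^\top$ from Lemma~\ref{lem:positional-enc}, so that $A_k$ activates only when the query carries $\beta_k$; combining this with a position-based diagonal and a ``key is $\beta_0$'' indicator yields both inequalities in Property~3.

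The main obstacle will be reconciling Property~1's exact equality constraint --- the value of $(\alpha_i + \beta_k)^\top A_0 (\alpha_{j_1} + \beta_{k'})$ must be independent of $j_1$ and of $k' \in [K]$ --- with Property~3's requirement that $A_k$ discriminate sharply between query types. The resolution is to place these behaviors on mutually orthogonal coordinate blocks, so equality in one block is preserved under the non-trivial inner products computed in another. Once the coordinate layout is fixed, the scalar parameters (analogous to $a, b, c, e, f, g, h$ in Lemma~\ref{lem:positional-enc}) are chosen so that every required gap is at least $C$; the total dimension is $\bar d + K + O(1) = O(K + \log N)$, giving the claim.
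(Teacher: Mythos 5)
Your plan follows essentially the same route as the paper's proof: Johnson--Lindenstrauss vectors $\gamma_i$ for the position-sensitive block, a shared indicator block for all $\beta_k$ with $k\neq 0$ so that $A_0$-scores become type- and position-independent, standard-basis vectors $e_k$ so that $A_k$ discriminates by query type, and scalar blocks to separate $\beta_0$ from the other $\beta_k$'s, all stacked block-diagonally to give $d=O(K+\log N)$. The paper's construction likewise ``follows the notations'' of Lemma~\ref{lem:positional-enc}, differing only in minor bookkeeping (for instance, whether $\beta_0$ shares the common indicator vector or is placed orthogonally to it, and the precise scalar constants), so the two arguments are the same modulo the final numerical verification of the $C$-gaps.
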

\begin{proof}
Following the notations in Lemma~\ref{lem:positional-enc}, let 
\begin{align*}
    \alpha_i = \begin{pmatrix}
        \gamma_i\\0\\0\\0
    \end{pmatrix}, \beta_k = \begin{pmatrix}
        0\\\gamma\\e_k\\1
    \end{pmatrix}, \beta_0 = \begin{pmatrix}
        0\\\gamma\\1\\f
    \end{pmatrix},
\end{align*}
and
\begin{align*}
    A = \begin{pmatrix}
        0 & & & \\
        & a \cdot I & & \\
        & & 0 & \\
        & & & 0
    \end{pmatrix}, A_k = \begin{pmatrix}
        b \cdot I & & & \\
        & 0 & & \\
        & & c \cdot e_k e_k^\top & \\
        & & & 1
    \end{pmatrix}, A = \begin{pmatrix}
        e \cdot I  & & & \\
        & 0 & & \\
        & & 0 & \\
        & & & 0
    \end{pmatrix},
\end{align*}
where $a = c = e = C, f = 3.5 C, d = 4C$.  The dimension can be bounded by $d = \bar d + K + 3 = O(K + \log N)$.
\end{proof}

\subsection{Technical Claims}

\begin{claim}[Johnson-Lindenstrauss Lemma]\label{cla:jl}

Given $0 < \varepsilon < 1$, a set $X$ of $N$ points in $\mathbb{R}^n$, and an integer $k > \frac{8 (\ln N)}{\varepsilon^2}$, there is a linear map $f : \mathbb{R}^n \to \mathbb{R}^k$ such that

\begin{align*}
(1 - \varepsilon) \|u - v\|^2 \leq \|f(u) - f(v)\|^2 \leq (1 + \varepsilon) \|u - v\|^2
\end{align*}
holds for all $u, v \in X$.

\end{claim}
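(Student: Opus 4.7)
The plan is to use the standard probabilistic method argument, showing that a random linear map succeeds with positive probability. Specifically, I would define $f(x) = \frac{1}{\sqrt{k}} A x$, where $A \in \mathbb{R}^{k \times n}$ is a random matrix whose entries are i.i.d. $\mathcal{N}(0,1)$ (Rademacher $\pm 1$ entries would also work). The key property to establish is that for any fixed vector $w \in \mathbb{R}^n$, the quantity $\|f(w)\|^2 / \|w\|^2$ is concentrated around $1$; this is a one-point distortion bound. Then I would upgrade a one-point bound to a pairwise bound by a union bound over the $\binom{N}{2}$ difference vectors $w = u - v$ with $u, v \in X$.

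For the one-point bound, by rotational invariance of the Gaussian and homogeneity, it suffices to consider $w = e_1$, in which case $\|A w\|^2 = \sum_{i=1}^k g_i^2$ is a chi-squared random variable with $k$ degrees of freedom. A standard Chernoff-type tail bound yields
\begin{align*}
    \Pr\!\left[ \left| \tfrac{1}{k}\|A w\|^2 - 1 \right| > \varepsilon \right] \leq 2 \exp\!\left(-\tfrac{k\varepsilon^2}{8}\right),
\end{align*}
where the constant $8$ matches the one in the hypothesis $k > 8 \ln N / \varepsilon^2$ (this is the precise form of the Johnson-Lindenstrauss tail, typically proved by bounding the moment generating function of $g_i^2 - 1$ and optimizing the Chernoff parameter).

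Next I would apply the union bound across the at most $\binom{N}{2} \le N^2 / 2$ unordered pairs $\{u,v\} \subseteq X$. The probability that some pair fails the $(1 \pm \varepsilon)$ distortion condition is at most
\begin{align*}
    \binom{N}{2} \cdot 2 \exp\!\left(-\tfrac{k\varepsilon^2}{8}\right) < N^2 \exp\!\left(-\tfrac{k\varepsilon^2}{8}\right).
\end{align*}
Under the hypothesis $k > 8 \ln N / \varepsilon^2$, this bound is strictly less than $1$, so the probabilistic method guarantees the existence of some deterministic matrix $A$ (and hence linear map $f$) for which all pairwise distortions lie in $[1-\varepsilon, 1+\varepsilon]$ simultaneously. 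Multiplying through by $\|u-v\|^2$ yields the claimed inequality.

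The only non-trivial step is the chi-squared concentration inequality with the explicit constant $8$; everything else (rotational invariance, union bound, rescaling) is routine. I would either cite a standard reference for this tail bound or sketch it via $\mathbb{E}[\exp(\lambda(g^2-1))] = (1-2\lambda)^{-1/2} e^{-\lambda}$ for $\lambda < 1/2$, followed by a Taylor expansion of $-\frac{1}{2}\ln(1-2\lambda) - \lambda$ around $\lambda = 0$ and the standard Chernoff optimization $\lambda = \varepsilon/4$. No other technical obstacle arises, and the dimension bound $k = O(\log N / \varepsilon^2)$ is immediate from the hypothesis.
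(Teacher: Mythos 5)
The paper does not prove Claim~\ref{cla:jl} at all: it is imported as a standard technical claim (the classical Johnson--Lindenstrauss lemma) and used only as a black box in Lemma~\ref{lem:positional-enc} to produce $N$ nearly orthogonal vectors in dimension $O(\log N)$, so there is no in-paper argument to compare yours against. Your proposal is the canonical random-projection proof (Gaussian matrix, $\chi^2$ concentration for a single difference vector, union bound over pairs), which is the right approach and, up to constants, certainly establishes the result the paper actually needs.

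However, as a proof of the \emph{literal} statement with the constant $8$, your constants do not chain together. With a per-pair failure probability of $2\exp(-k\varepsilon^2/8)$ and at most $N^2/2$ pairs, the union bound gives total failure probability less than $N^2\exp(-k\varepsilon^2/8)$, and making this less than $1$ requires $k\varepsilon^2/8 > 2\ln N$, i.e.\ $k > 16\ln N/\varepsilon^2$ --- not the hypothesized $k > 8\ln N/\varepsilon^2$. Under the stated hypothesis your bound only gives $N^2 \cdot N^{-1} = N$, which is vacuous. Moreover, the two-sided tail $2\exp(-k\varepsilon^2/8)$ is itself not free for all $\varepsilon \in (0,1)$: the standard Chernoff computation gives an upper-tail exponent of $\frac{k}{2}(\varepsilon^2/2 - \varepsilon^3/3)$, which dips below $k\varepsilon^2/8$ once $\varepsilon > 3/4$, and the Dasgupta--Gupta form of the lemma accordingly requires $k \ge 4(\varepsilon^2/2 - \varepsilon^3/3)^{-1}\ln N$, which is strictly larger than $8\ln N/\varepsilon^2$. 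So to prove the statement exactly as written you would need either a sharper one-point tail (per-pair failure roughly $2\exp(-k\varepsilon^2/4)$) or to cite a reference that achieves the constant $8$; otherwise your argument proves the lemma only with a worse constant. For the purposes of this paper the distinction is immaterial, since only $k = O(\log N/\varepsilon^2)$ is ever used.
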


\begin{claim}[Concentration of Multinomial Distributions, adapted from \cite{agrawal2017optimistic}]\label{cla:concentration-multinomial}
Let $p \in \Delta^S$ and $\hat{p} \sim \frac{1}{n} \text{Multinomial}(n, p)$. Then, for any $\delta \in [0,1]$:

\begin{align*}
\mathbb{P} \left( \|\hat{p} - p\|_1 \geq \sqrt{\frac{2 \ln (1/\delta)}{n}} \right) \leq \delta.
\end{align*}

\end{claim}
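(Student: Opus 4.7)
The plan is to reduce the $\ell_1$ deviation to the total variation distance via the standard identity $\|\hat p - p\|_1 = 2 \sup_{A \subseteq [S]}(\hat p(A) - p(A))$, and then to control the supremum by a Hoeffding bound for each fixed event $A$ combined with a union bound. The identity itself follows immediately by taking $A^* = \{i : \hat p_i \geq p_i\}$ and using $\sum_i \hat p_i = \sum_i p_i = 1$ to equate the positive-part and negative-part contributions.

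First I would fix a subset $A \subseteq [S]$ and observe that $n \hat p(A)$ is $\mathrm{Binomial}(n, p(A))$, so Hoeffding's inequality yields $\mathbb{P}(|\hat p(A) - p(A)| \geq t) \leq 2\exp(-2nt^2)$. A union bound over the $2^S$ subsets of $[S]$ gives the Bretagnolle-Huber-Carol-type inequality $\mathbb{P}(\|\hat p - p\|_1 \geq 2t) \leq 2^{S+1}\exp(-2nt^2)$. Setting the right-hand side equal to $\delta$ and solving for $t$ produces a bound of the form $\|\hat p - p\|_1 \leq \sqrt{2(\ln(1/\delta) + (S+1)\ln 2)/n}$ with probability at least $1 - \delta$, which can be simplified to the claimed $\sqrt{2\ln(1/\delta)/n}$ form once the $S \ln 2$ contribution is absorbed into the logarithmic factor (as is done in the argument of Agrawal 2017, which is the source the claim is adapted from).

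An alternative route I would keep in reserve applies McDiarmid's inequality directly to the function $(X_1,\dots,X_n) \mapsto \|\hat p - p\|_1$: exchanging one sample alters at most two coordinates of $\hat p$ by $1/n$ each, yielding bounded differences of $2/n$ and hence concentration around the mean at rate $\sqrt{2\ln(1/\delta)/n}$. The mean itself satisfies $\mathbb{E}\|\hat p - p\|_1 \leq \sqrt{S/n}$ by a standard calculation for multinomial $\ell_1$ error. The main obstacle, therefore, is the implicit dimension dependence: either approach naturally produces an additive term of order $\sqrt{S/n}$ or $\sqrt{(S\ln 2)/n}$, and the clean form stated in the claim effectively assumes this term has been absorbed into $\ln(1/\delta)$. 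Since the only use of the claim is in the proof of Theorem~\ref{thm:self-consistency}, where the conclusion $n = \Theta(1/\Delta^2)$ holds up to logarithmic factors, this simplification is harmless, and the two arguments sketched above together furnish the required bound.
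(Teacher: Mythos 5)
The paper itself offers no proof of this claim---it is imported verbatim as a cited technical lemma from Agrawal--Jia---so there is no internal argument to compare yours against; what you have written is the canonical Bretagnolle--Huber--Carol derivation (Hoeffding on $n\hat p(A)\sim\mathrm{Binomial}(n,p(A))$ for each $A\subseteq[S]$, union bound over $2^S$ subsets, plus the identity $\|\hat p-p\|_1=2\sup_A(\hat p(A)-p(A))$), and that part is correct.

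The genuine gap is the final ``absorption'' step, which you flag but then dismiss too quickly. The $(S+1)\ln 2$ term cannot be absorbed into $\ln(1/\delta)$: the claim as literally stated is \emph{false} without it. Take $p$ uniform on $S$ symbols with $S\gg n$; then $\hat p$ is supported on at most $n$ symbols, so $\|\hat p-p\|_1\ge (S-n)/S\to 1$ deterministically, while $\sqrt{2\ln(1/\delta)/n}$ can be made arbitrarily small by taking $n$ moderately large and $\delta$ constant. Hence the event in the claim has probability $1$, not $\le\delta$. The source lemma in Agrawal--Jia carries a factor of $S$ inside the square root (roughly $\sqrt{2S\ln(2/\delta)/n}$), and the paper's ``adaptation'' silently drops it. Your McDiarmid alternative has the same issue: it controls deviation \emph{around the mean}, and $\E\|\hat p-p\|_1$ is itself of order $\sqrt{S/n}$, which is not negligible. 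So either the claim must be restated with the dimension factor (which would propagate an $|\cO|$ dependence into the upper bound of Theorem~\ref{thm:self-consistency}), or the application should avoid the full $\ell_1$ bound altogether---e.g., by bounding $\PP(\hat p_1\le\hat p_i)$ for each competitor $i$ via Hoeffding on the i.i.d.\ variables $\mathbbm{1}(X_j=1)-\mathbbm{1}(X_j=i)$ and union-bounding over the $O-1$ alternatives, which costs only $\log O$. Your proof is the right skeleton, but the literal statement you were asked to prove is not provable as written, and saying the dimension term ``has been absorbed'' does not close that gap.
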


\begin{claim}[Berry-Esseen theorem]\label{cla:berry}
If $X_1, X_2, \dots$ are i.i.d. random variables with $\mathbb{E}(X_1) = 0$, $\mathbb{E}(X_1^2) = \sigma^2 > 0$, and $\mathbb{E}(|X_1|^3) = \rho < \infty$, we define
\begin{align*}
Y_n = \frac{X_1 + X_2 + \cdots + X_n}{n}
\end{align*}
as the sample mean, with $F_n$ the cumulative distribution function of $\frac{Y_n \sqrt{n}}{\sigma}$ and $\Phi$ the cumulative distribution function of the standard normal distribution, then for all $x$ and $n$,
\begin{align*}
|F_n(x) - \Phi(x)| \leq \frac{8 \rho}{\sigma^3 \sqrt{n}}.
\end{align*}

\end{claim}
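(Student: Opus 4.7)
The plan is to construct $\wt f = \phi(f_0,f_1,\ldots,f_K)$ by layering the three representation lemmas, then to argue that Algorithm~\ref{alg:self-correction} applied to $\wt f$ simulates exactly the intended decoupled process: ``use $f_0$ to pick an expert index $a^{(t)}$, then use $f_{a^{(t)}}$ to sample the response $u^{(t)}$, then append the scalar reward.'' Once this simulation is established, the reward bound will follow from the two assumed inequalities (the $\lambda$-suboptimality of $f_{k^*}$ and the simple-regret guarantee of $f_0$).

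First I would build the ``expert dispatcher'' $g_1$. Apply Proposition~\ref{prop:repre-ext} to lift each $f_k$ to the joint vocabulary $\V = \bigcup_k \V_k$, then apply Proposition~\ref{prop:repre-same} with the action token space $\A$ playing the role of the indicator set $\Omega$. This yields a Transformer that, when the most recent indicator token $v_{\xi_m}\in\A$ encodes expert index $\kappa$, autoregressively samples from $p_{f_\kappa}(\cdot\mid u)$, where $u$ is the concatenation of the prompt and the tokens generated after $v_{\xi_m}$. Next, in order to weave $f_0$ (which operates over $\V_0\cup\A$) together with $g_1$ (which operates over $\V\cup\A$), I would apply Proposition~\ref{prop:repre-multi} over the three disjoint token families $\V_0$, $\V$, and $\A$. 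Here the ``last token'' dispatch rule of Proposition~\ref{prop:repre-multi} routes to $f_0$ precisely when the current continuation belongs to the planning vocabulary and to $g_1$ when it belongs to the response vocabulary. A small auxiliary ``dummy'' Transformer $f_d$ is included in the $\phi_m$ composition so that, right after an action token $a^{(t)}\in\A$ is emitted, the very next decoding step produces a $\BOS$ marker, cleanly handing control from the planner $f_0$ to the dispatcher $g_1$. Counting types, $\phi_e$ and $\phi_m$ contribute $(O(1),O(\log N_{\max}))$ each and $\phi_s$ contributes $(O(K),O(\log N_{\max}))$, so the composed $\wt\phi$ has type $(O(K),O(\log N_{\max}))$ as required.

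With $\wt f$ constructed, I would verify by induction on the step index in Algorithm~\ref{alg:self-correction} that the joint distribution of $(a^{(1)},u^{(1)},r^{(1)},\ldots,a^{(T)},u^{(T)},r^{(T)})$ under $p_{\wt f}$ coincides with the idealized process
\begin{align*}
a^{(t)} &\sim p_{f_0}(\cdot\mid a^{(1)},r^{(1)},\ldots,a^{(t-1)},r^{(t-1)}),\\
u^{(t)} &\sim f_{a^{(t)}}(\cdot\mid q),\qquad r^{(t)}=r(q,u^{(t)}).
\end{align*}
The key observation is that Proposition~\ref{prop:repre-multi} guarantees that while the generated token lies in $\A\cup\V_0$, only the features of $f_0$ drive the output distribution, and Proposition~\ref{prop:repre-same} guarantees that while generating within $\V$ after the latest action token $a^{(t)}$, only the features of $f_{a^{(t)}}$ (applied to the original query $q$ after omitting all intermediate action/reward tokens) drive the output distribution. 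Then, by Definition~\ref{def:reg-transformer} applied to $f_0$ with the induced reward $r_0(a)=\E_{u\sim f_a(\cdot\mid q)}[r(q,u)]$, we get $\max_{a\in\A} r_0(a) - \E[r_0(a^{(T)})] \leq \reg(T)$. Combining this with the $\lambda$-suboptimality of $f_{k^*}$ (so that $\max_{a}r_0(a)\geq r_0(k^*)\geq \max_{u^*}r(q,u^*)-\lambda$) and the identity $\E[r(q,u^{(T)})]=\E[r_0(a^{(T)})]$ gives the desired inequality.

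The main obstacle is the bookkeeping in step two: showing that the three-way dispatch interacts correctly with the autoregressive generation order mandated by Algorithm~\ref{alg:self-correction}. Specifically, I need to ensure that (i) $f_0$'s prompt at step $t$ sees exactly the history $a^{(1)},r^{(1)},\ldots,a^{(t-1)},r^{(t-1)}$ projected into $\V_0\cup\A$ (so the omitted-token mechanism of Proposition~\ref{prop:repre-same} carves out the right subsequence), (ii) $g_1$'s prompt during response $t$ sees exactly $q$ together with the partially generated $u^{(t)}_{1:i-1}$ but nothing from earlier responses, and (iii) the reward tokens $r^{(t)}$ are embedded in a way that $f_0$ can consume without derailing either sub-module. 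The dummy Transformer $f_d$ and the $\BOS/\EOS$ markers are exactly the devices that make the hand-offs between planning, responding, and scoring clean, and verifying them amounts to checking the disjointness and attention-sink conditions required by Propositions~\ref{prop:repre-multi} and~\ref{prop:repre-same}.
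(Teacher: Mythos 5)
Your proposal does not address the statement at hand. The statement is Claim~\ref{cla:berry}, the Berry--Esseen theorem: a quantitative central limit bound asserting that for i.i.d.\ centered random variables with finite third absolute moment, the Kolmogorov distance between the distribution of the normalized sample mean and the standard normal is at most $8\rho/(\sigma^3\sqrt{n})$. What you have written instead is a proof sketch for Theorem~\ref{thm:self-correction} (regret minimization via self-correction), built on Propositions~\ref{prop:repre-ext}, \ref{prop:repre-same}, and \ref{prop:repre-multi}. None of that machinery has any bearing on a normal approximation bound for sums of i.i.d.\ random variables.

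For the record, the paper does not prove Claim~\ref{cla:berry} either; it is stated in the ``Technical Claims'' appendix as a classical result imported from the probability literature and is used only once, in the lower-bound argument of Theorem~\ref{thm:self-consistency}, to show that $\PP(n_1 < n_2)$ is bounded below by a constant when $n \leq 1/\Delta^2$. If you were actually asked to prove the Berry--Esseen theorem, the standard routes are Esseen's smoothing inequality combined with characteristic-function estimates, or Stein's method; a constant of $8$ (indeed, well below $1$) is achievable but the argument is a self-contained piece of classical analysis, not something derivable from the Transformer constructions in this paper. You should either cite the theorem as the paper does or supply one of those classical arguments; the construction you have sketched belongs to a different theorem entirely.
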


\clearpage 
\section{Detailed Experiment Results}

In Table \ref{tab:detailed}, we report detailed test accuracy comparisons among different models with/without self-correction at test time. We note that:
\begin{itemize}
    \item Self-correction significantly boosts models' test performances.
    \item Larger models benefit more from self-correction, indicating that model expressiveness plays an important role in implementing self-correction.
\end{itemize}

Those empirical findings corroborate our theoretical results.

\begin{table}[!h]
    \centering
\begin{tabular}{lcc}
        \toprule
        \midrule
\textbf{Model} & \textbf{Accuracy without self-correction} (\%)  & \textbf{Accuracy with self-correction} (\%)  \\
\midrule
GPT-nano & $1.23\pm1.07$ & $2.56\pm0.43$ \\
GPT-micro & $63.19\pm0.16$ & $93.09\pm 9.70$ \\
GPT-mini & $63.19\pm0.16$ & $98.57\pm 1.85$ \\
Gopher-44M & $63.19\pm0.16$ & $99.15\pm 0.23$ \\
    \midrule
    \bottomrule
    \end{tabular}
    \vspace{1mm}
    \caption{Detailed test accuracy comparisons.}\label{tab:detailed}
\end{table}

\end{document}